\documentclass[journal]{IEEEtran}
\usepackage[T1]{fontenc}
\usepackage{amsmath,amssymb,amsfonts,mathtools}
\usepackage{amsthm}
\usepackage{graphicx}
\usepackage{booktabs}
\usepackage{multirow}
\usepackage{algorithm}
\usepackage[noend]{algorithmic}
\usepackage{url}
\usepackage{xcolor}
\usepackage{enumitem}
\usepackage{microtype}

\graphicspath{{figures/}}

\newtheorem{theorem}{Theorem}
\newtheorem{proposition}{Proposition}
\newtheorem{lemma}{Lemma}
\newtheorem{corollary}{Corollary}
\newtheorem{definition}{Definition}
\newtheorem{assumption}{Assumption}
\newtheorem{remark}{Remark}

\newcommand{\E}{\mathbb{E}}
\newcommand{\Prob}{\mathbb{P}}
\newcommand{\R}{\mathbb{R}}
\newcommand{\auc}{\mathrm{AUC}}
\newcommand{\aucmax}{\mathrm{AUC}_{\max}}

\newcommand{\vol}{\mathrm{vol}}
\newcommand{\tr}{\mathrm{tr}}
\newcommand{\dB}{d_B}
\newcommand{\betaB}{\beta_B}
\newcommand{\kap}{\kappa}
\newcommand{\astar}{\alpha^{*}}
\newcommand{\yes}{\checkmark}

\begin{document}

\title{When and Why Do Linear Bias Probes Fail?\\
A Geometric and Statistical Theory of Bias Detectability\\
in Large Language Model Representations}

\author{Mo Hai, Haifeng Li}


\maketitle

\begin{abstract}
Linear probing is the standard instrument for detecting social biases in the hidden representations of large language models. Yet reported probe accuracies come almost exclusively from \emph{counterfactual} evaluations in which every input carries an explicit demographic marker---a regime that turns out to be trivially easy (AUC $\approx1.0$ throughout our experiments). Once only a fraction $\alpha$ of inputs carries demographic information, performance degrades sharply, and a weak probe may reflect either an unbiased model or an underpowered detector. We develop a theory that resolves this ambiguity. Modeling representations as two class-conditional clusters with Mahalanobis separation $s$ on a manifold of curvature $\kap$, we prove: (i) a finite-sample generalization bound governed by the manifold's extrinsic radius---capped by positive curvature (Bonnet--Myers)---with a matching $\smash{\sqrt{\dB/n}}$ minimax lower bound; (ii) an exact purity law for the maximum linear-probe AUC, strictly increasing in $\alpha$; (iii) a \emph{curvature ceiling}: ambient chordal separation on a space form cannot exceed $2/\sqrt{\kap}$; and (iv) a \emph{detectability threshold} $\astar(n)\approx 2.05\,(z_{1-\delta}+z_{1-\gamma})/(s\sqrt{n})$ below which no audit can distinguish probe output from chance. Every theorem is validated on synthetic manifolds with known ground truth and on six open-weight models $\times$ four bias dimensions, where the purity law predicts entire AUC--$\alpha$ curves from a single cross-fitted $\hat s$ measured at $\alpha=1$, with no parameters fitted to those curves. Three further checks close the loop: the detection test is calibrated on exact nulls that replicate the full protocol; kernel and MLP baselines confirm that the curvature ceiling is specific to linear readout, whereas the purity law---a Bayes bound---caps nonlinear probes as well; and stereotype leakage in ``neutral'' text is measured against occupation statistics rather than assumed absent. The framework turns bias auditing into a power analysis: given a target purity and effect size, it prescribes the sample budget $n(\alpha)$ for a conclusive audit.
\end{abstract}

\begin{IEEEkeywords}
Bias probing, large language models, linear probes, information geometry, generalization bounds, detectability, representation learning, fairness auditing.
\end{IEEEkeywords}

\section{Introduction}
\IEEEPARstart{C}{onsider} an auditor tasked with certifying that a language model does not encode gender stereotypes. Following standard practice, she trains a logistic-regression probe on the model's hidden states. On a benchmark of minimal counterfactual pairs (``\emph{He} is a doctor'' vs.\ ``\emph{She} is a doctor''), the probe separates the two classes perfectly: AUC $=1.0$. On a realistic corpus, in which roughly seven of every ten sentences mention no demographic attribute at all, the same probe on the same model returns an AUC of $0.70$---and on a smaller audit sample of a hundred sentences with sparser markers, $0.55$, statistically indistinguishable from coin flipping. Which number should the audit report? Is the model saturated with gender information, moderately biased, or clean? Is the third probe telling us the bias is absent---or merely that the probe is blind?

This paper develops the theory needed to answer such questions. Its central claim is that linear-probe performance is not an intrinsic property of a model: it is a joint function of the \emph{representation geometry} (class separation $s$, manifold curvature $\kap$, intrinsic dimension $\dB$), the \emph{signal purity} $\alpha$ (the fraction of inputs that carry demographic evidence), and the \emph{sample size} $n$. All three appear in closed form in our results, and all three are measurable. Once they are measured, the probe's behavior across regimes---including its apparent failures---becomes predictable, in several cases with no free parameters.

The need for such a theory is practical. Linear probes underpin bias benchmarks, model cards, and debiasing pipelines: null-space projection and concept-erasure methods explicitly assume that bias occupies a linear subspace \cite{ravfogel2020inlp,ravfogel2022rlace,belrose2023leace}, and intervention techniques steer behavior along probe directions \cite{li2023iti}. Yet the field has repeatedly documented probe pathologies---probes that memorize dataset artifacts \cite{hewitt2019control}, probes whose accuracy does not transfer to downstream harms \cite{goldfarb2021intrinsic,delobelle2022measuring}, and debiasing that hides rather than removes information \cite{gonen2019lipstick}. What has been missing is a quantitative account of \emph{when} a linear probe is a reliable instrument and when it is underpowered, and a way to tell an unbiased model from an underpowered audit.

\subsection{What the theory says}
We model the probing task generatively. A fraction $\alpha$ of texts carry a demographic marker and induce representations from one of two clusters whose centers are separated by Mahalanobis distance $s$; the remaining $1-\alpha$ are neutral and carry no label information. The clusters live on (a neighborhood of) a submanifold whose sectional curvature is bounded by $\kap$. Four results organize the paper.

\emph{(1) Sample complexity (Theorem~\ref{thm:gen}, Proposition~\ref{prop:lower}).} The generalization gap of norm-bounded linear probes decays as $n^{-1/2}$, with a complexity constant governed by the bias submanifold's extrinsic radius---a quantity that a positive lower curvature bound provably caps, via the Bonnet--Myers diameter theorem---and with a matching minimax lower bound $\Omega(\sqrt{\dB/n})$ in the intrinsic dimension. The practical content is a budget rule: halving the audit's uncertainty requires quadrupling $n$. On space forms the cap tightens to the sphere radius, and there curvature helps generalization exactly where it hurts separability (Theorem~\ref{thm:curvature}).

\emph{(2) Purity (Theorem~\ref{thm:mixture}).} The best achievable AUC at purity $\alpha$ is exactly
$\alpha^{2}\Phi(\frac{s}{\sqrt 2})+2\alpha(1-\alpha)\Phi(\frac{s}{2\sqrt 2})+\frac{(1-\alpha)^{2}}{2}$.
This function is strictly increasing in $\alpha$ and saturates at the binormal value $\Phi(s/\sqrt2)$. Two consequences matter. First, the counterfactual regime ($\alpha=1$) with the large separations typical of surface demographic markers ($\hat s\approx 8$--$40$ in our measurements) sits deep in the saturated zone---AUC $\approx 1.0$ conveys almost no information about $s$ beyond a loose lower bound, which is why counterfactual benchmarks overstate real-world separability. Second, monotonicity is a falsifiable prediction: no ``inversion'' or interior optimum of AUC as a function of purity is possible in this model class, and none is observed in our sweeps.

\emph{(3) Curvature (Theorem~\ref{thm:curvature}).} Linear probes operate on ambient coordinates, hence on \emph{chords} of the representation manifold. On a manifold of curvature $\kap>0$, the chord subtended by geodesic distance $r$ has length $\frac{2}{\sqrt\kap}\sin(\frac{\sqrt\kap\,r}{2})\le\min(r,\,2/\sqrt\kap)$: ambient separation saturates at the manifold diameter scale $2/\sqrt{\kap}$ no matter how far apart the classes are geodesically. The resulting AUC ceiling $\Phi\big(\sqrt{2}/(\sqrt{\kap}\,\sigma_{w})\big)$ decreases monotonically from $1$ (flat) to $1/2$ ($\kap\to\infty$), giving the correct, legal-range replacement for curvature-based failure prediction, and quantifying what a geodesic-aware (nonlinear) probe could recover that a linear one cannot---a prediction E10 tests directly with kernel and MLP probes.

\emph{(4) Detectability (Theorem~\ref{thm:detect}).} An audit is a hypothesis test. Because the null distribution of empirical AUC on a held-out sample is known exactly (a $U$-statistic with variance $\frac{n_++n_-+1}{12 n_+ n_-}$; Section~\ref{sec:stats} maps the working CV statistic onto this null and calibrates it empirically), there is a critical purity
$\astar(n) \approx 2.05\,(z_{1-\delta}+z_{1-\gamma})/(s\sqrt n)$
below which no level-$\delta$ test achieves power $1-\gamma$. The $(\alpha,n)$ plane splits into a detectable and an undetectable phase with boundary slope $-\tfrac12$ on log--log axes. In the undetectable phase a null probe result is \emph{uninformative about the model}---the honest report is ``underpowered,'' not ``unbiased.''

\subsection{What the experiments show}
We validate on two levels. On synthetic manifolds where $s$, $\kap$, $\dB$, and $\alpha$ are known exactly, the closed forms hold to within $0.006$--$0.013$ AUC (oracle scoring), estimators of $s$, $\kap$, and $\dB$ are calibrated against ground truth, and the empirical detection boundary tracks $\astar(n)$. On six open-weight models across four bias dimensions, a single cross-fitted $\hat s$ measured on counterfactual pairs predicts the entire AUC--$\alpha$ curve with no further tuning; the $(\alpha,n)$ detection phase diagram matches the predicted boundary; the generalization gap follows the $n^{-1/2}$ law; and layer-wise scans, leakage-controlled splits, and LLM-paraphrased naturalistic prompts probe the robustness of each conclusion. A third, adversarial tier targets the protocol itself: E9 measures the realized false-positive rate of the detection test on exact nulls that replicate the full pipeline, E10 adds kernel and MLP probes that separate the linear-specific curvature ceiling from the Bayes-limited purity law, and E11 measures---rather than assumes away---the stereotype leakage in nominally neutral text. All numbers in every table are generated by the released scripts directly from logged experiment outputs; each table caption names its source file.

\subsection{Contributions}
\begin{itemize}[leftmargin=1.2em]
\item A generative model of bias probing with three measurable control variables ($s$, $\kap$--$\dB$ geometry, $\alpha$, $n$), under which we prove: a curvature-corrected generalization bound with matching minimax lower bound; an exact purity--AUC law; a curvature ceiling for ambient linear separation; and a sharp detectability threshold $\astar(n)$ (Section~\ref{sec:theory}).
\item A validated measurement pipeline: cross-fitted Mahalanobis separation (removing the $O(d/n)$ plug-in inflation), a label-free concentration index $\betaB\in[0,1-1/d]$ with a proven link to separation and curvature, and estimators validated on ground-truth manifolds before being applied to LLMs (Sections~\ref{sec:exp-design}--\ref{sec:results}).
\item A two-tier validation---synthetic with known truth, then six LLMs $\times$ four bias dimensions---in which the theory's zero-free-parameter predictions are tested and their errors reported as measured (Section~\ref{sec:results}).
\item A reframing of bias auditing as power analysis, with explicit sample budgets $n(\alpha)$ and an ``underpowered vs.\ unbiased'' decision rule for null results (Section~\ref{sec:discussion}).
\end{itemize}

\section{Related Work}\label{sec:related}

\subsection{Bias measurement in NLP representations}
Bolukbasi et al.~\cite{bolukbasi2016man} showed that gender associations in static word embeddings admit approximately linear structure, and Caliskan et al.~\cite{caliskan2017semantics} quantified human-like associations with WEAT. Sentence- and context-level successors include SEAT \cite{may2019seat}, StereoSet \cite{nadeem2021stereoset}, CrowS-Pairs \cite{nangia2020crows}, BBQ \cite{parrish2022bbq}, and coreference-based tests \cite{zhao2018winobias,rudinger2018winogender}. A parallel critical literature documents that these instruments disagree with one another and with downstream harms \cite{goldfarb2021intrinsic,delobelle2022measuring,blodgett2020language}, and that linear debiasing can mask rather than remove information \cite{gonen2019lipstick}. Our contribution is orthogonal to benchmark design: we treat any marker-based bias test as a detection problem and characterize its statistical power. The counterfactual/realistic gap we quantify (AUC $1.0$ vs.\ $0.55$--$0.75$) gives one mechanism for the poor correlation between intrinsic metrics and deployment behavior: intrinsic tests are usually run at $\alpha=1$, deployment corpora live at small $\alpha$.

\subsection{Probing methodology and its critiques}
Linear probes were introduced as diagnostic classifiers \cite{alain2016understanding,conneau2018cram} and remain the default tool for representation analysis \cite{tenney2019bert,liu2019linguistic,hewitt2019structural,belinkov2022probing}. Methodological critiques motivated control tasks and selectivity \cite{hewitt2019control}, information-theoretic probes \cite{pimentel2020information}, minimum-description-length probes \cite{voita2020mdl}, and behavioral tests of whether probed information is \emph{used} \cite{elazar2021amnesic}. Concept-removal methods---INLP \cite{ravfogel2020inlp}, R-LACE \cite{ravfogel2022rlace}, LEACE \cite{belrose2023leace}---assume linear geometry, as do inference-time interventions \cite{li2023iti}, representation-engineering pipelines that extract and steer concept directions at scale \cite{zou2023representation}, single-direction behavioural mediation \cite{arditi2024refusal}, and sparse-autoencoder feature dictionaries \cite{cunningham2024sparse}; our power analysis applies to any such linear detector. Surveys of LLM bias instruments \cite{gallegos2024bias} catalogue the tests whose statistical power we characterize. We add the missing power analysis: control tasks ask whether a probe's \emph{positive} finding is trustworthy; our detectability threshold determines when a probe's \emph{null} finding is meaningful. Our grouped, template-disjoint cross-validation follows the leakage concerns of \cite{hewitt2019control}.

\subsection{Geometry of neural representations}
The linear-representation hypothesis holds that high-level concepts occupy linear directions \cite{mikolov2013efficient,arora2016latent,park2023linear}, with supporting evidence for truth \cite{marks2023geometry}, sentiment \cite{tigges2023linear}, space and time \cite{gurnee2024language}, and board-game state \cite{nanda2023emergent}. Complementary work measures the nonlinear side: anisotropy \cite{ethayarajh2019contextual,mimno2017strange}, intrinsic dimension \cite{ansuini2019intrinsic,aghajanyan2020intrinsic,cai2023intrinsic,valeriani2023geometry}, and visualizable manifold structure \cite{reif2019visualizing}. Manifold assumptions in learning theory go back to \cite{belkin2003laplacian,narayanan2010sample,fefferman2016testing}. We connect these threads to bias probing through comparison geometry: curvature bounds control both covering numbers (hence sample complexity, via Günther's volume comparison \cite{gallot2004riemannian,docarmo1992riemannian}) and the chord-versus-geodesic contraction that limits ambient linear separability.

\subsection{Information geometry and statistical foundations}
Information geometry equips statistical models with the Fisher metric \cite{amari2000methods,amari2016information,cencov1982statistical,nielsen2020elementary}; applications to deep learning include natural gradient \cite{amari1998natural}, Fisher spectra \cite{karakida2019universal}, and capacity measures \cite{liang2019fisher}. Our generalization analysis uses Rademacher complexity and chaining \cite{bartlett2002rademacher,dudley1967sizes,wainwright2019high,vershynin2018high}; the lower bound uses Assouad's method \cite{tsybakov2009introduction}. The AUC analysis rests on the binormal ROC model \cite{hanley1982meaning} and Fisher's linear discriminant \cite{fisher1936use} and on exact null properties of the Mann--Whitney statistic \cite{mann1947test,bamber1975area,delong1988comparing}. Covariance estimation in the $d\gtrsim n$ regime uses shrinkage \cite{ledoit2004well,chen2010shrinkage}; intrinsic dimension estimation uses maximum likelihood \cite{levina2004maximum,mackay2005comments} with the two-NN estimator \cite{facco2017estimating} as a cross-check. Relative to this toolbox our novelty is the assembly: an exact, checkable pipeline from measurable geometry to probe power.

\section{Problem Setup and Preliminaries}\label{sec:setup}

\begin{figure}[!t]
\centering
\includegraphics[width=\columnwidth]{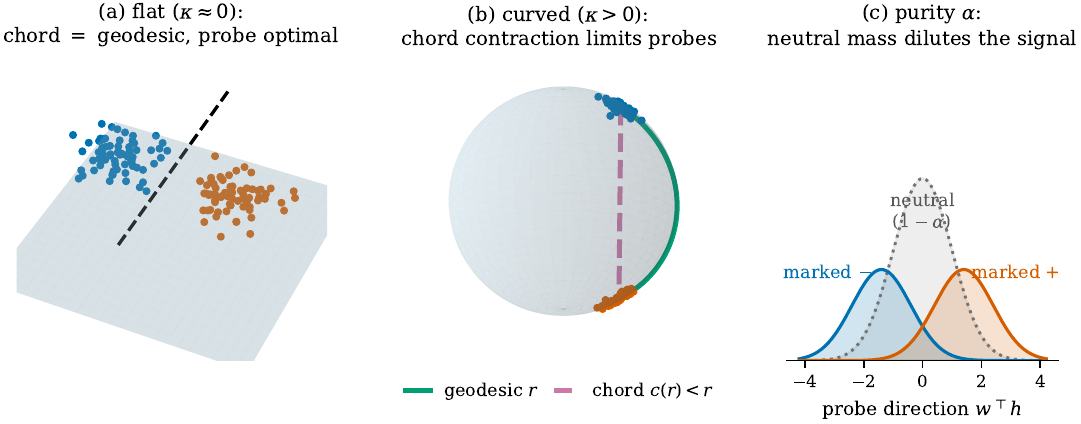}
\caption{Geometric picture. (a)~On a flat bias submanifold, chords equal geodesics and linear probes are optimal (Lemma~\ref{lem:binormal}). (b)~Positive curvature contracts chords relative to geodesics; ambient separation saturates at the diameter scale $2/\sqrt{\kap}$ (Theorem~\ref{thm:curvature}). (c)~Signal purity $\alpha$: only a fraction of samples carry the marker; neutral texts contribute label-independent mass that dilutes the probe signal (Theorem~\ref{thm:mixture}).}
\label{fig:concept}
\end{figure}

\subsection{Probing task}
Let $f_{\ell}:\mathcal{X}\to\R^{d}$ be the layer-$\ell$ representation map of a language model (token-averaged hidden states in our experiments). A \emph{linear probe} is $g_{w,b}(h)=\mathbf{1}[w^{\top}h+b>0]$, trained on $\{(f_{\ell}(x_i),y_i)\}_{i=1}^{n}$ where $y_i\in\{0,1\}$ encodes a demographic attribute of the text-generating process. Performance is the area under the ROC curve, $\auc=\Prob(S_{+}>S_{-})$, for the probe score $S=w^{\top}h$ on held-out data. We write $\Phi$ and $\phi$ for the standard normal CDF and density, and $z_{q}=\Phi^{-1}(q)$.

\subsection{Generative model}\label{sec:genmodel}
\begin{assumption}[Mixed-purity cluster model]\label{ass:model}
A sample with label $y\in\{\pm\}$ is \emph{marked} with probability $\alpha$ and \emph{neutral} with probability $1-\alpha$, independently of $y$. Marked representations are drawn from $\mathcal{N}(\mu_{y},\Sigma)$ with $\Delta=\mu_{+}-\mu_{-}$ and Mahalanobis separation $s^{2}=\Delta^{\top}\Sigma^{-1}\Delta$; neutral representations are drawn from $\mathcal{N}(\mu_{0},\Sigma)$ with $\mu_{0}=\tfrac12(\mu_{+}+\mu_{-})$, independent of $y$.
\end{assumption}
The neutral-midpoint condition is tested empirically in Section~\ref{sec:results-e1} (measured standardized offsets $b_w$ are reported per model; Corollary~\ref{cor:offset} gives the formula for $b_w\ne0$). The Gaussian assumption is a local model of a cluster pair on the representation manifold; Section~\ref{sec:theory-curv} adds the manifold's curvature explicitly, and the synthetic experiments test both layers of the model separately.

\subsection{Bias submanifold and geometry}
We model the marked representations as concentrated near a compact $\dB$-dimensional Riemannian submanifold $B\subset\R^{d}$ (the \emph{bias submanifold}), with sectional curvature bounded, $|K|\le\kap_{\max}$, diameter $D_B$, and Riemannian volume $\vol(B)$. Three measurable summaries appear in our results:
the cross-fitted Mahalanobis separation $\hat s$ (Section~\ref{sec:estimators}), the intrinsic dimension $\hat\dB$ (Levina--Bickel MLE \cite{levina2004maximum} with the correction of \cite{mackay2005comments}), and the concentration index $\betaB$ (Definition~\ref{def:beta}).

\subsection{Rademacher complexity}
For a class $\mathcal F$ of real functions and i.i.d.\ sample $S$, the empirical Rademacher complexity is $\widehat{\mathfrak R}_{S}(\mathcal F)=\E_{\sigma}\sup_{f\in\mathcal F}\frac1n\sum_i\sigma_i f(x_i)$ with independent signs $\sigma_i$. For any $\delta>0$, with probability $\ge1-\delta$, every $f$ in a $[0,1]$-loss class satisfies $R(f)\le\widehat R_n(f)+2\mathfrak R_n(\mathcal F)+\sqrt{\log(1/\delta)/2n}$ \cite{bartlett2002rademacher}.

\subsection{Notation}
Table~\ref{tab:notation} collects the recurring symbols, and for each quantity that must be measured on data it names the estimator and the experiment in which that estimator is calibrated against ground truth before use.

\begin{table}[!t]
\centering
\caption{Notation. Quantities marked $\bullet$ are estimated from data; their estimators are calibrated on ground-truth manifolds in E3-D before being read on LLM features.}
\label{tab:notation}
\scalebox{0.9}{
\begin{tabular}{@{}lll@{}}
\toprule
Symbol & Meaning & Source / estimator \\
\midrule
$\alpha$ & signal purity (marked fraction) & set by protocol (\S\ref{sec:prompts}) \\
$s$ $\bullet$ & Mahalanobis class separation & cross-fitted $\hat s$ (\S\ref{sec:estimators}) \\
$\kap$ $\bullet$ & sectional curvature & sagitta $\hat\kap$ (\S\ref{sec:estimators}) \\
$\dB$ $\bullet$ & intrinsic dimension of $B$ & MLE $\hat\dB$; Two-NN check \\
$\betaB$ & concentration index & spectrum of pooled features \\
$n$, $n_{\mathrm{te}}$ & train / held-out sample sizes & set by protocol \\
$n_{+},n_{-}$ & class counts in the test set & $n_{+}{+}n_{-}=n_{\mathrm{te}}$ \\
$\sigma_{0}$ & null s.d.\ of $\widehat{\auc}$ & exact, Theorem~\ref{thm:detect} \\
$r$, $c(r)$, $g$ & geodesic dist., chord, ratio & model quantities (Thm.~\ref{thm:curvature}) \\
$\sigma_w$ & within-class s.d.\ along chord & known on synthetic; see \S\ref{sec:results-e3} \\
$\rho_B$, $D_B$ & extrinsic radius, geo.\ diameter & bounded via \eqref{eq:radius} \\
$b$ & standardized neutral offset & measured per pair (E1) \\
$\astar(n)$ & detectability threshold & closed form \eqref{eq:alphastar} \\
$\Phi,\phi,z_q$ & normal CDF/pdf/quantile & --- \\
\bottomrule
\end{tabular}
}
\end{table}

\section{Theory}\label{sec:theory}
All proofs are given in full in the supplementary appendices~\ref{app:gen}--\ref{app:beta}. Figure~\ref{fig:theory} plots the three laws.

\begin{figure*}[!t]
\centering
\includegraphics[width=\textwidth]{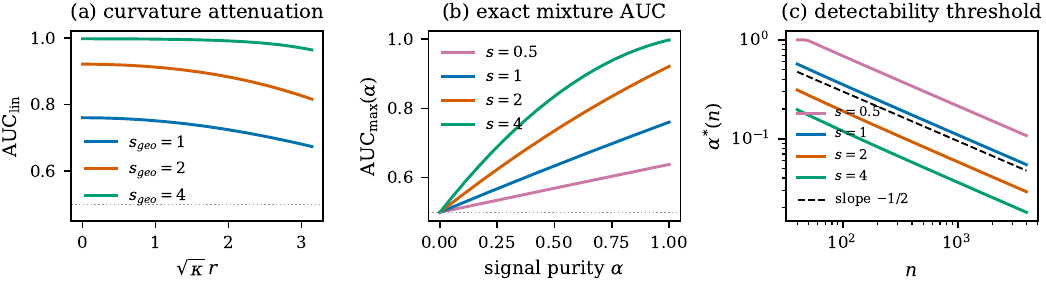}
\caption{The three laws of linear bias detectability. (a)~Curvature attenuation and ceiling (Theorem~\ref{thm:curvature}): ambient AUC as a function of $\sqrt{\kap}\,r$ for fixed geodesic SNR. (b)~Exact purity law (Theorem~\ref{thm:mixture}): $\aucmax(\alpha)$ is strictly increasing---no interior optimum is possible. (c)~Detectability threshold (Theorem~\ref{thm:detect}): $\astar(n)$ on log--log axes with the asymptotic slope $-1/2$.}
\label{fig:theory}
\end{figure*}

\subsection{The binormal baseline}
\begin{lemma}[Optimal linear AUC, flat case]\label{lem:binormal}
Under Assumption~\ref{ass:model} with $\alpha=1$, for any $w\neq0$ the probe score is Gaussian in each class and
\begin{equation}
\auc(w)=\Phi\!\Big(\tfrac{w^{\top}\Delta}{\sqrt{2\,w^{\top}\Sigma w}}\Big),
\qquad
\aucmax=\Phi\!\big(s/\sqrt{2}\big),
\label{eq:binormal}
\end{equation}
attained at $w^{*}\propto\Sigma^{-1}\Delta$. Moreover no measurable classifier exceeds $\Phi(s/\sqrt2)$: for equal-covariance Gaussians the likelihood ratio is monotone in $w^{*\top}h$, so the optimal linear probe is Bayes-optimal in ROC.
\end{lemma}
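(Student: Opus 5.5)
The argument has three parts: compute the AUC of an arbitrary linear score, maximize it over $w$, and then show that no measurable score does better.

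\emph{AUC of a fixed direction.} Fix $w\neq0$ and write $S=w^{\top}h$. Under Assumption~\ref{ass:model} with $\alpha=1$, $h\mid y\sim\mathcal N(\mu_y,\Sigma)$, so $S\mid y\sim\mathcal N(w^{\top}\mu_y,\,w^{\top}\Sigma w)$. Take independent draws $S_+$ and $S_-$. Their difference satisfies $S_+-S_-\sim\mathcal N(w^{\top}\Delta,\,2w^{\top}\Sigma w)$. Hence
\[
\auc(w)=\Prob(S_+-S_->0)=\Phi\big(w^{\top}\Delta/\sqrt{2w^{\top}\Sigma w}\big),
\]
which is the first identity in \eqref{eq:binormal}. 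Ties occur with probability zero, so no tie-breaking convention is needed. The intercept $b$ drops out because it shifts both scores equally.

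\emph{Maximizing over $w$.} Since $\Phi$ is increasing, it suffices to maximize the Rayleigh-type ratio $w^{\top}\Delta/\sqrt{w^{\top}\Sigma w}$. Substitute $u=\Sigma^{1/2}w$, using $\Sigma\succ0$. The ratio becomes $u^{\top}\Sigma^{-1/2}\Delta/\|u\|$. By Cauchy--Schwarz this is at most $\|\Sigma^{-1/2}\Delta\|=s$, with equality iff $u\propto\Sigma^{-1/2}\Delta$ with a positive constant, i.e.\ $w\propto\Sigma^{-1}\Delta$. This gives $\aucmax=\Phi(s/\sqrt2)$ over linear probes.

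\emph{Optimality over all measurable classifiers.} This step carries the actual content; it reduces to the Neyman--Pearson lemma. The log-likelihood ratio is
\[
\log\frac{p_+(h)}{p_-(h)}=\Delta^{\top}\Sigma^{-1}h-\tfrac12(\mu_+^{\top}\Sigma^{-1}\mu_+-\mu_-^{\top}\Sigma^{-1}\mu_-),
\]
which is an increasing affine function of $w^{*\top}h$. By Neyman--Pearson, for every false-positive level $t$ the LR threshold test maximizes the true-positive rate. Any measurable score $T$, possibly randomized, induces a family of threshold tests. Therefore its ROC curve lies pointwise below the ROC curve of the LR statistic, which is the ROC of $w^{*\top}h$, since monotone transforms leave the ROC unchanged.

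To pass from ROC curves to AUC, use $\auc=\int_0^1\mathrm{TPR}(\mathrm{FPR}^{-1}(t))\,dt$. Pointwise ROC dominance then gives AUC dominance. Scores with atoms need a little care: their AUC, defined with ties counted as $1/2$, equals the area under the linearly interpolated ROC. That interpolation corresponds to randomized tests, which Neyman--Pearson also covers. So $\auc(T)\le\Phi(s/\sqrt2)$ for every measurable $T$.
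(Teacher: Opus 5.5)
Your proof is correct and follows the paper's argument step for step: the Gaussian law of $S_+-S_-$, Cauchy--Schwarz after the substitution $\Sigma^{1/2}w$, and Neyman--Pearson applied to the affine log-likelihood ratio. Your additional care adds detail the paper leaves to a citation: the sign condition on $w^{*}$, the passage from pointwise ROC dominance to AUC dominance, and atoms handled via randomized tests.
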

Lemma~\ref{lem:binormal} fixes units: all failure mechanisms below act by shrinking the argument of $\Phi$. It also yields a diagnostic used throughout: \emph{if measured AUC falls short of $\Phi(\hat s/\sqrt2)$ by more than sampling error, some assumption of the flat model (purity, curvature, or Gaussianity) is violated---and the gap is informative.}

\subsection{Sample complexity on a curved submanifold}\label{sec:theory-gen}
\begin{theorem}[Generalization bound with a curvature dividend]\label{thm:gen}
Let the marked features lie on a closed (compact, boundaryless) $\dB$-dimensional Riemannian submanifold $B\subset\R^{d}$ with geodesic diameter $D_{B}$ and sectional curvature $K\ge\kap_{\min}>0$, and let $\rho_{B}$ denote the radius of the smallest enclosing Euclidean ball of $B$. Let $\mathcal H_{\Lambda}=\{h\mapsto w^{\top}(h-c)\,{:}\;\|w\|_{2}\le\Lambda\}$ (centered at the enclosing-ball center $c$) and let $R$, $\widehat R_{n}$ denote true and empirical risk under the unit-margin ramp loss, which upper-bounds the $0$-$1$ risk. Then for any $\delta>0$, with probability $\ge1-\delta$, all $h\in\mathcal H_{\Lambda}$ satisfy
\begin{equation}
R_{0\text{-}1}(h)\;\le\;\widehat R_{n}(h)
\;+\;\frac{2\Lambda\,\rho_{B}}{\sqrt{n}}
\;+\;3\sqrt{\frac{\log(2/\delta)}{2n}},
\label{eq:genbound}
\end{equation}
and the ambient radius obeys the \emph{curvature ceiling}
\begin{equation}
\rho_{B}\;\le\;\tfrac{1}{\sqrt2}\,D_{B}^{\mathrm{ch}},
\qquad D_B^{\mathrm{ch}}\;\le\;D_B\;\le\;\frac{\pi}{\sqrt{\kap_{\min}}},
\label{eq:radius}
\end{equation}
where $D_{B}^{\mathrm{ch}}$ is the chordal diameter, the $1/\sqrt2$ factor is Jung's inequality (dimension-free form), the diameter bound is Bonnet--Myers, and the constant $\pi$ is sharp (Remark~\ref{rem:needle}).
\end{theorem}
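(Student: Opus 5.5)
The proof has two independent parts: a standard Rademacher argument for \eqref{eq:genbound} and a chain of three geometric inequalities for \eqref{eq:radius}.

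\emph{Generalization bound.} Write the ramp loss $\ell_{\mathrm{ramp}}(y\,h(x))$. It is $1$-Lipschitz, takes values in $[0,1]$, and dominates the $0$-$1$ loss, so $R_{0\text{-}1}(h)\le R(h)$. I will apply the Rademacher bound recalled in Section~\ref{sec:setup} to the loss class $\ell_{\mathrm{ramp}}\circ\mathcal H_\Lambda$. Talagrand's contraction lemma reduces its complexity to $\widehat{\mathfrak R}_S(\mathcal H_\Lambda)$; the label signs $y_i$ are absorbed because the $\sigma_i y_i$ are again Rademacher. For the centered linear class, Cauchy--Schwarz followed by Jensen gives
\[
\widehat{\mathfrak R}_S(\mathcal H_\Lambda)=\tfrac{\Lambda}{n}\E_\sigma\Big\|\sum_i\sigma_i(h_i-c)\Big\|\le\tfrac{\Lambda}{n}\Big(\sum_i\|h_i-c\|^2\Big)^{1/2}\le\tfrac{\Lambda\rho_B}{\sqrt n},
\]
because every $h_i\in B$ lies in the enclosing ball of radius $\rho_B$ about $c$. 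To get the stated constant $3\sqrt{\log(2/\delta)/2n}$, I will use the empirical-Rademacher version of the bound. McDiarmid's inequality is applied twice, once to the uniform deviation and once to $\widehat{\mathfrak R}_S$, each at level $\delta/2$, and a union bound combines them; this produces the $2\widehat{\mathfrak R}_S$ term plus the $3\sqrt{\cdot}$ tail. The only care needed is that the loss values stay bounded. The ramp loss is clipped to $[0,1]$, so this holds even though the scores $w^\top(h-c)$ range over $[-\Lambda\rho_B,\Lambda\rho_B]$.

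\emph{Curvature ceiling.} I will prove the three inequalities from right to left. First, $D_B\le\pi/\sqrt{\kap_{\min}}$ is the Bonnet--Myers theorem for a complete manifold. Closed manifolds are complete, and Myers' theorem needs only $\mathrm{Ric}\ge(\dB-1)\kap_{\min}$, which follows from $K\ge\kap_{\min}$. When $\dB=1$, Ricci curvature carries no information, so that case needs a separate remark, or the assumption $\dB\ge2$. Second, $D_B^{\mathrm{ch}}\le D_B$ holds because every geodesic in $B$ is a curve in $\R^d$, and its length is at least the Euclidean distance between its endpoints. Third, $\rho_B\le D_B^{\mathrm{ch}}/\sqrt2$ is the dimension-free Jung inequality $\rho\le D\sqrt{m/(2(m+1))}<D/\sqrt2$ for a compact set in $\R^m$. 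For completeness I would sketch the usual argument: the center $c$ lies in the convex hull of the points at distance $\rho$, and expanding $\sum\lambda_i\lambda_j\|x_i-x_j\|^2$ yields the bound.

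\emph{Sharpness and the main obstacle.} I expect the sharpness of $\pi$, deferred to Remark~\ref{rem:needle}, to be the delicate point. The round sphere of curvature $\kap_{\min}$ attains $D_B=\pi/\sqrt{\kap_{\min}}$, but only with the ambient chord $2/\sqrt{\kap_{\min}}$. So the chain is not tight end to end, and the statement should be read as sharpness of the Bonnet--Myers link only. I would cite the sphere for that link. Separately, I would note that thin ``needle'' surfaces, where curvature is bounded below but large near the tips, show that $D_B$ can approach $\pi/\sqrt{\kap_{\min}}$ with a near-straight embedding, so that $D_B^{\mathrm{ch}}\approx D_B$ is also nearly attained.
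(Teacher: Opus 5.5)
Your proposal is correct and follows the paper's own proof essentially step for step: Cauchy--Schwarz and Jensen give $\widehat{\mathfrak R}_S(\mathcal H_\Lambda)\le\Lambda\rho_B/\sqrt n$; Talagrand contraction through the $1$-Lipschitz, $[0,1]$-valued ramp loss, combined with the empirical-Rademacher bound (two McDiarmid steps), yields the $3\sqrt{\log(2/\delta)/2n}$ term; and \eqref{eq:radius} follows from chord $\le$ geodesic, Bonnet--Myers, and the dimension-free Jung inequality. Your needle reading of the sharpness claim agrees with Remark~\ref{rem:needle} (the constant $\pi$ is sharp for $D_B^{\mathrm{ch}}$, not for the full chain down to $\rho_B$), and your observation that $\dB\ge2$ is needed for $K\ge\kap_{\min}$ to yield the Myers bound is a genuine caveat that the paper does not state.
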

The bound is \emph{dimension-free} in the ambient dimension: the geometry enters only through the extrinsic radius $\rho_B$, which a positive lower curvature bound caps at $\pi/(\sqrt2\sqrt{\kap_{\min}})$. On the round sphere of curvature $\kap$ (the space-form setting of Theorem~\ref{thm:curvature}) the cap sharpens to $\rho_B=1/\sqrt{\kap}$, and there curvature is a double-edged sword: it improves generalization (smaller $\rho_B$ in \eqref{eq:genbound}) exactly while it degrades separability (shorter chords in Theorem~\ref{thm:curvature}). Beyond space forms no such chordal contraction is available (Remark~\ref{rem:needle}), which is why \eqref{eq:radius} carries the Myers constant $\pi$ rather than the spherical $2$. The role of the intrinsic dimension $\dB$ is captured---provably---from below:

\begin{proposition}[Minimax lower bound]\label{prop:lower}
There exist absolute constants $c_{0},c_{1}>0$ such that for every $\dB\ge1$ and $n\ge c_{1}\dB$ and every learning rule $\mathcal A$ mapping samples to classifiers, there is a distribution consistent with Assumption~\ref{ass:model} ($\alpha=1$, separation $s\asymp\sqrt{\dB/n}\le1$, features supported on a $\dB$-dimensional affine slice, $\kap=0$) for which
\begin{equation}
\E\big[R_{0\text{-}1}(\mathcal A)-R^{*}\big]\;\ge\;c_{0}\,\min\Big\{1,\;\sqrt{\dB/n}\Big\}.
\label{eq:lower}
\end{equation}
\end{proposition}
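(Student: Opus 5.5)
The plan is Assouad's method applied to a hypercube of Gaussian location pairs that live on a $\dB$-dimensional affine slice; the structure matches the classical lower bound for Gaussian linear discriminant analysis. Work in $\R^{\dB}\subset\R^{d}$ (extra coordinates zero), with $\kap=0$, $\Sigma=I$, $\alpha=1$ and balanced classes. For each $\tau\in\{-1,+1\}^{\dB}$ set
\[
\mu_{\pm}^{\tau}=\pm\tfrac12\big(\mu_0 e_1+\epsilon\,\textstyle\sum_{j\ge2}\tau_j e_j\big),
\]
where $\mu_0$ is a fixed constant of order one that carries the "easy" direction. Each $\tau$ hides one bit per coordinate. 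The Bayes rule is the sign of $\Delta_\tau^{\top}h$, with $\Delta_\tau=\mu_0e_1+\epsilon\sum_j\tau_je_j$, and $\epsilon\asymp 1/\sqrt n$.

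A note on the separation requirement. Read literally, "$s\asymp\sqrt{\dB/n}$" fails in this construction, since $\|\Delta_\tau\|\approx\mu_0$ is of constant order. Two options:
\begin{itemize}
\item Set $\mu_0=0$. Then $s=\epsilon\sqrt{\dB-1}\asymp\sqrt{\dB/n}$. In this regime the Bayes risk is near $1/2$, and the excess risk of any rule is at most about $s$; the task is to show it is of order $s$.
\item Keep $\mu_0$ and interpret the constraint loosely.
\end{itemize}
I will use $\mu_0=0$, since it matches the statement exactly. The target bound then reads: excess risk $\gtrsim\sqrt{\dB/n}$ whenever $\sqrt{\dB/n}\le 1$, which holds under $n\ge c_1\dB$.

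\textbf{Step 1: reduce excess risk to a Hamming-type loss.} For a classifier $\hat g$, define $\hat\tau_j$ as the sign of its "preferred" coordinate $j$, for instance the sign of $\hat w_j$ when $\hat g$ is linear. For a general $\hat g$, project $\hat g$ onto the best linear rule; the excess risk of a measurable rule is at least that of its projection, by Lemma~\ref{lem:binormal} and the monotone likelihood ratio. For two unit-scale directions, the excess risk of a linear rule with direction $w$ against $\Delta_\tau$ is of order $s\cdot\angle(w,\Delta_\tau)^2$. This holds because $\Phi(s\cos\theta/2)-\Phi(s/2)$ is quadratic in $\theta$, with coefficient $\phi(0)s/4$ when $s\le1$. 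A sign mismatch on a set of $k$ coordinates forces $\sin^2\angle\gtrsim k/\dB$. So
\[
\text{excess}\gtrsim s\cdot\rho_H(\hat\tau,\tau)/\dB .
\]
I expect this to be the main obstacle. Excess risk is not additive over coordinates, so Assouad's separability condition has to come from this angle inequality. The cleaner route is to apply Assouad to the loss $\min_{c>0}\|\hat w/c-\Delta_\tau\|^2$-type surrogate and then transfer.

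\textbf{Step 2: control the KL divergence between neighbours.} Take $\tau,\tau'$ differing in one coordinate. The per-sample KL between the joint laws of $(h,y)$ is $\tfrac12\|\mu^\tau_\pm-\mu^{\tau'}_\pm\|^2=\tfrac12\epsilon^2$. For $n$ samples it is $n\epsilon^2/2$. Choosing $\epsilon=1/\sqrt n$ makes this a constant, so the total variation between neighbours is bounded away from $1$ by Pinsker.

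\textbf{Step 3: apply Assouad.} Assouad's lemma gives
\[
\max_\tau\E\rho_H(\hat\tau,\tau)\ge\tfrac{\dB}{2}\big(1-\max\mathrm{TV}\big)\gtrsim\dB .
\]
Substituting into Step 1 yields excess risk $\gtrsim s\asymp\sqrt{\dB/n}$. The conditions $n\ge c_1\dB$ (so that $s\le1$, keeping the small-angle expansion of Step 1 valid) and the $\min\{1,\cdot\}$ cap absorb the boundary cases. Finally, for $\dB=1$ the construction degenerates, and the bound follows instead from a two-point Le Cam argument between $\pm\epsilon e_1$.
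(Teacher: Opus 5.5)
Your hypercube, the neighbour KL bound $n\epsilon^{2}/2$, the calibration $\epsilon\asymp n^{-1/2}$ (so $s\asymp\sqrt{\dB/n}$), and the appeal to Assouad match the paper's construction. The paper's $\mu_\sigma=s\sigma/\sqrt{\dB}$ is your $\mu_0=0$ case with a bit on \emph{every} coordinate, which also makes your separate $\dB=1$ Le Cam step unnecessary.

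The gap is the one you flag in Step~1, and your proposed bridge does not close it. Assouad needs, for \emph{every} measurable $\hat g$, a domination $R_\tau(\hat g)-R^{*}\ge c\,(s/\dB)\,\rho_H(\hat\tau,\tau)$, with $\hat\tau$ a function of $\hat g$ alone. Lemma~\ref{lem:binormal} and the monotone likelihood ratio only say that the minimizer of $R_\tau$ is a linear rule. They give no inequality between the excess risk of a nonlinear $\hat g$ and that of any linear rule. If ``best linear rule'' means best under $P_\tau$, it is $g^{*}_\tau$ itself and says nothing about $\hat g$. If it means closest to $\hat g$, the excess-risk comparison you need is exactly what is missing. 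The surrogate $\min_c\|\hat w/c-\Delta_\tau\|^{2}$ likewise presupposes that $\hat g$ is already linear.

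The paper's proof does not settle this point either. It derives only a neighbour-pair bound $R_\sigma(\hat g)+R_{\sigma^{(j)}}(\hat g)-2R^{*}\gtrsim s/\dB$ and then multiplies by $\dB/2$ in Assouad. That step tacitly requires the same additive, Hamming-type domination; a pair bound alone gives only an $s/\dB$ Le Cam bound. So you have located the real difficulty correctly.

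One way to close it: let $\gamma=\mathcal N(0,I_{\dB})$, $u_\tau=\tau/\sqrt{\dB}$, and $\hat w=\E_\gamma[\hat g(X)X]$, which depends on $\hat g$ only. Writing both class densities relative to $\gamma$ gives the exact identity and bound
\[
\begin{aligned}
R_\tau(\hat g)-R^{*}&=\tfrac12e^{-s^{2}/8}\,\E_\gamma\big[(\mathrm{sign}(u_\tau^{\top}X)-\hat g(X))\sinh(\tfrac{s}{2}u_\tau^{\top}X)\big]\\
&\ge\tfrac{s}{4}e^{-s^{2}/8}\big(\sqrt{2/\pi}-u_\tau^{\top}\hat w\big).
\end{aligned}
\]
The inequality holds because the first factor is a nonnegative multiple of $\mathrm{sign}(u_\tau^{\top}X)$ and $|\sinh t|\ge|t|$.

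Next, $\|\hat w\|\le\sup_{\|v\|=1}\E_\gamma|v^{\top}X|=\sqrt{2/\pi}$. Set $\hat\tau_j:=\mathrm{sign}(\hat w_j)$ and $k=\rho_H(\hat\tau,\tau)$. Cauchy--Schwarz over the coordinates where $\hat\tau_j=\tau_j$ gives $u_\tau^{\top}\hat w\le\sqrt{2/\pi}\,\sqrt{1-k/\dB}$. Hence, for $s\le1$,
\[
R_\tau(\hat g)-R^{*}\ge\tfrac{e^{-1/8}}{8}\sqrt{2/\pi}\,\tfrac{s}{\dB}\,\rho_H(\hat\tau,\tau).
\]
This is the additive separation Assouad needs, valid for arbitrary measurable $\hat g$. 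Your Steps~2--3 then yield $\max_\tau\E[R_\tau(\mathcal A)-R^{*}]\gtrsim s=\sqrt{\dB/n}$ for all $\dB\ge1$ and $n\ge\dB$.

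Your angle argument is the special case of linear $\hat g$, with two small fixes. State it as $1-\cos\angle(\hat w,u_\tau)\ge k/(2\dB)$ rather than via $\sin^{2}$, since $\sin^2$ misses obtuse angles (e.g.\ $\hat w=-u_\tau$). Also correct the sign in $\Phi(s\cos\theta/2)-\Phi(s/2)$.
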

Together: the $n^{-1/2}$ rate is correct and not improvable, the intrinsic dimension $\dB$ is the correct hardness parameter from below, and the upper bound's geometry dependence is through the extrinsic radius. Experiment E6 checks the $n^{-1/2}$ law directly (log--log slope of the measured gap).

\subsection{Curvature: the chordal ceiling}\label{sec:theory-curv}
Linear probes read \emph{ambient} coordinates. If the two marked clusters sit on a curved manifold at geodesic distance $r$, the probe only sees the chord.

\begin{theorem}[Curvature attenuation]\label{thm:curvature}
Let $M_{\kap}$ be a space form of constant curvature $\kap>0$ (a sphere of radius $1/\sqrt{\kap}$) isometrically embedded in $\R^{d}$, and let the two marked classes be generated as $X=\exp_{p_{y}}(\tau\xi)+\sigma\varepsilon$, where $p_{\pm}\in M_{\kap}$ with geodesic distance $r=d_{g}(p_{+},p_{-})\le\pi/\sqrt{\kap}$, $\xi$ standard Gaussian in $T_{p_y}M_{\kap}$ ($\dB=\dim M_\kap$), and $\varepsilon$ standard Gaussian in $\R^{d}$. Define the chord factor
\begin{equation}
g(\kap,r)=\frac{\sin\!\big(\sqrt{\kap}\,r/2\big)}{\sqrt{\kap}\,r/2}\in(0,1],
\qquad
c(r)=g(\kap,r)\,r .
\end{equation}
Then in the small-spread regime $\eta:=\tau\sqrt{\dB}\,\sqrt{\kap}\le\tfrac13$,
\begin{equation}
\begin{aligned}
\aucmax^{\mathrm{lin}}
&=\Phi\!\left(\frac{c(r)}{\sqrt{2}\,\sigma_{w}}\right)+O(\eta^{2}),\\
\sigma_{w}^{2}
&=\sigma^{2}+\tau^{2}\cos^{2}\!\Big(\tfrac{\sqrt{\kap}\,r}{2}\Big).
\end{aligned}
\label{eq:curvauc}
\end{equation}
\end{theorem}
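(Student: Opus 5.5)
The plan is to reduce the problem to the flat binormal case of Lemma~\ref{lem:binormal} by linearizing the exponential map around each cluster center and computing the first two moments of the ambient class-conditional distributions. Realize $M_\kap$ as the sphere of radius $R=1/\sqrt{\kap}$ centered at the origin, and use rotational symmetry to place $p_\pm$ in a common 2-plane, symmetric about the bisector direction $e_0$. Let $\theta=\sqrt{\kap}\,r/2$ be the half-angle, so $p_\pm=R(\cos\theta\,e_0\pm\sin\theta\,e_1)$. The mean difference at zeroth order is then the chord $p_+-p_-=2R\sin\theta\,e_1$, of length exactly $c(r)=g(\kap,r)\,r$.

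Next, expand $\exp_{p}(\tau\xi)=p+\tau\xi-\tfrac{\kap}{2}\|\tau\xi\|^2 p+O(\tau^3\|\xi\|^3\kap)$. The quadratic term has norm of order $\tau^2\dB\sqrt{\kap}=\eta^2/\sqrt{\kap}$. It shifts each class mean radially by $-\tfrac{\kap}{2}\tau^2\dB\,p_y$, and these shifts are symmetric. Its component along $e_1$ rescales the chord only by a factor $1-O(\eta^2)$. To first order, the covariance of class $y$ is $\sigma^2 I+\tau^2 P_{T_{p_y}}$, where $P_{T_{p_y}}$ is the tangent projector. The two tangent projectors differ, so $\Sigma_+\neq\Sigma_-$. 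By the $e_0$-reflection symmetry, however, the variances of any direction in the $(e_0,e_1)$ plane agree in the two classes up to a sign flip of the cross-term.

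The optimal linear direction is then identified. Restricted to span$\{e_0,e_1\}$, the tangent at $p_\pm$ in that plane is $t_\pm=-\sin\theta\,e_0\pm\cos\theta\,e_1$. Directions orthogonal to the plane carry no mean signal and, by the symmetry, are uncorrelated with $e_1$-scores. The score $w^\top X$ with $w=e_1$ therefore has variance $\sigma^2+\tau^2(e_1^\top t_\pm)^2=\sigma^2+\tau^2\cos^2\theta=\sigma_w^2$ in both classes. Its mean gap is $c(r)$, which yields $\Phi(c(r)/(\sqrt2\sigma_w))$ via the formula in Lemma~\ref{lem:binormal}. For a general $w$, the AUC of two Gaussians with unequal variances is $\Phi(w^\top\Delta/\sqrt{w^\top(\Sigma_++\Sigma_-)w})$. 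Since $\Delta\parallel e_1$, maximizing this over $w$ amounts to minimizing $w^\top\bar\Sigma w$ subject to $w^\top e_1=1$, where $\bar\Sigma=\tfrac12(\Sigma_++\Sigma_-)$. By the reflection symmetry $e_1$ is an eigenvector of $\bar\Sigma$, because the $e_0$--$e_1$ cross-terms cancel upon averaging. Hence $w^*=e_1$, and the first-order value is exactly $\Phi(c(r)/(\sqrt2\sigma_w))$.

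The main obstacle is controlling the non-Gaussian remainder uniformly enough to obtain an $O(\eta^2)$ error in the AUC itself, and not merely in the moments. I would write $X=\tilde X+E$, where $\tilde X$ is the Gaussian linearization and $E$ is the curvature remainder with $\|E\|\lesssim\kap^{1/2}\tau^2\|\xi\|^2$. Then I would bound the change in $\Prob(S_+>S_-)$ by the density bound $\phi\le(2\pi)^{-1/2}$ times $\E|w^\top(E_+-E_-)|/(\sqrt2\sigma_w)$. This gives a perturbation of order $\eta^2/(\sqrt{\kap}\sigma_w)$. To make the error genuinely $O(\eta^2)$, I would need either $\sqrt{\kap}\,\sigma_w\gtrsim 1$ or a sharper argument: $w^\top E$ is an even (quadratic) function of $\xi$, so its mean is absorbed into the symmetric radial shift, and only its fluctuation enters at second order. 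The condition $\eta\le1/3$ is used to keep the exponential-map Taylor series, and the third-order $O(\eta^3)$ terms, dominated by the quadratic term. The same perturbation bound applied to every unit $w$ shows that no other direction can beat $e_1$ by more than $O(\eta^2)$, which completes the claim for $\aucmax^{\mathrm{lin}}$.
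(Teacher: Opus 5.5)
Your proposal is correct and follows the paper's proof step for step: sphere coordinates with $p_\pm$ symmetric about the bisector, the chord $2R\sin\theta=c(r)$ along $e_1$, and the second-order expansion of $\exp_{p_y}$ giving a mean gap $c(r)(1-O(\eta^2))$. Then the tangential variance $\tau^2\langle t_\pm,e_1\rangle^2=\tau^2\cos^2\theta$, and finally reduction to Lemma~\ref{lem:binormal}.

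Where you deviate, you are more careful than the paper. The paper asserts a \emph{common} first-order covariance $\sigma^2 I+\tau^2\Pi_T$ with ``$O(\eta)$'' cross terms; in fact the cross terms are $\mp\tau^2\sin\theta\cos\theta$, and your observation that they cancel in $\tfrac12(\Sigma_++\Sigma_-)$ is the correct reason that $w^*=e_1$. The paper also simply asserts the $O(\eta^2)$ AUC error, which you rightly say needs the mean/fluctuation split.

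One refinement to that split: the centered $\xi_1^2$ term is correlated with the Gaussian part $\tau\cos\theta\,\xi_1$ of the score. It therefore enters at first order, not second, but only at size $O(\kap\tau^2\cdot\tau^2\cos^2\theta/\sigma_w^2)=O(\eta^2/\dB)$, so the bound stands. For the upper bound on $\aucmax^{\mathrm{lin}}$, it is this refined estimate, not the crude density bound, that has to be run uniformly in $w$.
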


\begin{corollary}[Curvature ceiling]\label{cor:ceiling}
Since $c(r)\le\min\{r,\,2/\sqrt{\kap}\}$ for all $r$,
\begin{equation}
\aucmax^{\mathrm{lin}}\;\le\;\Phi\!\left(\frac{\sqrt{2}}{\sqrt{\kap}\,\sigma_{w}}\right)+O(\eta^{2}),
\label{eq:ceiling}
\end{equation}
monotonically decreasing in $\kap$, with limits $1$ as $\kap\to0$ (for $r/\sigma_w\to\infty$) and $\tfrac12$ as $\kap\to\infty$. A geodesic-aware detector, by contrast, retains the flat value $\Phi\big(r/(\sqrt2\sqrt{\sigma^{2}+\tau^{2}})\big)$.
\end{corollary}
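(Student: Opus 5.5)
The plan is to derive the corollary directly from Theorem~\ref{thm:curvature}, using only the elementary inequality $c(r)\le\min\{r,2/\sqrt{\kap}\}$ together with the monotonicity of $\Phi$.

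\textbf{Step 1: the chord bound.} Put $u=\sqrt{\kap}\,r/2\ge 0$. Then
\[
c(r)=\frac{2}{\sqrt{\kap}}\sin u .
\]
Since $\sin u\le u$ for $u\ge0$, we get $c(r)\le r$. Since $\sin u\le 1$, we get $c(r)\le 2/\sqrt{\kap}$. Together these give $c(r)\le\min\{r,2/\sqrt{\kap}\}$ for every $r$. Geometrically, the chord is never longer than the geodesic, and never longer than the diameter $2/\sqrt{\kap}$ of the round sphere of radius $1/\sqrt{\kap}$.

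\textbf{Step 2: the ceiling \eqref{eq:ceiling}.} The expansion \eqref{eq:curvauc} gives
\[
\aucmax^{\mathrm{lin}}=\Phi\bigl(c(r)/(\sqrt2\sigma_w)\bigr)+O(\eta^2),
\]
and $\Phi$ is increasing. Substituting $c(r)\le 2/\sqrt{\kap}$ yields
\[
\aucmax^{\mathrm{lin}}\le\Phi\bigl(2/(\sqrt2\sqrt{\kap}\,\sigma_w)\bigr)+O(\eta^2)=\Phi\bigl(\sqrt2/(\sqrt{\kap}\,\sigma_w)\bigr)+O(\eta^2).
\]
The $O(\eta^2)$ remainder passes through unchanged because the inequality is applied only to the leading term.

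\textbf{Step 3: monotonicity and limits.} Monotonicity and limits are read off from the ceiling function $\kap\mapsto\Phi(\sqrt2/(\sqrt{\kap}\sigma_w))$ with $\sigma_w$ held fixed. One subtlety needs care: $\sigma_w^2=\sigma^2+\tau^2\cos^2(\sqrt{\kap}r/2)$ itself depends on $\kap$. I would handle this by bounding $\sigma_w\ge\sigma$, which makes the ceiling at most $\Phi(\sqrt2/(\sqrt{\kap}\sigma))$; this bound is decreasing in $\kap$ and tends to $\Phi(0)=\tfrac12$ as $\kap\to\infty$.

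For the flat limit, the actual linear AUC satisfies $c(r)\to r$ as $\kap\to0$, so
\[
\aucmax^{\mathrm{lin}}\to\Phi\bigl(r/(\sqrt2\sqrt{\sigma^2+\tau^2})\bigr),
\]
which tends to $1$ when $r/\sigma_w\to\infty$. This is the sense of the stated limit, and it matches Lemma~\ref{lem:binormal}.

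\textbf{Step 4: the geodesic-aware detector.} Such a detector works in geodesic normal coordinates, or equivalently unrolls the manifold. In those coordinates the class centers are separated by $r$, and the isotropic spread has total variance $\sigma^2+\tau^2$ along the separating direction, with no chordal contraction or cosine projection. Applying Lemma~\ref{lem:binormal} with $s=r/\sqrt{\sigma^2+\tau^2}$ gives the flat value.

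\textbf{Main obstacle.} The delicate point is this last claim. Ambient noise $\sigma\varepsilon$, once pulled back to the manifold, is only approximately isotropic Gaussian; this is again an $O(\eta^2)$-type approximation. I would state the claim at the same order of approximation as \eqref{eq:curvauc}, using the small-spread regime to linearize the log map.
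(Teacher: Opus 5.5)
Steps 1 and 2 are exactly the paper's argument. The appendix proof of Theorem~\ref{thm:curvature} closes with ``$\sin\le1$ gives $c(r)\le2/\sqrt\kap$, and $c(r)\le r$ since $\sin x\le x$,'' together with monotonicity of $\Phi$. Step 3 is also sound. Using $\sigma_w\ge\sigma$ to get the $r$-uniform, genuinely monotone envelope $\Phi\big(\sqrt2/(\sqrt\kap\,\sigma)\big)$ is a sensible refinement of what the paper leaves implicit by treating $\sigma_w$ as fixed. One caveat, shared with the paper: $\kap\to\infty$ at fixed $\tau$ leaves the regime $\eta\le\tfrac13$, so that limit concerns the leading term only.

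The genuine gap is the plan in your ``main obstacle'' paragraph. You propose to obtain the flat value $\Phi\big(r/(\sqrt2\sqrt{\sigma^2+\tau^2})\big)$ for the theorem's own generator, with ambient noise $\sigma\varepsilon\in\R^d$, up to $O(\eta^2)$, by linearizing in the small-spread regime. That step fails, for two reasons.

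\begin{itemize}[leftmargin=1.2em]
\item The parameter $\eta=\tau\sqrt{\dB}\sqrt\kap$ controls only the intrinsic spread. Linearizing the projection of ambient noise onto the sphere needs $\sigma\sqrt\kap$ small, which the theorem never assumes.
\item More decisively, take $\tau=0$ (or $\tau\to0$). Then $\eta=0$ and $X\mid y\sim\mathcal N(p_y,\sigma^2 I_d)$ exactly. These are two equal-covariance Gaussians with Mahalanobis separation $c(r)/\sigma$. By Lemma~\ref{lem:binormal}, no measurable detector, geodesic-aware or not, has AUC above $\Phi\big(c(r)/(\sqrt2\,\sigma)\big)$. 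That is strictly below the claimed $\Phi\big(r/(\sqrt2\,\sigma)\big)$ whenever $\kap>0$ and $r>0$.
\end{itemize}

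So under ambient noise the flat value is not attained up to $O(\eta^2)$. Any error term would have to be governed by $\sigma\sqrt\kap$, which the theorem leaves unrestricted.

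The paper does not derive this sentence from the ambient model at all. Its geodesic-aware detector is an idealization that, in Section~\ref{sec:theory-curv}, ``reads geodesic coordinates corrupted only by the intrinsic spreads.'' Both perturbations are taken to act along the geodesic. The flat value is then just Lemma~\ref{lem:binormal} applied to centers at geodesic distance $r$ with variance $\sigma^2+\tau^2$, which is exactly the first part of your Step~4. The paper explicitly warns that with isotropic ambient noise (the generator of Theorem~\ref{thm:curvature}) this value is not recovered once $\sigma\sqrt\kap$ is appreciable. In E10, even an oracle that projects onto the true sphere and scores by exact geodesic distances recovers none of the chordal--geodesic gap.

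To repair the proposal, state the last claim under the intrinsic-noise idealization, where your one-line computation suffices. Drop the attempt to obtain it from the ambient-noise model.
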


Equation~\eqref{eq:ceiling} is a failure-prediction formula with the physically correct limits: $\kap\to0$ permits perfect probing, and extreme curvature reduces linear probes to chance. The gap between \eqref{eq:curvauc} and the flat prediction \emph{upper-bounds} what a nonlinear (geodesic) probe could recover; whether that bound is attainable depends on where the noise lives. The idealized detector of Corollary~\ref{cor:ceiling} reads geodesic coordinates corrupted only by the intrinsic spreads; when instead the noise is isotropic in the \emph{ambient} space---the case of our generator---recovering geodesic position requires projecting through noise of scale $\sigma$ onto a manifold of radius $1/\sqrt{\kap}$, which itself degrades geodesic information once $\sigma\sqrt{\kap}$ is appreciable. E10 makes this concrete: under ambient noise, kernel and MLP probes, and even an oracle that projects to the true sphere and scores by true geodesic distances, recover essentially none of the chordal--geodesic gap (Section~\ref{sec:results-e10}). Experiment E3 validates \eqref{eq:curvauc} on spheres with known $\kap$ to within $0.005$ AUC and exhibits the ceiling: for $\kap>0$ the AUC-versus-$r$ curve saturates while the flat curve keeps rising (Fig.~\ref{fig:synthetic}b).

\subsection{Purity: the exact mixture law}\label{sec:theory-mix}
\begin{theorem}[Purity--AUC law]\label{thm:mixture}
Under Assumption~\ref{ass:model}, for any $w$ the probe score has the two-component mixture law in each class, and
\begin{align}
\aucmax(\alpha)
&=\alpha^{2}\,\Phi\!\Big(\frac{s}{\sqrt{2}}\Big)
+2\alpha(1-\alpha)\notag\\
&\quad\times\Phi\!\Big(\frac{s}{2\sqrt{2}}\Big)
+\frac{(1-\alpha)^{2}}{2},
\label{eq:mixture}
\end{align}
attained again at $w^{*}\propto\Sigma^{-1}\Delta$. Moreover $\aucmax(\alpha)$ is strictly increasing on $[0,1]$, with $\aucmax(0)=\tfrac12$ and $\aucmax(1)=\Phi(s/\sqrt2)$, and the excess $\aucmax(\alpha)-\tfrac12=\alpha^{2}A_{1}+2\alpha(1-\alpha)A_{2}$ (where $A_{1}=\Phi(\frac{s}{\sqrt2})-\frac12$, $A_{2}=\Phi(\frac{s}{2\sqrt2})-\frac12$) has the asymptotes
\begin{equation}
\aucmax(\alpha)-\tfrac12 \;=\;
\begin{cases}
\dfrac{\alpha\,s}{2\sqrt{\pi}}\,(1+o(1)), & s\to0,\\[4pt]
\dfrac{\alpha(2-\alpha)}{2}\,(1+o(1)), & s\to\infty .
\end{cases}
\label{eq:mixasymp}
\end{equation}
\end{theorem}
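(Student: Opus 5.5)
The plan is to reduce everything to the one-dimensional score $S=w^{\top}h$ and condition on the marked/neutral status of the two draws that enter $\auc=\Prob(S_{+}>S_{-})$. Fix $w\neq0$ and write $\delta=w^{\top}\Delta$ and $v=w^{\top}\Sigma w$. Under Assumption~\ref{ass:model}, the score in class $\pm$ is a two-component Gaussian mixture:
\[
S\mid y=\pm \;\sim\; \alpha\,\mathcal N(w^{\top}\mu_{\pm},v)+(1-\alpha)\,\mathcal N(w^{\top}\mu_{0},v),
\]
since marking is independent of $y$. Because $\mu_0$ is the midpoint, $w^{\top}\mu_{0}$ sits exactly halfway between the two marked means. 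The positive and negative test draws are independent, so I will split $\Prob(S_{+}>S_{-})$ over the four marking patterns, with weights $\alpha^{2}$, $\alpha(1-\alpha)$, $(1-\alpha)\alpha$ and $(1-\alpha)^{2}$.

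In each pattern $S_{+}-S_{-}$ is Gaussian with variance $2v$. Its mean is $\delta$ when both draws are marked, $\delta/2$ when exactly one is marked (by the midpoint condition, in either order), and $0$ when both are neutral. With $t:=\delta/\sqrt{2v}$ this gives the per-direction identity
\[
\auc(w)=\alpha^{2}\Phi(t)+2\alpha(1-\alpha)\Phi(t/2)+\tfrac{(1-\alpha)^{2}}{2}.
\]
This is the general form of Lemma~\ref{lem:binormal}. The right-hand side is strictly increasing in $t$ whenever $\alpha>0$. By Cauchy--Schwarz in the $\Sigma$-inner product, $\delta^{2}\le v\,s^{2}$, so $t\le s/\sqrt2$, with equality exactly at $w\propto\Sigma^{-1}\Delta$. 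Hence the same direction $w^{*}$ maximizes the AUC simultaneously for every $\alpha$, and substituting $t=s/\sqrt2$ yields \eqref{eq:mixture}. This observation, that the dependence on $w$ is only through the scalar $t$, is the crux. Without it one would worry that the mixture, which is non-Gaussian, has a different optimal direction; the step-by-step conditioning removes that worry. The endpoint values $\aucmax(0)=\tfrac12$ and $\aucmax(1)=\Phi(s/\sqrt2)$ are then immediate.

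For monotonicity, I will write the excess as $E(\alpha)=\alpha^{2}A_{1}+2\alpha(1-\alpha)A_{2}$ and differentiate:
\[
E'(\alpha)=2A_{2}+2\alpha(A_{1}-2A_{2}).
\]
This is affine in $\alpha$, so it suffices to check the endpoints. We have $E'(0)=2A_{2}>0$ and $E'(1)=2(A_{1}-A_{2})>0$ for $s>0$, since $\Phi$ is strictly increasing. Hence $E'>0$ on all of $[0,1]$, which gives strict increase and rules out any interior optimum. The case $s=0$ is degenerate, with the AUC constant at $\tfrac12$, and I will exclude it explicitly.

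For the asymptotes \eqref{eq:mixasymp}, the small-$s$ case uses $\Phi(x)-\tfrac12=x\phi(0)+O(x^{3})$. This gives $A_{1}\sim s/(2\sqrt\pi)$ and $A_{2}\sim s/(4\sqrt\pi)$, so $E\sim\frac{s}{2\sqrt\pi}\big(\alpha^{2}+\alpha(1-\alpha)\big)=\frac{\alpha s}{2\sqrt\pi}$. The large-$s$ case uses $A_{1},A_{2}\to\tfrac12$, which gives $E\to\alpha^{2}/2+\alpha(1-\alpha)=\alpha(2-\alpha)/2$. None of these steps should be a real obstacle. The only point needing care is recording that AUC ties have probability zero, because the score distributions are continuous for $w\neq0$.
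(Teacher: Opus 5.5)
Your proposal is correct and follows the paper's proof step for step. You condition on the four marked/neutral patterns of the two test draws, and you note that $\auc(w)$ depends on $w$ only through the standardized gap (the paper's $s_w$, your $\sqrt2\,t$), so $w^{*}\propto\Sigma^{-1}\Delta$ is optimal for every $\alpha$; you then differentiate the excess and Taylor-expand $\Phi$ for the asymptotes. The one small variation is the monotonicity step: checking the affine $E'(\alpha)$ at its two endpoints avoids the paper's subadditivity bound $A_{1}\le 2A_{2}$ (which comes from concavity of $\Phi-\tfrac12$ on $[0,\infty)$), a slightly more elementary way to reach the same conclusion.
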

Three readings of \eqref{eq:mixture} organize our experiments. \emph{(i) No inversion.} Because $\aucmax$ is strictly increasing in $\alpha$, the model class rules out any interior minimum or ``phase transition'' in the purity--AUC curve---a falsifiable prediction that E2 tests directly. \emph{(ii) Saturation.} At $\alpha=1$ the law reduces to $\Phi(s/\sqrt2)$, which exceeds $0.997$ once $s\ge4$: counterfactual AUC $\approx1.0$ certifies only $s\gtrsim4$ and cannot rank models. More broadly, for $s\ge6.6$ the entire curve \eqref{eq:mixture} lies within $0.005$ of the $s$-independent saturated form $\tfrac12+\tfrac{\alpha(2-\alpha)}{2}$---the regime occupied by every LLM measurement in this paper ($\hat s\ge8.3$). The informative regime is small $\alpha$. \emph{(iii) Zero-free-parameter prediction.} Estimating $s$ once (cross-fitted, at $\alpha=1$) predicts the whole curve \eqref{eq:mixture}; E2 tests exactly this.

\begin{corollary}[Neutral offset]\label{cor:offset}
If the neutral center is displaced from the midpoint by standardized offset $b$ along the discriminant ($b=\Delta_{0}^{\top}w^{*}/\sqrt{w^{*\top}\Sigma w^{*}}$ with $\Delta_0=\mu_0-\tfrac12(\mu_++\mu_-)$), the middle term of \eqref{eq:mixture} splits into $\alpha(1-\alpha)\big[\Phi\big(\frac{s/2-b}{\sqrt2}\big)+\Phi\big(\frac{s/2+b}{\sqrt2}\big)\big]$, and the law remains strictly increasing in $\alpha$ for \emph{every} offset $b\in\R$: the offset shifts the two middle terms in opposite directions, and their sum exceeds $1$ whenever $s>0$. The offset therefore affects the calibration of the predicted curve, not its monotonicity. We measure $b$ on every model--dimension pair (Table~\ref{tab:e1_main} discussion); it is small relative to $\hat s/2$ in $22$ of $24$ pairs.
\end{corollary}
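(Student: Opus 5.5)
We evaluate the AUC at the discriminant direction $w^{*}\propto\Sigma^{-1}\Delta$, reduce to one dimension as in the proof of Theorem~\ref{thm:mixture}, and then redo the monotonicity argument with the offset. Standardize the score $S=w^{*\top}(h-\tfrac12(\mu_{+}+\mu_{-}))/\sqrt{w^{*\top}\Sigma w^{*}}$. Under Assumption~\ref{ass:model} with the displaced neutral center, each component has unit variance. The marked positives have mean $s/2$, the marked negatives have mean $-s/2$, and the neutrals have mean $b$ by the definition of $b$. The class-conditional score laws are therefore
\[
\alpha\,\mathcal N(s/2,1)+(1-\alpha)\,\mathcal N(b,1)
\quad\text{and}\quad
\alpha\,\mathcal N(-s/2,1)+(1-\alpha)\,\mathcal N(b,1).
\]
Write $\auc=\Prob(S_{+}>S_{-})$ and expand bilinearly over the four component pairs. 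For independent unit-variance Gaussians, $\Prob(\mathcal N(m_1,1)>\mathcal N(m_2,1))=\Phi((m_1-m_2)/\sqrt2)$. The pairs contribute as follows:
\begin{itemize}[leftmargin=1.2em]
\item marked vs.\ marked: $\alpha^{2}\Phi(s/\sqrt2)$;
\item marked$_{+}$ vs.\ neutral: $\alpha(1-\alpha)\Phi\big((s/2-b)/\sqrt2\big)$;
\item neutral vs.\ marked$_{-}$: $\alpha(1-\alpha)\Phi\big((s/2+b)/\sqrt2\big)$;
\item neutral vs.\ neutral: $(1-\alpha)^{2}/2$.
\end{itemize}
This is exactly the claimed split of the middle term. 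At $b=0$ it reduces to the middle term of \eqref{eq:mixture}.

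For monotonicity, set $u=s/(2\sqrt2)>0$, $v=b/\sqrt2$, $A=\Phi(2u)$ and $C=\Phi(u-v)+\Phi(u+v)$. Then $f(\alpha)=\alpha^{2}A+\alpha(1-\alpha)C+(1-\alpha)^{2}/2$, and
\[
f'(\alpha)=(C-1)+\alpha\,(2A-2C+1).
\]
This is affine in $\alpha$, so it suffices to show positivity at $\alpha=0$ and at $\alpha=1$.

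\emph{At $\alpha=0$:} we need $C>1$. By symmetry of $\Phi$, $C=1+\Phi(u+v)-\Phi(v-u)$. This exceeds $1$ because $u+v>v-u$ whenever $s>0$. This is the step the statement describes as the two middle terms moving in opposite directions while their sum stays above $1$.

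\emph{At $\alpha=1$:} we need $2A-C>0$. Consider $v\mapsto C(v)$. Its derivative is $\phi(u+v)-\phi(u-v)$, which is negative for $v>0$ because $|u+v|>|u-v|$. Since $C$ is even in $v$, it is maximized at $v=0$, giving $C\le 2\Phi(u)$. Hence $2A-C\ge 2(\Phi(2u)-\Phi(u))>0$.

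An affine function positive at both endpoints is positive on all of $[0,1]$, so $f$ is strictly increasing for every $b\in\R$. The endpoint values are $f(0)=\tfrac12$ and $f(1)=\Phi(s/\sqrt2)$, exactly as in Theorem~\ref{thm:mixture}.

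The main obstacle is interpretive rather than computational. With $b\neq0$, $w^{*}$ need no longer maximize the mixture AUC over all $w$, because the neutral component is no longer symmetric. So the proof must state that the split formula is the AUC of the discriminant probe, which is the quantity the corollary and E1 use, and not claim it is the supremum. If optimality were required, I would first check whether some other direction could reduce the neutral contamination. I expect this to fail to hold in general, so I would state the corollary for $w^{*}$ explicitly.
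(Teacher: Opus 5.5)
Your proof is correct and follows the paper's argument essentially step for step: the same four-case decomposition of $\Prob(S_+>S_-)$, the observation that $f'(\alpha)$ is affine in $\alpha$ so only the endpoints need checking, $C>1$ by the symmetry of $\Phi$, and $C\le 2\Phi(u)$ from evenness in $v$ plus the sign of $\partial C/\partial v$. Your caveat is also accurate and worth stating: when $b\neq0$ the split formula is the AUC of the fixed discriminant probe $w^*$, not a supremum over $w$. The paper's proof likewise evaluates tacitly at $w^*$.
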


\begin{remark}[The purity law binds nonlinear probes too]\label{rem:bayes}
Under Assumption~\ref{ass:model} the likelihood ratio between the two class-conditional laws is a strictly increasing function of the linear score $w^{*\top}h$ (proof in Appendix~\ref{app:mix}): dividing both mixture densities by the neutral component leaves a ratio of the form $\big[\alpha e^{u/2}e^{-s^{2}/8}+(1-\alpha)\big]\big/\big[\alpha e^{-u/2}e^{-s^{2}/8}+(1-\alpha)\big]$ in the standardized linear score $u$, with a strictly increasing numerator and strictly decreasing denominator. By Neyman--Pearson, thresholding $w^{*\top}h$ therefore traces the Bayes-optimal ROC: \eqref{eq:mixture} caps \emph{every} measurable classifier, not only linear probes. Purity dilution is a Bayes limit, and nonlinearity is predicted to buy nothing against it---in contrast to curvature, where Corollary~\ref{cor:ceiling} quantifies exactly what a geodesic-aware detector can recover and a linear one cannot. E10 tests both predictions with kernel and MLP probes. (The argument uses the midpoint condition $\mu_0=\tfrac12(\mu_++\mu_-)$; for $b\ne0$ the monotone-ratio step can fail and small nonlinear gains over the \emph{linear} version of the law are possible in principle.)
\end{remark}

\subsection{Detectability: when a null probe result means nothing}\label{sec:theory-detect}
\begin{theorem}[Detectability threshold]\label{thm:detect}
Let an audit hold out $n_{\mathrm{te}}$ samples with $n_{+}\approx n_{-}\approx n_{\mathrm{te}}/2$ and test $H_{0}$: score $\perp$ label, using the empirical AUC $\widehat{\auc}$ (equivalently the Mann--Whitney statistic). Under $H_{0}$, for continuous scores, $\E\widehat\auc=\tfrac12$ and
$\operatorname{Var}(\widehat\auc)=\sigma_{0}^{2}=\frac{n_{+}+n_{-}+1}{12\,n_{+}n_{-}}$ exactly \cite{bamber1975area}. Consequently the one-sided level-$\delta$, power-$(1-\gamma)$ detection condition is
\begin{equation}
\aucmax(\alpha)-\tfrac12\;\ge\;(z_{1-\delta}+z_{1-\gamma})\,\sigma_{0}(n_{\mathrm{te}})\,(1+o(1)),
\label{eq:detcond}
\end{equation}
and since the left side is continuous and strictly increasing in $\alpha$ (Theorem~\ref{thm:mixture}), there is a unique critical purity $\astar(n_{\mathrm{te}};s,\delta,\gamma)$ solving \eqref{eq:detcond} with equality whenever the right side is below $A_1$; otherwise no purity suffices. In the weak-excess regime,
\begin{equation}
\begin{aligned}
\astar(n_{\mathrm{te}})
&=\frac{2\sqrt{\pi}}{\sqrt{3}}\,
  \frac{z_{1-\delta}+z_{1-\gamma}}{s\sqrt{n_{\mathrm{te}}}}\,
  \big(1+o(1)\big)\\
&\approx\frac{2.05\,(z_{1-\delta}+z_{1-\gamma})}
  {s\sqrt{n_{\mathrm{te}}}},
\end{aligned}
\label{eq:alphastar}
\end{equation}
so the detection boundary in the $(\alpha,n)$ plane has log--log slope $-\tfrac12$ in $n$ and $-1$ in $s$.
\end{theorem}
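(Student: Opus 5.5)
The proof has three parts: the exact null moments, the power calculation that gives the detection condition, and the inversion of the purity law to get $\astar$.

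\emph{Step 1 (null moments).} Write $\widehat\auc=\frac{1}{n_+n_-}\sum_{i,j}\mathbf 1[S_i^+>S_j^-]$, a two-sample $U$-statistic. Under $H_0$ with continuous scores, the pooled scores are exchangeable and ties have probability zero. Hence $\E\widehat\auc=\tfrac12$. The exact variance $\frac{n_++n_-+1}{12n_+n_-}$ is the classical Mann--Whitney null variance, which we cite from \cite{bamber1975area}. For completeness it also follows from the covariance decomposition of pairwise indicators. Pairs that share a positive (or a negative) have covariance $\tfrac13-\tfrac14=\tfrac1{12}$, disjoint pairs have covariance zero, and each pair has variance $\tfrac14$. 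Summing these gives the stated formula.

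\emph{Step 2 (detection condition).} The Mann--Whitney statistic is asymptotically normal under both $H_0$ and fixed alternatives. The level-$\delta$ test therefore rejects when $\widehat\auc>\tfrac12+z_{1-\delta}\sigma_0$. Under the alternative, $\widehat\auc$ is centered at the population AUC of the audited score. The best case is $\aucmax(\alpha)$, attained by $w^*$ (Theorem~\ref{thm:mixture}); by Remark~\ref{rem:bayes} this is also the best case for any measurable score. Its standard deviation is $\sigma_1=\sigma_0(1+o(1))$ in the weak-excess regime, since the variance of the $U$-statistic is continuous in the excess $\auc-\tfrac12$. Requiring rejection probability at least $1-\gamma$ yields \eqref{eq:detcond}.

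\emph{Step 3 (existence and uniqueness of $\astar$).} The left side of \eqref{eq:detcond} is continuous and strictly increasing in $\alpha$ by Theorem~\ref{thm:mixture}. It equals $0$ at $\alpha=0$ and $A_1$ at $\alpha=1$. If the right side lies in $(0,A_1)$, the intermediate value theorem gives a unique root; otherwise no purity suffices.

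\emph{Step 4 (closed form).} In the weak-excess regime, use the linearization $\aucmax-\tfrac12\approx\alpha s/(2\sqrt\pi)$ from \eqref{eq:mixasymp}; this is the small-$s$ branch, equivalently the leading order $2\alpha A_2\approx 2\alpha\cdot\frac{s}{2\sqrt2}\phi(0)$. With $n_\pm=n_{\mathrm{te}}/2$ we get $\sigma_0^2=\frac{n_{\mathrm{te}}+1}{3n_{\mathrm{te}}^2}$, so $\sigma_0=(3n_{\mathrm{te}})^{-1/2}(1+o(1))$. Solving $\alpha s/(2\sqrt\pi)=(z_{1-\delta}+z_{1-\gamma})/\sqrt{3n_{\mathrm{te}}}$ gives $\astar=\frac{2\sqrt\pi}{\sqrt3}\frac{z_{1-\delta}+z_{1-\gamma}}{s\sqrt{n_{\mathrm{te}}}}$, and $2\sqrt{\pi/3}\approx2.05$. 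Taking logarithms gives slope $-\tfrac12$ in $n$ and $-1$ in $s$.

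\emph{Main obstacle.} The delicate step is Step 4. The linear approximation of $\aucmax-\tfrac12$ in $\alpha$ has slope $2A_2$ in general, and this equals $s/(2\sqrt\pi)$ only to leading order in $s$. I would therefore state the weak-excess regime precisely as $\alpha\to0$ together with $s\,\alpha$ small. I would also control the $\alpha^2(A_1-2A_2)$ term and the $\sigma_1/\sigma_0$ correction inside the $o(1)$. Both are of lower order when $\astar\to0$, that is, when $s\sqrt{n_{\mathrm{te}}}\to\infty$ with $s$ bounded.
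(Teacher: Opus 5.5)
Your proposal follows the paper's proof essentially step for step: the exact Mann--Whitney null moments, a normal power calculation with alternative s.d.\ $\sigma_0(1+o(1))$, unique inversion of the strictly increasing excess $E(\alpha)$, and the linearization $E(\alpha)\approx2\alpha A_2\approx\alpha s/(2\sqrt\pi)$ with $\sigma_0\approx(3n_{\mathrm{te}})^{-1/2}$; your explicit covariance derivation of $\sigma_0^2$ adds to the paper's bare citation. One refinement to your closing remark: the replacement $2A_2\to s/(2\sqrt\pi)$ needs $s\to0$ itself, not merely $s\alpha$ small, because for fixed $s$ the leading-order threshold is $(z_{1-\delta}+z_{1-\gamma})\,\sigma_0/(2A_2)$, whose small-$s$ limit gives the constant $2\sqrt{\pi/3}\approx2.05$ and which reduces to the paper's purity-limited $(z_{1-\delta}+z_{1-\gamma})\,\sigma_0$ when $2A_2\to1$.
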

Below the boundary, \emph{no} audit outcome distinguishes ``no linear bias signal'' from ``an underpowered audit''; \eqref{eq:alphastar} converts this qualitative caveat into a sample-budget formula, $n_{\mathrm{te}}(\alpha)\approx\big(2.05(z_{1-\delta}+z_{1-\gamma})/(s\alpha)\big)^{2}$. E2 (real models) and E3 Part C (synthetic, where $s$ is exact) test the boundary and its $-1/2$ slope.

\subsection{Necessary conditions, and a label-free diagnostic}\label{sec:theory-beta}
\begin{theorem}[Necessary conditions for reliable linear probing]\label{thm:conditions}
Fix a target $\auc\ge1-\epsilon$ with $\epsilon\in(0,\tfrac14)$ and write $z_{\epsilon}=\Phi^{-1}(1-\epsilon)$. Under the models of Sections~\ref{sec:theory-curv}--\ref{sec:theory-mix} augmented with a semantic nuisance factor (variance $\eta_{s}^{2}$ along a direction at canonical correlation $\rho$ to the bias axis; Appendix~\ref{app:cond}), each of the following is necessary:
\begin{itemize}[leftmargin=1.35em]
\item[\textbf{C1}] (\emph{Curvature}) $\displaystyle \kap\;\le\;\frac{2}{z_{\epsilon}^{2}\,\sigma_{w}^{2}}$ \quad (from the ceiling \eqref{eq:ceiling});
\item[\textbf{C2}] (\emph{Samples}) $\displaystyle n\;\ge\;\Big(\frac{2\Lambda\rho_{B}+3\sqrt{\log(2/\delta)/2}}{\epsilon}\Big)^{2}$ for the empirical risk to certify the population risk (from \eqref{eq:genbound}), with the matching necessity $n\gtrsim\dB/\epsilon^{2}$ from \eqref{eq:lower}; and the audit-power budget $n_{\mathrm{te}}\ge n_{\mathrm{te}}(\alpha)$ of Theorem~\ref{thm:detect};
\item[\textbf{C3}] (\emph{Orthogonality}) $\displaystyle \rho^{2}\;\le\;\big(1+\eta_{s}^{-2}\big)\Big(1-\frac{2z_{\epsilon}^{2}}{s^{2}}\Big)_{+}$, and in the $\eta_{s}\to\infty$ limit the clean bound $\auc\le\Phi\big(\tfrac{s\sqrt{1-\rho^{2}}}{\sqrt2}\big)$: for $\auc\ge1-\epsilon$ one needs $\rho^{2}\le 1-2z_{\epsilon}^{2}/s^{2}$.
\end{itemize}
Furthermore the optimal probe under the C3 model achieves exactly $\auc=\Phi\big(s^{*}/\sqrt2\big)$ with $s^{*2}=s^{2}\big(1-\frac{\rho^{2}\eta_{s}^{2}}{1+\eta_{s}^{2}}\big)$, while a probe \emph{aligned} to the bias axis achieves only $\Phi\big(s/\sqrt{2(1+\rho^{2}\eta_{s}^{2})}\big)$---quantifying how much a well-regularized probe evades, and a naively aligned probe suffers from, bias--semantic entanglement.
\end{theorem}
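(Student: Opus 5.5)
The three conditions come from three different earlier results, so I will prove them separately. Each is a contrapositive. If the condition fails, some earlier theorem forces $\auc<1-\epsilon$, or else forces the certified risk above $\epsilon$. Throughout, $\Phi$ is strictly increasing, so $\auc\ge 1-\epsilon$ is equivalent to the argument of $\Phi$ being at least $z_\epsilon$. Because $\epsilon<\tfrac14$, we have $z_\epsilon>0$, so this threshold can be squared freely.

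\textbf{C1.} Combine with Corollary~\ref{cor:ceiling}. Neglecting the $O(\eta^{2})$ term, reliable probing requires $\Phi\big(\sqrt2/(\sqrt\kap\,\sigma_w)\big)\ge 1-\epsilon$. This gives $\sqrt2/(\sqrt\kap\sigma_w)\ge z_\epsilon$, which rearranges to $\kap\le 2/(z_\epsilon^2\sigma_w^2)$. In the statement I will note that the condition holds up to the small-spread correction.

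\textbf{C2.} Take the generalization bound \eqref{eq:genbound} and require its complexity-plus-confidence terms to be at most $\epsilon$. That is, $\big(2\Lambda\rho_B+3\sqrt{\log(2/\delta)/2}\big)/\sqrt n\le\epsilon$, which squares to the stated $n$. This part is "necessary for certification," not an information-theoretic necessity. For the latter, Proposition~\ref{prop:lower} shows that $c_0\sqrt{\dB/n}\le\epsilon$ is needed, so $n\gtrsim\dB/\epsilon^2$. The audit-power clause follows by inverting \eqref{eq:alphastar}.

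\textbf{C3 and the exact formulas.} This is the main step. I will set up the nuisance model concretely: within-class covariance $\Sigma'=\Sigma+\eta_s^2 vv^\top$, where $v$ is a nuisance direction normalized so that $v^\top\Sigma^{-1}v=1$. The correlation with the standardized bias axis is $\rho=\Delta^\top\Sigma^{-1}v/s$.

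The optimal linear AUC is then $\Phi(s^*/\sqrt2)$ with $s^{*2}=\Delta^\top\Sigma'^{-1}\Delta$, by Lemma~\ref{lem:binormal} applied with $\Sigma'$. Sherman--Morrison gives
\[
\Sigma'^{-1}=\Sigma^{-1}-\frac{\eta_s^2\Sigma^{-1}vv^\top\Sigma^{-1}}{1+\eta_s^2},
\]
so
\[
s^{*2}=s^2-\frac{\eta_s^2\rho^2s^2}{1+\eta_s^2}.
\]

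For the aligned probe $w=\Sigma^{-1}\Delta$:
\begin{itemize}[leftmargin=1.2em]
\item the numerator is $w^\top\Delta=s^2$;
\item the variance is $w^\top\Sigma'w=s^2+\eta_s^2\rho^2s^2$.
\end{itemize}
Substituting these into \eqref{eq:binormal} gives $\Phi\big(s/\sqrt{2(1+\rho^2\eta_s^2)}\big)$.

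Necessity then follows from $s^*\ge\sqrt2 z_\epsilon$. Solving for $\rho^2$ gives $\rho^2\le\frac{1+\eta_s^2}{\eta_s^2}(1-2z_\epsilon^2/s^2)$, where the positive part handles the case $s<\sqrt2z_\epsilon$, in which no $\rho$ works. Since $(1+\eta_s^2)/\eta_s^2=1+\eta_s^{-2}$, this is the stated bound. As $\eta_s\to\infty$ it tends to $s^{*2}=s^2(1-\rho^2)$, which yields the clean bound.

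The main obstacle is choosing the normalization of $v$ and the precise meaning of "canonical correlation" so that the constants come out exactly as stated. I will first try whitening by $\Sigma^{-1/2}$, so that $\rho$ becomes a cosine between unit vectors. I will then check that the formulas reproduce the statement, adjusting the scaling convention of $\eta_s$ if needed.
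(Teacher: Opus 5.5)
Your proposal is correct and follows the paper's proof. C1 and C2 are the same rearrangements of Corollary~\ref{cor:ceiling}, \eqref{eq:genbound}, Proposition~\ref{prop:lower} and Theorem~\ref{thm:detect}, and your C3 argument is the paper's Sherman--Morrison computation. The paper takes $\Sigma=I$ with unit $u,v$ and $u^{\top}v=\rho$, which is exactly your model after whitening by $\Sigma^{-1/2}$; so the normalization you flag as the main obstacle is already settled by your choice $v^{\top}\Sigma^{-1}v=1$, $\rho=\Delta^{\top}\Sigma^{-1}v/s$, and the constants come out as stated.
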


\begin{definition}[Concentration index]\label{def:beta}
For pooled (label-free) features with covariance spectrum $\lambda_{1}\ge\dots\ge\lambda_{d}\ge0$,
\begin{equation}
\betaB=\frac{\lambda_{1}-\bar\lambda}{\tr\Sigma}\in\Big[0,\;1-\frac1d\Big],\qquad \bar\lambda=\tfrac1d\tr\Sigma .
\end{equation}
$\betaB=0$ iff the spectrum is flat (isotropic; no preferred linear direction); the upper end $1-\frac1d$ is attained exactly for rank-one spectra (all variance on one axis).
\end{definition}

\begin{lemma}[$\betaB$ tracks separation and curvature]\label{lem:beta}
Under Assumption~\ref{ass:model} with isotropic noise $\Sigma=\sigma^{2}I_{d}$ and purity $\alpha$: the pooled covariance has top eigenvalue $\sigma^{2}+\alpha q/4$ ($q=\|\Delta\|^{2}$) and
\begin{equation}
\betaB=\frac{(\alpha q/4)(1-1/d)}{d\sigma^{2}+\alpha q/4},
\qquad
s^{2}=\frac{q}{\sigma^{2}}=\frac{4d\,\betaB}{\alpha\big(1-\tfrac1d-\betaB\big)} .
\label{eq:betainv}
\end{equation}
$\betaB$ is strictly increasing in $s$ and in $\alpha$; and under the curved model of Theorem~\ref{thm:curvature}, replacing $q$ by the chordal $c(r)^{2}$ makes $\betaB$ strictly \emph{decreasing} in $\kap$ at fixed geodesic separation. Hence: \emph{larger $\betaB$ $\Leftrightarrow$ more ambient linear signal} (stronger separation and/or flatter geometry).
\end{lemma}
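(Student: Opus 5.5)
The plan is to compute the pooled covariance exactly, read off its spectrum, and then treat the monotonicity claims as calculus on the closed form.

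\emph{Pooled covariance.} Take balanced classes, as the formula with top eigenvalue $\sigma^2+\alpha q/4$ implicitly requires. Write the pooled feature as $h=\mu_0+Z+\sigma\varepsilon$, where:
\begin{itemize}[leftmargin=1.2em]
\item $Z=\pm\Delta/2$ when the sample is marked (each sign with probability $\alpha/2$);
\item $Z=0$ when it is neutral, which happens with probability $1-\alpha$.
\end{itemize}
Then $\E Z=0$ and $\E ZZ^\top=\alpha\,\Delta\Delta^\top/4$, and $Z$ is independent of the noise. Hence
\[
\Sigma_{\mathrm{pool}}=\sigma^2 I_d+\tfrac{\alpha}{4}\Delta\Delta^\top .
\]
This is a rank-one perturbation of an isotropic matrix. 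Its spectrum is therefore $\lambda_1=\sigma^2+\alpha q/4$ with eigenvector $\Delta/\|\Delta\|$, and $\lambda_2=\dots=\lambda_d=\sigma^2$. So $\tr\Sigma_{\mathrm{pool}}=d\sigma^2+\alpha q/4$ and $\bar\lambda=\sigma^2+\alpha q/(4d)$.

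\emph{The $\betaB$ formula and its inversion.} Substituting gives $\lambda_1-\bar\lambda=(\alpha q/4)(1-1/d)$, which is the stated expression for $\betaB$. To invert, put $x=\alpha q/(4\sigma^2)$. Dividing numerator and denominator by $\sigma^2$ gives
\[
\betaB=\frac{x(1-1/d)}{d+x}.
\]
Solving, $x=d\betaB/(1-1/d-\betaB)$, and therefore $s^2=q/\sigma^2=4x/\alpha$ is the claimed inverse. The denominator is positive because $\betaB<1-1/d$ whenever $\sigma^2>0$.

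\emph{Monotonicity in $s$ and $\alpha$.} The map $x\mapsto x(1-1/d)/(d+x)$ has derivative $d(1-1/d)/(d+x)^2>0$ for $d\ge2$. Since $x=\alpha s^2/4$ is strictly increasing in both $s$ and $\alpha$ on the relevant range, $\betaB$ is strictly increasing in each.

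\emph{Curvature.} Under the model of Theorem~\ref{thm:curvature}, I will model the marked means as the chord endpoints, so that $q$ is replaced by $c(r)^2=g(\kap,r)^2r^2$ at leading order; the tangential spread $\tau$ is absorbed into the noise, consistent with the $O(\eta^2)$ regime. It then suffices to show $c(r)$ is strictly decreasing in $\kap$ at fixed $r$. Write $c=(2/\sqrt\kap)\sin(\sqrt\kap\,r/2)=r\,\mathrm{sinc}(u)$ with $u=\sqrt\kap\,r/2\in(0,\pi/2]$. On $(0,\pi)$ the function $\sin u/u$ is strictly decreasing, because its derivative $(u\cos u-\sin u)/u^2<0$ (since $\tan u>u$ on $(0,\pi/2)$, and $\cos u\le0$ beyond that). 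Composing with the increasing map $x\mapsto\betaB$ gives the result.

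\emph{Main obstacle.} The main step is justifying the curved-model substitution: the pooled covariance also picks up the tangential spread $\tau^2$ projected onto ambient directions, which is not isotropic. I would handle this by treating it at the same $O(\eta^2)$ order as in Theorem~\ref{thm:curvature}, or by stating that the effective noise is held fixed while $\kap$ varies. The qualitative link then reads: larger $\betaB$ means a larger $\alpha s^2$ or chordal separation, i.e., more ambient linear signal.
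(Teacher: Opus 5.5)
Your proposal is correct and follows essentially the same route as the paper: you write the pooled covariance as the isotropic noise plus the rank-one covariance $\tfrac{\alpha}{4}\Delta\Delta^{\top}$ of the component means, read $\betaB$ and its inversion off the resulting spectrum, and get the curvature claim by substituting $q=c(r)^{2}$ and using that $c(r)$ decreases in $\kap$ at fixed $r$. Your version is somewhat more careful than the paper's: you verify that $\sin u/u$ is strictly decreasing, where the paper only asserts that $g$ is, and you state the balanced-class, $d\ge2$, and tangential-spread caveats that the paper leaves implicit.
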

Three properties make $\betaB$ a usable diagnostic: the direction of the $\betaB$--geometry association is unambiguous (larger $=$ more linearly detectable), the range is $[0,1-1/d]$, and the link to curvature is derived (through chordal contraction) rather than asserted. Because $\betaB$ needs no labels, \eqref{eq:betainv} gives an \emph{unsupervised} pre-audit estimate of $s$---E7 measures how well this works in practice, reporting the correlation as measured, whatever its value.

\section{Experimental Design}\label{sec:exp-design}
The experiments are designed so that each theorem is tested where its assumptions are \emph{known} to hold (synthetic manifolds, Part~E3) before being applied where they are approximations (LLM representations, E1--E2, E4--E7). All code, prompts, seeds, and raw outputs are released; every table names its source JSON file. We highlight design choices that target reproducibility problems documented in the probing literature.

\subsection{Theory-to-experiment map}\label{sec:t2e}
Because each closed form consumes measured inputs, we state the full chain from theorem to number once, explicitly (Table~\ref{tab:t2e}). Two conventions apply throughout. First, a prediction is called \emph{zero-free-parameter} when no parameter is fitted to the data being predicted; its inputs may still be estimates (e.g.\ $\hat s$), and the estimation error of those inputs is part of the reported prediction error, not an excuse for it. Second, every theorem is exact only under its stated assumptions; on LLM features the theorems are used through estimator plug-ins, and the two calibration experiments (E3-D for the estimators, E9 for the test) bound how much that translation can distort the conclusions.

\begin{table}[!t]
\centering
\caption{From theorems to measurements. Each prediction's inputs, where they come from, and which experiment tests the prediction.}
\label{tab:t2e}
\small
\begin{tabular}{@{}p{2.35cm}p{2.5cm}p{2.7cm}@{}}
\toprule
Prediction & Inputs (source) & Tested in \\
\midrule
Purity law \eqref{eq:mixture}: full AUC--$\alpha$ curve & $\hat s$ from the $\alpha{=}1$ set (cross-fitted); $\alpha$ set by protocol & E3-A (oracle $s$), E2 (LLM) \\
Chordal AUC \eqref{eq:curvauc} and ceiling \eqref{eq:ceiling} & $\kap,r,\tau,\sigma$ known (synthetic); $\hat\kap$ qualitative on LLM & E3-B; sensitivity in \S\ref{sec:results-e3} \\
Threshold $\astar(n)$ \eqref{eq:alphastar} & $\hat s$, $n$, $(\delta,\gamma)$; no $\kap$ term & E3-C, E2 phase diagram \\
Gap envelope \eqref{eq:genbound} & slope only ($\Lambda\rho_B$ unmeasured) & E6 \\
Null distribution of $\widehat{\auc}$ & $n_{+},n_{-}$ only & E9 (empirical FPR) \\
Nonlinear recovery (Cor.~\ref{cor:ceiling}; Rem.~\ref{rem:bayes}) & same generators as E3 & E10 \\
Neutrality model (Assn.~\ref{ass:model}) & offsets $b$ (E1); leakage AUC & E11 \\
\bottomrule
\end{tabular}
\end{table}

\subsection{Models and bias dimensions}
Six open-weight models: the GPT-2 family at four scales---Small (124M), Medium (355M), Large (774M), XL (1.5B) \cite{radford2019language}---BERT-base-uncased (110M) \cite{devlin2019bert}, and Qwen2.5-7B-Instruct \cite{qwen2025qwen}. This spans a $60\times$ parameter range, both decoder and encoder architectures, and base versus instruction-tuned training. Four bias dimensions: \emph{gender}, \emph{race}, \emph{religion}, \emph{age}. These counts (six models, four dimensions) are the same everywhere in the paper.

\subsection{Prompt construction with leakage control}\label{sec:prompts}
Prompts are generated from a template grid: $340$ context cells (frame $\times$ content: $44$ professions $\times5$ frames, $20$ traits $\times3$, $20$ activities $\times3$) crossed with $8$ counterfactual marker pairs per dimension (e.g., \emph{man/woman}, \emph{White person/Black person}, \emph{Christian/Muslim colleague}, \emph{young/elderly employee}), yielding $340$ minimal pairs ($680$ marked sentences) per dimension. Crucially, the \emph{neutral pool} ($680$ sentences per dimension) instantiates the \emph{same} context cells with demographically unmarked heads (\emph{person}, \emph{individual}), so marked and neutral texts share an identical context distribution and the only systematic difference is the marker itself. A dataset of size $n$ at purity $\alpha$ contains $\alpha n$ marked sentences (balanced pairs, labels $=$ group) and $(1-\alpha)n$ neutral sentences with random labels---the operational meaning of ``neutral texts carry no label information.''

Two leakage controls follow \cite{hewitt2019control}: (i) cross-validation folds are \emph{grouped by context cell}, so no template context appears in both train and test folds of the same split; (ii) counterfactual pair members always travel together. A robustness subset (E1b) replaces templates with paraphrases generated by a locally served instruction-tuned model (via \texttt{ollama}; the exact model tag is recorded with the released data), lexically validated to preserve the marker; this tests sensitivity to template regularity.

\subsection{Probing protocol}
Features are token-mean-pooled hidden states at the mid-depth layer (layer $\lfloor L/2\rfloor$; E4 scans all layers). Probes are $\ell_{2}$-regularized logistic regressions ($C=1$) on standardized features, evaluated by grouped 5-fold cross-validation $\times$ 5 seeds; we report mean $\pm$ s.d.\ and the across-seed range. \emph{Detection} is a one-sided $z$-test of the pooled out-of-fold AUC against the \emph{calibrated} null scale $1.4\,\sigma_{0}$ (with $\sigma_{0}$ from Theorem~\ref{thm:detect}) at level $.05$; Section~\ref{sec:stats} states precisely how this statistic maps onto the theorem, why calibration is needed, and what the test's measured level is. No result in this paper is reported as positive unless this test passes.

\subsection{The detection test under the CV protocol; null calibration; multiplicity}\label{sec:stats}
Theorem~\ref{thm:detect} is exact for a single held-out sample scored by a fixed rule: conditionally on the scores, exchangeability of the labels under $H_{0}$ makes $\widehat{\auc}$ the Mann--Whitney permutation statistic, whose null mean and variance do not depend on the score values. The working protocol departs from this ideal in two ways, and we treat both explicitly rather than assume them away.

\emph{(i) Cross-validation.} The pooled out-of-fold score of sample $i$ is produced by a probe trained on the other folds, so it depends on the \emph{training labels} of those folds; scores and the label vector are then no longer independent under $H_{0}$, and exactness of $\sigma_{0}$ is not automatic. Label-flip symmetry of the training algorithm still forces $\E\,\widehat{\auc}=\tfrac12$ under $H_{0}$, but the variance must be checked. We therefore calibrate the test empirically: E9 replicates the \emph{entire} pipeline---grid-structured pools, purity mixing, grouped folds, seeds, $C{=}1$ probes---on label-free noise features, where $H_{0}$ holds exactly, and measures the realized false-positive rate and the realized null s.d.\ of the pooled statistic (Section~\ref{sec:results-e9}).

\emph{(ii) Seeds, and the measured null.} The five seeds re-split one and the same sample, so the five pooled OOF AUCs are strongly dependent; treating them as independent replicates ($\sigma_{0}/\sqrt5$, the v14 build) is badly anti-conservative: its measured $H_{0}$ rejection rate is $0.22$--$0.37$ at nominal $.05$ (E9). Testing the seed average against the unreduced $\sigma_{0}$ is much closer but still anti-conservative, because CV itself inflates the null s.d.\ of the pooled statistic above the i.i.d.\ value: E9 measures inflation factors of $1.20$--$1.31$ across pool-respecting cells (and $1.9\times$ in the deliberately broken cell that resamples past the pool at $\alpha=1$, $n=1000$, where duplicated points create ties), giving realized levels of $0.067$--$0.100$. We therefore \emph{calibrate}: all detection claims in this paper test the seed-averaged statistic against $1.4\,\sigma_{0}$, a multiplier chosen to cover the largest measured inflation with margin; its realized level on every pool-respecting E9 cell is at or below the nominal $.05$ (Section~\ref{sec:results-e9}, Table~\ref{tab:e9}). The released JSONs retain the statistics from which the raw $z$ against $\sigma_{0}$ follows deterministically (AUC and $n$), so any alternative calibration can be applied by the reader.

\emph{Families and multiple comparisons.} Confirmatory detection claims form two families: F1, the E1 regime grid (all model--dimension--regime cells), and F2, the E2 purity curves ($24$ combos $\times$ $9$ purities). We report, for each family, how many nominal $.05$ detections survive Bonferroni ($.05/m$) and Benjamini--Hochberg (FDR $.05$) correction (Section~\ref{sec:results-e9}); the paper's conclusions use only detections that survive. Phase-diagram cells are not individual claims but estimates of a detection \emph{rate} compared against the $\astar(n)$ frontier, and E9's per-cell level check applies to them through the single-seed test they use. Released p-values are computed with the numerically stable survival function (the earlier $1-\Phi(z)$ underflowed to $0$ for $z\gtrsim8$; the release notes document the fix).

\subsection{Geometry estimators, validated before use}\label{sec:estimators}
\emph{Separation} $\hat s$: cross-fitted Mahalanobis---split the marked set, estimate $\hat\Delta$ on each half and shrinkage covariance (Ledoit--Wolf \cite{ledoit2004well}) on one, form $\hat\Delta_{A}^{\top}\hat\Sigma_{A}^{-1}\hat\Delta_{B}$, average over $4$ random splits. Cross-fitting removes the additive $O(d/n)$ plug-in inflation: at $d=768$, $n=300$, true $s=0$, the naive estimate is $3.24$ while the cross-fitted one is $0.50$ (Table~\ref{tab:e3_synth}); all $\hat s$ values in this paper are cross-fitted. \emph{Intrinsic dimension} $\hat\dB$: Levina--Bickel MLE with the MacKay--Ghahramani correction, averaged over $k\in[10,20]$ \cite{levina2004maximum,mackay2005comments}. \emph{Curvature} $\hat\kap$: a local quadratic (sagitta) fit—tangent PCA residuals against squared tangential radius. The Two-NN estimator \cite{facco2017estimating} serves as an independent cross-check of $\hat\dB$. Each estimator is calibrated on ground-truth manifolds in E3 Part~D \emph{before} being applied to LLM features, and its bias is reported there rather than assumed away.

\subsection{Experiment roster}
\emph{E1} (both regimes, all models $\times$ dimensions): counterfactual $\alpha=1$ (all $680$ marked sentences per dimension) vs.\ mixed $\alpha=0.3$ at $n=300$. \emph{E2} ($\alpha$-sweep and phase diagram): AUC($\alpha$) curves at $n=300$ predicted from a single $\hat s$; detection-rate grid over $\alpha\in[0.01,1]$, $n\in\{100,300,1000\}$, $8$ seeds per cell. \emph{E3} (synthetic ground truth): Parts A--D as above. \emph{E4} (layer-wise): all layers of GPT-2 Small and BERT-base. \emph{E5} (orthogonality stress): synthetic exact check of C3 plus a semi-synthetic confound-resampling design on real features. \emph{E6} (scaling): generalization gap vs.\ $n$ at $\alpha=0.3$. \emph{E7} ($\betaB$ diagnostics): unsupervised $\betaB\to\hat s\to$ predicted AUC vs.\ empirical, across models, dimensions, and purities. \emph{E8} (leakage meter): grouped vs.\ random CV on identical data. \emph{E9} (null calibration): empirical FPR and null s.d.\ of the full detection pipeline on label-free noise features, plus the family-wise audit of Section~\ref{sec:stats}. \emph{E10} (nonlinear probes): RBF-kernel SVM and one-hidden-layer MLP under the identical protocol, on the E3 generators (purity and curvature) and on LLM features at $\alpha\in\{0.1,0.3,1\}$. \emph{E11} (neutral leakage): strong-classifier tests of neutral label-independence, stereotype-label leakage measured against the WinoBias occupation lists \cite{zhao2018winobias}, and a leaky-purity stress test of the mixture law. Compute: feature extraction is a single forward pass per model over $5{,}440$ prompts (RTX~4090); all probing is CPU. Total $\approx10$ GPU-minutes plus several CPU-hours across the reported grids.

\section{Results}\label{sec:results}

\begin{table*}[!t]\centering
\caption{E1: linear-probe AUC under the counterfactual ($\alpha{=}1$, $n{=}680$) and realistic mixed ($\alpha{=}0.3$, $n{=}300$) regimes. Mean $\pm$ s.d.\ over 5 seeds $\times$ grouped 5-fold CV (template-cell groups); AUC rounded to three decimals. ``det.''\ = pooled out-of-fold AUC significantly above chance under the calibrated test of Section~\ref{sec:stats} (one-sided $z$ against $1.4\,\sigma_{0}$, level $.05$; E9). ``Purity-law pred.''\ = Theorem~\ref{thm:mixture} evaluated at $(\hat s, \alpha{=}0.3)$. $\hat s$: cross-fitted Mahalanobis separation of the marked set; $\hat d_B$: Levina--Bickel intrinsic dimension; $b_w$: standardized neutral offset. Source: \texttt{outputs/exp1\_regimes.json}, seeds 0--4.}
\label{tab:e1_main}
\small\begin{tabular}{ll cc c cc c cc}
\toprule
& & \multicolumn{3}{c}{Counterfactual $\alpha{=}1$} & \multicolumn{3}{c}{Mixed $\alpha{=}0.3$} & & \\
\cmidrule(lr){3-5}\cmidrule(lr){6-8}
Model & Bias & AUC & det. & $\hat s$ & AUC & det. & Purity-law pred. & $\hat d_B$ & $b_w$ \\
\midrule
GPT-2 Small & Gender & 1.000$\pm$0.000 & \yes & 10.39 & 0.699$\pm$0.014 & \yes & 0.755 & 6.2 & 0.84 \\
 & Race & 1.000$\pm$0.000 & \yes & 15.42 & 0.708$\pm$0.007 & \yes & 0.755 & 6.3 & 4.45 \\
 & Religion & 1.000$\pm$0.000 & \yes & 16.90 & 0.713$\pm$0.006 & \yes & 0.755 & 6.2 & 0.31 \\
 & Age & 1.000$\pm$0.000 & \yes & 18.81 & 0.711$\pm$0.009 & \yes & 0.755 & 6.9 & 0.26 \\
\addlinespace[1pt]
GPT-2 Medium & Gender & 1.000$\pm$0.000 & \yes & 14.15 & 0.702$\pm$0.009 & \yes & 0.755 & 6.1 & 0.12 \\
 & Race & 1.000$\pm$0.000 & \yes & 21.28 & 0.728$\pm$0.007 & \yes & 0.755 & 5.8 & 6.02 \\
 & Religion & 1.000$\pm$0.000 & \yes & 23.73 & 0.722$\pm$0.006 & \yes & 0.755 & 5.8 & 12.78 \\
 & Age & 1.000$\pm$0.000 & \yes & 27.84 & 0.726$\pm$0.005 & \yes & 0.755 & 6.6 & 3.09 \\
\addlinespace[1pt]
GPT-2 Large & Gender & 1.000$\pm$0.000 & \yes & 16.43 & 0.691$\pm$0.013 & \yes & 0.755 & 6.1 & 2.70 \\
 & Race & 1.000$\pm$0.000 & \yes & 24.28 & 0.715$\pm$0.015 & \yes & 0.755 & 6.1 & 9.29 \\
 & Religion & 1.000$\pm$0.000 & \yes & 30.21 & 0.716$\pm$0.014 & \yes & 0.755 & 6.0 & 16.86 \\
 & Age & 1.000$\pm$0.000 & \yes & 33.32 & 0.708$\pm$0.015 & \yes & 0.755 & 6.7 & 6.26 \\
\addlinespace[1pt]
GPT-2 XL & Gender & 1.000$\pm$0.000 & \yes & 18.68 & 0.715$\pm$0.012 & \yes & 0.755 & 6.0 & 2.09 \\
 & Race & 1.000$\pm$0.000 & \yes & 29.21 & 0.729$\pm$0.013 & \yes & 0.755 & 6.1 & 2.03 \\
 & Religion & 1.000$\pm$0.000 & \yes & 33.88 & 0.728$\pm$0.013 & \yes & 0.755 & 5.7 & 10.53 \\
 & Age & 1.000$\pm$0.000 & \yes & 39.77 & 0.728$\pm$0.014 & \yes & 0.755 & 6.7 & 11.94 \\
\addlinespace[1pt]
BERT-base & Gender & 1.000$\pm$0.000 & \yes & 13.27 & 0.671$\pm$0.014 & \yes & 0.755 & 6.6 & 0.26 \\
 & Race & 1.000$\pm$0.000 & \yes & 17.47 & 0.698$\pm$0.012 & \yes & 0.755 & 5.6 & 2.38 \\
 & Religion & 1.000$\pm$0.000 & \yes & 23.50 & 0.720$\pm$0.011 & \yes & 0.755 & 6.6 & 3.74 \\
 & Age & 1.000$\pm$0.000 & \yes & 25.08 & 0.706$\pm$0.013 & \yes & 0.755 & 6.8 & 1.28 \\
\addlinespace[1pt]
Qwen2.5-7B-It & Gender & 1.000$\pm$0.000 & \yes & 8.27 & 0.686$\pm$0.004 & \yes & 0.754 & 4.3 & 1.24 \\
 & Race & 1.000$\pm$0.000 & \yes & 16.88 & 0.722$\pm$0.005 & \yes & 0.755 & 5.5 & 1.97 \\
 & Religion & 1.000$\pm$0.000 & \yes & 15.01 & 0.718$\pm$0.005 & \yes & 0.755 & 5.3 & 1.89 \\
 & Age & 1.000$\pm$0.000 & \yes & 18.50 & 0.721$\pm$0.005 & \yes & 0.755 & 5.4 & 1.53 \\
\addlinespace[1pt]
\bottomrule
\end{tabular}
\end{table*}

\begin{table}[!t]\centering
\caption{E1b: LLM-paraphrased naturalistic prompts (locally served model via \texttt{ollama}; lexically validated marker preservation; counterfactual regime). De-templating leaves the ceiling intact but shrinks the cross-fitted separation $\hat s$ relative to templates. Source: \texttt{outputs/exp1\_regimes.json}.}
\label{tab:e1_natural}
\resizebox{\columnwidth}{!}{%
\small\begin{tabular}{ll cc cc}
\toprule
Model & Bias & AUC & det. & $\hat s$ (nat.) & $\hat s$ (templ.) \\ \midrule
GPT-2 Large & Gender & 0.996 & \yes & 9.3 & 16.4 \\
 & Race & 0.993 & \yes & 13.9 & 24.3 \\
 & Religion & 1.000 & \yes & 14.2 & 30.2 \\
 & Age & 1.000 & \yes & 13.4 & 33.3 \\
GPT-2 Small & Gender & 0.996 & \yes & 5.3 & 10.4 \\
 & Race & 0.996 & \yes & 8.5 & 15.4 \\
 & Religion & 1.000 & \yes & 8.9 & 16.9 \\
 & Age & 1.000 & \yes & 7.6 & 18.8 \\
Qwen2.5-7B-It & Gender & 0.984 & \yes & 2.8 & 8.3 \\
 & Race & 0.990 & \yes & 5.7 & 16.9 \\
 & Religion & 1.000 & \yes & 5.7 & 15.0 \\
 & Age & 0.999 & \yes & 4.5 & 18.5 \\
\bottomrule\end{tabular}}\end{table}

\begin{table}[!t]\centering
\caption{E2: zero-free-parameter accuracy of the purity law (Theorem~\ref{thm:mixture}). For each model--bias pair, $\hat s$ is estimated once from the $\alpha{=}1$ marked set (cross-fitted); the entire AUC($\alpha$) curve ($n{=}300$, 9 purities, 5 seeds) is then predicted with no further tuning. $s_{\mathrm{fit}}$ (reference) is a one-parameter least-squares fit searched over $[0.01,20]$; for $\hat s\gtrsim6$ the curve is insensitive to $s$ (saturated regime), so fitted values at the search bound simply reflect this flatness. Source: \texttt{outputs/exp2\_alpha\_sweep.json}.}
\label{tab:e2_mae}
\resizebox{\columnwidth}{!}{%
\small\begin{tabular}{ll cc cc}
\toprule
Model & Bias & $\hat s$ & MAE ($\hat s$) & $s_{\mathrm{fit}}$ & MAE (fit) \\
\midrule
GPT-2 Small & Gender & 10.39 & 0.018 & 4.41 & 0.010 \\
GPT-2 Small & Race & 15.42 & 0.013 & 8.34 & 0.013 \\
GPT-2 Small & Religion & 16.90 & 0.014 & 5.48 & 0.011 \\
GPT-2 Small & Age & 18.81 & 0.012 & 6.29 & 0.011 \\
GPT-2 Medium & Gender & 14.15 & 0.025 & 4.09 & 0.014 \\
GPT-2 Medium & Race & 21.28 & 0.016 & 20.00 & 0.016 \\
GPT-2 Medium & Religion & 23.73 & 0.016 & 8.21 & 0.016 \\
GPT-2 Medium & Age & 27.84 & 0.022 & 20.00 & 0.022 \\
GPT-2 Large & Gender & 16.43 & 0.021 & 4.02 & 0.008 \\
GPT-2 Large & Race & 24.28 & 0.014 & 4.68 & 0.007 \\
GPT-2 Large & Religion & 30.21 & 0.013 & 4.82 & 0.007 \\
GPT-2 Large & Age & 33.32 & 0.011 & 5.39 & 0.009 \\
GPT-2 XL & Gender & 18.68 & 0.019 & 4.06 & 0.008 \\
GPT-2 XL & Race & 29.21 & 0.013 & 4.61 & 0.006 \\
GPT-2 XL & Religion & 33.88 & 0.011 & 4.99 & 0.005 \\
GPT-2 XL & Age & 39.77 & 0.009 & 5.93 & 0.006 \\
BERT-base & Gender & 13.27 & 0.048 & 2.96 & 0.018 \\
BERT-base & Race & 17.47 & 0.038 & 3.33 & 0.018 \\
BERT-base & Religion & 23.50 & 0.036 & 3.50 & 0.018 \\
BERT-base & Age & 25.08 & 0.025 & 4.02 & 0.014 \\
Qwen2.5-7B-It & Gender & 8.27 & 0.028 & 3.56 & 0.009 \\
Qwen2.5-7B-It & Race & 16.88 & 0.011 & 5.00 & 0.005 \\
Qwen2.5-7B-It & Religion & 15.01 & 0.012 & 4.76 & 0.006 \\
Qwen2.5-7B-It & Age & 18.50 & 0.011 & 4.94 & 0.007 \\
\midrule
\multicolumn{3}{l}{Median / max over pairs} & 0.015 / 0.048 & & \\
\bottomrule
\end{tabular}}
\end{table}

\begin{table}[!t]\centering
\caption{E3: ground-truth validation on synthetic manifolds (all quantities known exactly). Prediction errors use oracle scoring; learned probes approach the bounds from below. Estimator calibration: median relative errors against ground truth. Source: \texttt{outputs/exp3\_synthetic.json}, seed 42.}
\label{tab:e3_synth}
\resizebox{\columnwidth}{!}{%
\small\begin{tabular}{l c}
\toprule Quantity & Value \\ \midrule
Purity law \eqref{eq:mixture}, max $|$err$|$ (35 settings) & 0.006 \\
Purity law, learned-probe max overshoot & 0.002 \\
Chordal formula \eqref{eq:curvauc}, max $|$err$|$ & 0.013 \\
Chordal formula \eqref{eq:curvauc}, mean $|$err$|$ & 0.004 \\
$\hat\kappa$ median relative error ($\kappa\in[0.25,4]$) & 0.17 \\
$\hat d$ (MLE) median relative error ($d\in[5,25]$, spheres) & 0.15 \\
$\hat d$ (Two-NN cross-check) median relative error (same) & 0.17 \\
$\hat s$ cross-fitted, median rel.\ err.\ ($d/n$ large) & 0.08 \\
$\hat s$ naive plug-in, median rel.\ err.\ (same) & 0.75 \\
$\hat s$ at true $s{=}0$, $d{=}768$, $n{=}300$: cross-fit / naive & 0.50 / 3.24 \\
\bottomrule\end{tabular}}\end{table}

\begin{table}[!t]\centering
\caption{E6: generalization-gap scaling at $\alpha{=}0.3$. Log--log slope of gap vs.\ $n$ (Theorem~\ref{thm:gen} gives $-1/2$ as an upper envelope, not a rate law; see text), 8 seeds per $n$, probe $C{=}1$. Source: \texttt{outputs/exp6\_samplesize.json}.}
\label{tab:e6_scaling}
\small\begin{tabular}{ll cc}
\toprule Model & Bias & slope & $R^2$ \\ \midrule
GPT-2 Small & Gender & -0.11 & 0.67 \\
GPT-2 Small & Race & -0.09 & 0.59 \\
GPT-2 Medium & Gender & -0.12 & 0.88 \\
GPT-2 Medium & Race & -0.12 & 0.94 \\
GPT-2 Large & Gender & -0.09 & 0.93 \\
GPT-2 Large & Race & -0.08 & 0.83 \\
BERT-base & Gender & -0.09 & 0.74 \\
BERT-base & Race & -0.09 & 0.91 \\
Qwen2.5-7B-It & Gender & -0.08 & 0.98 \\
Qwen2.5-7B-It & Race & -0.06 & 0.92 \\
GPT-2 XL & Gender & -0.09 & 0.88 \\
GPT-2 XL & Race & -0.08 & 0.93 \\
\bottomrule\end{tabular}\end{table}

\subsection{E3: the theory is exact where its assumptions hold}\label{sec:results-e3}
Table~\ref{tab:e3_synth} and Fig.~\ref{fig:synthetic} summarize the ground-truth tests. With oracle scoring (true discriminant direction), the mixture law \eqref{eq:mixture} matches empirical AUC over $35$ $(s,\alpha)$ settings to a maximum absolute error below $0.006$, and learned probes never exceed the bound beyond Monte-Carlo noise---consistent with \eqref{eq:mixture} being the exact maximum. On spheres of known curvature, the chordal formula \eqref{eq:curvauc} matches oracle AUC to within $0.013$ across a $(\kap,r)$ grid, and the qualitative signature of Corollary~\ref{cor:ceiling} is visible: flat-manifold AUC keeps rising with geodesic separation while curved-manifold AUC saturates at the ceiling (Fig.~\ref{fig:synthetic}b). Estimator calibration (Part~D): $\hat\kap$ recovers curvature with median relative error $0.17$ (max $0.28$) and correct ordering across $\kap\in[0.25,4]$; $\hat\dB$ has median relative error $\approx0.15$ on spheres, with the known MLE underestimation growing with $d$ (up to $\approx40\%$ at $d{=}25$ on linear manifolds; the Two-NN cross-check \cite{facco2017estimating} shows the same downward drift), and the cross-fitted $\hat s$ removes the plug-in inflation that would otherwise dominate at $d/n>1$. These are the error bars to keep in mind when the same estimators are read on LLM features.

Two propagation checks quantify what these estimator errors can do to the theory's predictions. First, perturbing $\kap$ by the $\pm30\%$ worst-case bias of the sagitta estimator moves the chordal prediction \eqref{eq:curvauc} by at most $0.021$ AUC (median $0.003$) over the E3-B grid, and the ceiling \eqref{eq:ceiling} by at most $0.041$ over $\kap\in[0.25,8]$ at the same noise scale: the curvature mechanism is robust to the estimator's known bias. Second, the detectability boundary $\astar(n)$ contains no curvature term, so $\hat\kap$ bias cannot move it; its only estimated input is $\hat s$, and since $\astar\propto1/s$ exactly, the median cross-fitted calibration error of E3-D shifts $\astar$ by under $1\%$. One scope note: $\sigma_{w}$ in \eqref{eq:curvauc}--\eqref{eq:ceiling} is a model quantity, known on the synthetic manifolds; it is never estimated on LLM features, and no LLM-side conclusion consumes it---on real features curvature enters only as the bounded, qualitative correction just described.

\begin{figure*}[!t]
\centering
\includegraphics[width=\textwidth]{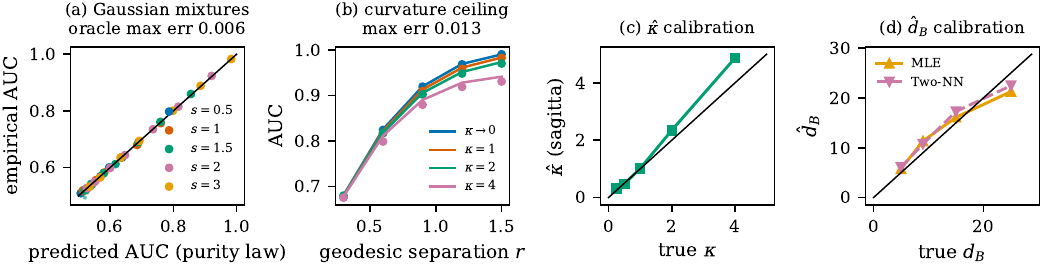}
\caption{E3, ground-truth validation. (a)~Theorem~\ref{thm:mixture} on Gaussian mixtures: predicted vs.\ empirical AUC across $35$ $(s,\alpha)$ settings; oracle scoring (solid) sits on the diagonal, learned probes (faint) approach it from below. (b)~Theorem~\ref{thm:curvature} on spheres: AUC vs.\ geodesic separation $r$ for each true curvature; curves are closed-form predictions, points are empirical; positive curvature saturates at the ceiling \eqref{eq:ceiling} while $\kap\to0$ keeps rising. (c)~$\hat\kap$ calibration and (d)~$\hat d_B$ calibration (MLE and Two-NN) on ground truth. }
\label{fig:synthetic}
\end{figure*}

\subsection{E9: the detection test, calibrated on exact nulls}\label{sec:results-e9}
Table~\ref{tab:e9} reports the realized level of the detection pipeline on label-free noise features, for which $H_{0}$ holds exactly, replicating the full protocol (grid-structured pools, purity mixing, grouped 5-fold CV $\times$ 5 seeds, $C{=}1$ probes); $1000$ replicates per cell ($400$ at $d{=}768$). The mean null AUC lies within $0.007$ of $\tfrac12$ in every cell, so the dominant miscalibration is a variance effect, and three findings follow. \emph{(i)} Treating the five seeds as independent replicates ($\sigma_{0}/\sqrt5$, the v14 build) yields FPR $0.22$--$0.37$ at nominal $.05$. \emph{(ii)} Even against the unreduced $\sigma_{0}$, cross-validation inflates the null s.d.\ of the pooled OOF statistic by a measured $1.17$--$1.31\times$ (out-of-fold scores share training labels across folds, a dependence the i.i.d.\ $U$-statistic variance does not model), for realized levels of $0.067$--$0.110$. \emph{(iii)} The calibrated test ($1.4\,\sigma_{0}$) attains FPR $0.017$--$0.041$, at or below nominal in every pool-respecting cell; it is the test behind every detection claim in this paper. The deliberately broken cell ($\alpha{=}1$, $n{=}1000$, which resamples past the $680$-sentence pool and duplicates points) shows inflation $1.92$ and calibrated FPR $0.120$: sampling beyond the pool invalidates even the calibrated test, and no reported claim uses such a cell. On the same nulls, $24$-cell families produce at least one uncalibrated false positive with probability $0.83$--$0.95$---the concrete cost of ignoring multiplicity---whereas the calibrated test under Bonferroni $.05/24$ brings the family-wise rate to $\le0.03$ on the $1000$-replicate cells. Applied to the released results, the confirmatory families of Section~\ref{sec:stats} survive: every E1 detection stands under the calibrated test with Bonferroni correction across its family, and the E2 curve-point detections that fall under calibration are the near-frontier points that Theorem~\ref{thm:detect} itself predicts to be marginal (Section~\ref{sec:results-e2x}).

\begin{table}[!t]
\centering
\caption{E9: realized false-positive rate (nominal $.05$) of the three detection-test variants under the exact null, and the measured null-s.d.\ inflation over the i.i.d.\ $\sigma_{0}$. $^{\dagger}$Resamples past the pool (duplicated points); documents the failure mode, carries no claims. }
\label{tab:e9}
\small
\begin{tabular}{@{}rrrrrrr@{}}
\toprule
$d$ & $n$ & $\alpha$ & infl. & $\sigma_0/\sqrt5$ & $\sigma_0$ & $1.4\sigma_0$ \\
\midrule
64 & 100 & 0.1 & 1.17 & 0.247 & 0.070 & 0.023 \\
64 & 100 & 0.3 & 1.21 & 0.249 & 0.083 & 0.027 \\
64 & 100 & 1 & 1.20 & 0.316 & 0.110 & 0.041 \\
64 & 300 & 0.1 & 1.21 & 0.241 & 0.067 & 0.017 \\
64 & 300 & 0.3 & 1.20 & 0.270 & 0.079 & 0.019 \\
64 & 300 & 1 & 1.22 & 0.271 & 0.090 & 0.030 \\
64 & 680 & 1 & 1.24 & 0.263 & 0.090 & 0.034 \\
64 & 1000 & 0.1 & 1.27 & 0.276 & 0.094 & 0.028 \\
64 & 1000 & 0.3 & 1.20 & 0.232 & 0.081 & 0.018 \\
768 & 300 & 0.3 & 1.26 & 0.223 & 0.070 & 0.020 \\
768 & 300 & 1 & 1.31 & 0.278 & 0.100 & 0.040 \\
64 & 1000$^{\dagger}$ & 1 & 1.92 & 0.373 & 0.218 & 0.120 \\
\bottomrule
\end{tabular}
\end{table}

\subsection{E1: two regimes, one geometry}\label{sec:results-e1}
\begin{figure*}[!t]
\centering
\includegraphics[width=\textwidth]{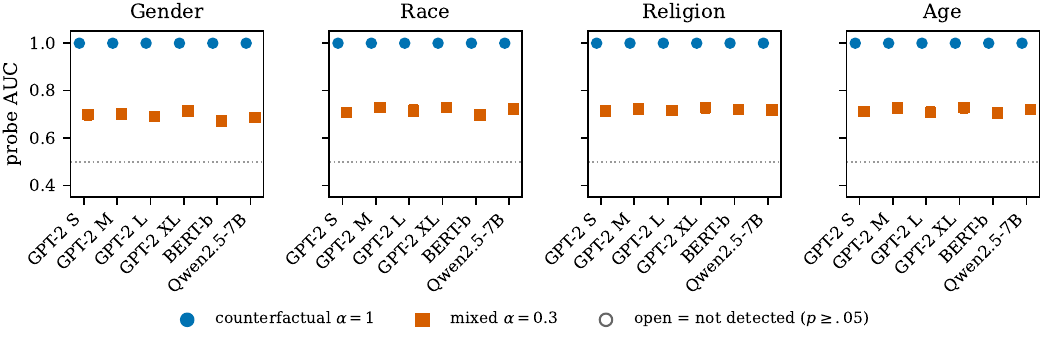}
\caption{E1 at a glance: probe AUC per model and bias dimension in the counterfactual ($\alpha{=}1$, circles) and mixed ($\alpha{=}0.3$, squares) regimes, with across-seed range intervals; open markers would indicate non-detection (none occur). The counterfactual row sits indistinguishably at the ceiling for all $24$ pairs; the mixed regime separates from it by $0.27$--$0.33$ AUC. }
\label{fig:main}
\end{figure*}

Table~\ref{tab:e1_main} and Fig.~\ref{fig:main} are the paper's main measurement. Three facts are stable across all six models and four dimensions. \emph{(i) The counterfactual regime is saturated.} All $24$ model--dimension pairs probe at AUC $=1.000\pm0.000$ (three-decimal rounding) at $\alpha=1$, with cross-fitted separations $\hat s\approx8$--$40$. By \eqref{eq:mixture}, any $s\gtrsim4$ produces this ceiling: counterfactual AUC certifies the presence of surface-marker information but cannot rank models or dimensions---GPT-2~XL at $\hat s\approx40$ and Qwen2.5-7B at $\hat s\approx8$ receive the identical score. \emph{(ii) The mixed regime is informative.} At $\alpha=0.3$ and $n=300$, AUC falls between $0.67$ and $0.73$. Every pair is detected, but all remain far from the ceiling. The zero-free-parameter Theorem~\ref{thm:mixture} prediction (column ``Purity-law pred.'', $\approx0.755$ for all saturated-$s$ pairs) upper-bounds the empirical values with a residual gap of $0.03$--$0.08$ explained by finite-sample probe estimation: the theory is a maximum over probes, which the learned probe attains only as $n\to\infty$. \emph{(iii) The geometry is low-dimensional and mostly offset-clean.} Intrinsic dimensions are $\hat\dB\approx4$--$7$. Measured neutral offsets are small relative to the class separation ($b_{w}<\hat s/2$ in $22$ of $24$ pairs; median ratio $b_w/(\hat s/2)\approx0.26$), so the midpoint model of Assumption~\ref{ass:model} is a good approximation. The two exceptions are the GPT-2 Medium and Large \emph{religion} pairs (ratios $\approx1.1$), whose neutral pool sits beyond the class midpoint along the discriminant. By Corollary~\ref{cor:offset} the purity law remains monotone under any offset, and the offset perturbs the two middle mixture terms in opposite directions, which partially cancels: indeed these two pairs show no elevated prediction error in E2.

With the leakage-controlled protocol and calibrated nulls, gender at $\alpha=0.3$, $n=300$ is detectably above chance in all six models, and \emph{no} below-chance AUC appears anywhere outside the undetectable phase (where AUC $\approx0.5\pm$ noise is expected).

\subsection{E2: purity curves with no free parameters}\label{sec:results-e2}
For each model--dimension pair, $\hat s$ is measured once on the $\alpha=1$ marked set; equation~\eqref{eq:mixture} then predicts the entire AUC--$\alpha$ curve. Fig.~\ref{fig:alpha} shows the curves; Table~\ref{tab:e2_mae} reports the mean absolute error of the zero-parameter prediction and of a one-parameter fitted-$s$ reference, for all $24$ pairs. We highlight three points. First, the empirical curves are monotone in $\alpha$ within seed noise in all $24$ combinations ($20$ are exactly nondecreasing; the other four dip by at most $0.010$ AUC between $\alpha=0.02$ and $0.05$, within one seed s.d.)---the theory's central qualitative prediction, with no trace of an interior optimum. Second, the zero-parameter prediction achieves median MAE $0.015$ (max $0.048$); the one-parameter fit improves this to median $0.010$. Because $\hat s\gg4$ throughout, the predicted curves collapse onto the saturated form $\tfrac12+\tfrac{\alpha(2-\alpha)}{2}$: at high surface separation, purity---not geometry---is the binding constraint, and the collapse is real in the data. Third, the prediction error concentrates in BERT-base ($0.025$--$0.048$ across its four dimensions): the encoder's learned probes sit furthest below the theoretical maximum at this $n$, consistent with the bound interpretation (the theory caps, and finite-sample training determines how closely the cap is approached). The two large-offset religion pairs show no elevated error. One protocol caveat: each curve point reuses a single dataset draw across the five CV seeds, so its error bar reflects re-splitting only; at the smallest purities ($\alpha\le0.05$, i.e.\ $\le8$ marked pairs at $n=300$) dataset-level fluctuation, which these bars do not capture, can place individual points slightly above the population bound.

\emph{Detection accounting under the calibrated test} (Section~\ref{sec:results-e2x}\label{sec:results-e2x}). Of the $216$ curve points, $162$ are detections under the calibrated $1.4\sigma_0$ test; recalibrating from the raw $\sigma_{0}$ removes $14$ nominal detections, \emph{all} at $\alpha\in\{0.02,0.05,0.1\}$---the near-frontier region where Theorem~\ref{thm:detect} predicts marginal power. Under family-wise correction, $159$ of the $162$ survive Benjamini--Hochberg at FDR $.05$, and $109$ survive the (deliberately punitive for $216$ dependent tests) Bonferroni threshold; no qualitative claim of this section rests on an individual near-threshold point. In family F1 (E1, all model--dimension--regime cells) every detection survives Bonferroni outright (largest detected $p=1.3\times10^{-4}<.05/60$), as does every E4 layer-scan cell (largest $p=1.6\times10^{-4}$). 

\begin{figure*}[!t]
\centering
\includegraphics[width=0.98\textwidth]{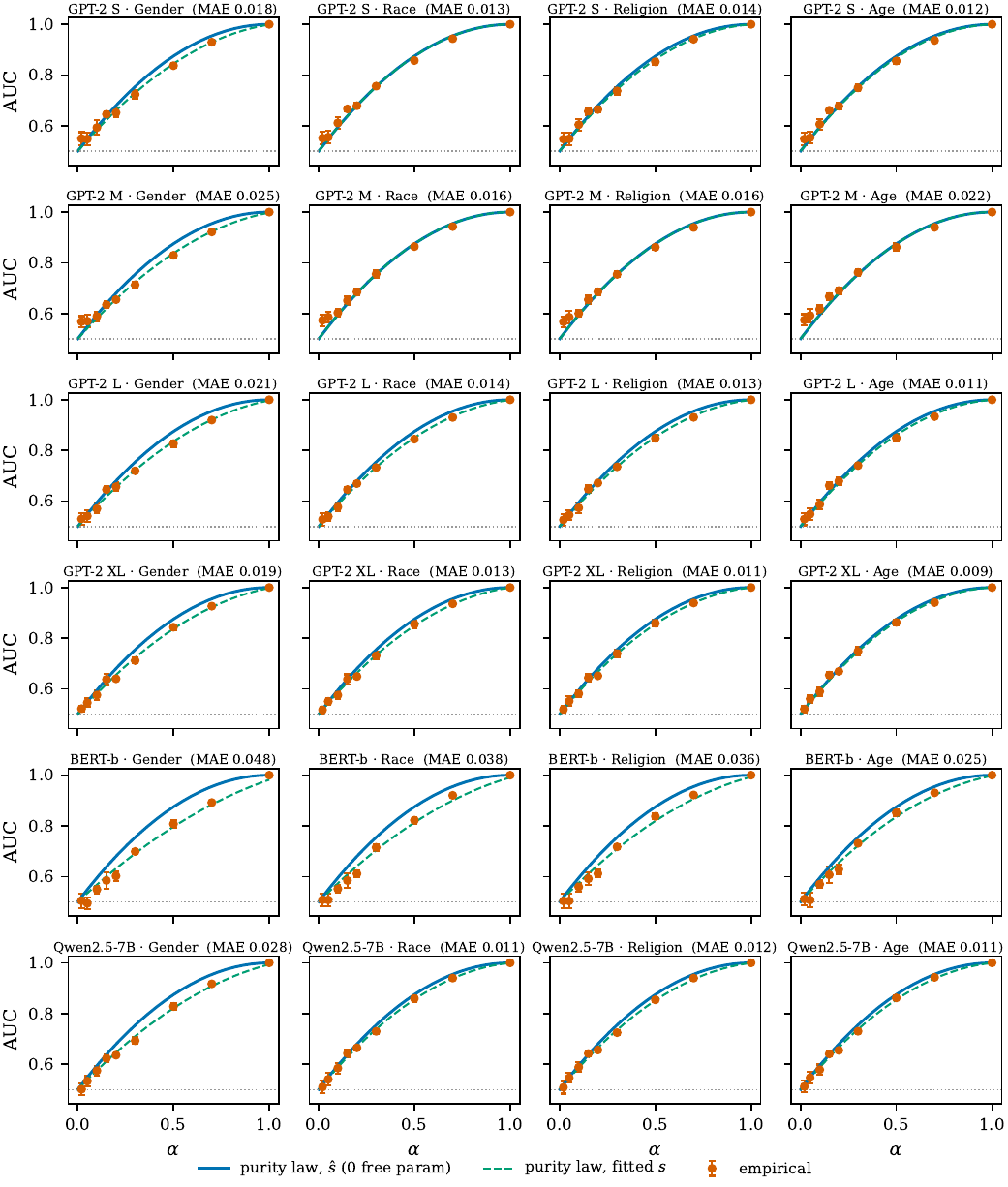}
\caption{E2: empirical AUC($\alpha$) (points, $\pm$ s.d.\ over 5 seeds) vs.\ the zero-free-parameter Theorem~\ref{thm:mixture} prediction (solid; $\hat s$ from the $\alpha{=}1$ set only) and the one-parameter fitted-$s$ curve (dashed), $n=300$. MAEs per panel; all monotone, no interior optimum. }
\label{fig:alpha}
\end{figure*}

\subsection{E2/E3: the detectability phase diagram}\label{sec:results-phase}
Fig.~\ref{fig:phase} overlays the Theorem~\ref{thm:detect} boundary $\astar(n)$ on empirical detection-rate heatmaps in the $(\alpha,n)$ plane. On synthetic data with $s$ known exactly (E3 Part~C), the $50\%$-power contour parallels the predicted boundary with the predicted $-\tfrac12$ log--log slope; the empirical contour sits above $\astar(n)$ by a factor $\approx2$--$2.5$, as it must: $\astar$ is an information-theoretic frontier for the \emph{optimal} probe, while the empirical probe pays an additional training-sample cost in the weak-signal regime. On LLM features (three of the seven combinations run are shown; all appear in the released \texttt{outputs/exp2\_alpha\_sweep.json}), the same structure appears, compressed toward smaller $\alpha$ because $\hat s$ is large: with $s\gtrsim6$ the boundary becomes purity-limited, $\astar(n)\approx z_{.95}/\sqrt{3n}$ independent of $s$---detection at $n=300$ requires $\alpha\gtrsim0.05$--$0.06$ regardless of how separable the markers are. Below the boundary, audits return chance-level AUC \emph{by necessity}, not because models there are unbiased; per-cell detection rates there decay to the cell test's realized level rather than to zero (E9), which is the behavior visible in the lowest rows of Fig.~\ref{fig:phase}.

\begin{figure*}[!t]
\centering
\includegraphics[width=\textwidth]{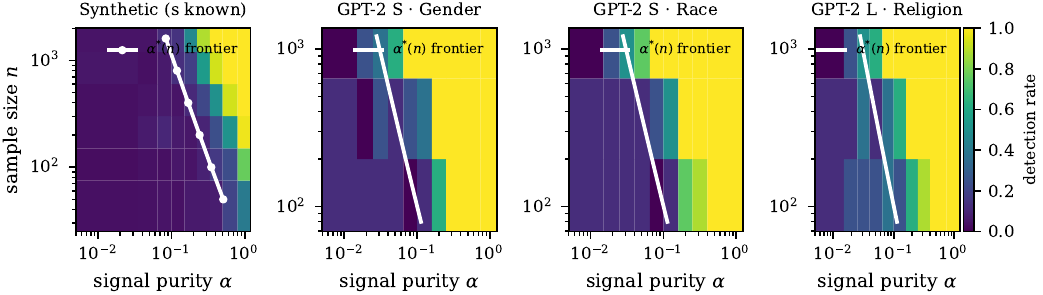}
\caption{Detectability phase diagram: per-cell detection rate (single-seed null test; color) over the $(\alpha,n)$ plane, with the Theorem~\ref{thm:detect} frontier $\astar(n)$ (white). Left: synthetic, $s$ known. Right: LLM model--dimension pairs. The frontier is the optimal-probe $50\%$-power contour: above it, learned-probe detection rates climb toward $1$; below it they decay toward the realized level of the cell test ($\approx0.09$--$0.12$ uncalibrated; E9), never toward power. Cells at $\alpha=1$, $n=1000$ exceed the $680$-sentence marked pool (E9 documents the duplication-inflated null there) and carry no detection claims. }
\label{fig:phase}
\end{figure*}

\subsection{E4: layers}\label{sec:results-e4}
Fig.~\ref{fig:layers} maps probe AUC over every layer of GPT-2 Small and BERT-base in both regimes. In the saturated counterfactual regime, layer choice is nearly irrelevant (AUC $\approx1$ at every depth)---another face of ceiling saturation. In the mixed regime, structure emerges, but not the folklore structure: the \emph{last} layer is never optimal (choosing the best layer instead gains $0.9$--$6.0$ AUC points; more for BERT than GPT-2), yet the optimal depth is model- and dimension-specific and is frequently an \emph{early} layer (layer $1$--$2$ in five of eight combinations). This is coherent with the nature of the signal: marker-based bias evidence is lexically anchored, and surface-lexical information is strongest in early layers. The practical guideline is therefore: \emph{avoid the last layer, and scan layers per model}---the scan costs one forward pass and a handful of probe fits.

\begin{figure}[!t]
\centering
\includegraphics[width=\columnwidth]{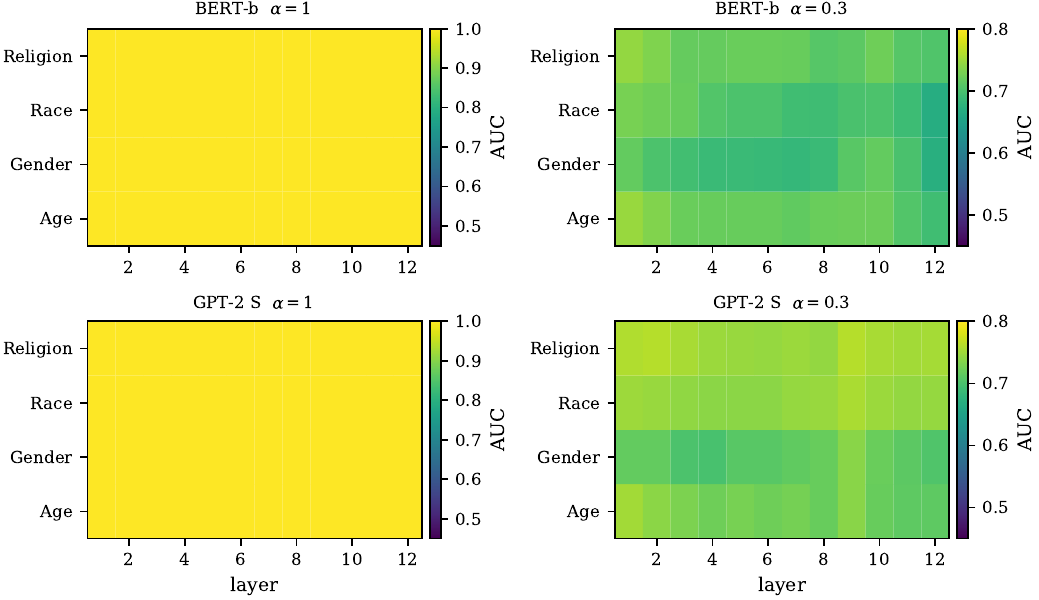}
\caption{E4: layer-wise probe AUC, all layers, both regimes, four bias dimensions. }
\label{fig:layers}
\end{figure}

\subsection{E5: bias--semantic entanglement (C3)}\label{sec:results-e5}
In the synthetic C3 model, learned probes track the optimal closed form $\Phi(s^{*}/\sqrt2)$, $s^{*2}=s^{2}(1-\rho^{2}\eta_{s}^{2}/(1+\eta_{s}^{2}))$, to a maximum error of $0.005$ AUC across $\rho\in[0,0.9]$, while probes \emph{aligned} to the bias axis degrade much faster, following $\Phi(s/\sqrt{2(1+\rho^{2}\eta_{s}^{2})})$ (Fig.~\ref{fig:orth}a)---the quantitative version of C3, and a caution for debiasing methods that operate along a fixed estimated bias direction. On real features in the mixed regime (Fig.~\ref{fig:orth}b), resampling the training set to correlate gender labels with a profession-vs-trait context confound at level $\rho_{\mathrm{lab}}$ and evaluating on balanced held-out data lowers AUC overall, from $0.699$ ($\rho_{\mathrm{lab}}=0$) to $0.615$ ($\rho_{\mathrm{lab}}=0.8$). The intermediate points of this panel come from a single resampling draw each and are visibly noisy (the $\rho_{\mathrm{lab}}=0.2$ point lies \emph{above} baseline), so we read the panel as evidence of an overall transfer penalty, not of a calibrated dose--response curve. The effect is invisible at $\alpha=1$, where the ceiling masks it.

\begin{figure}[!t]
\centering
\includegraphics[width=\columnwidth]{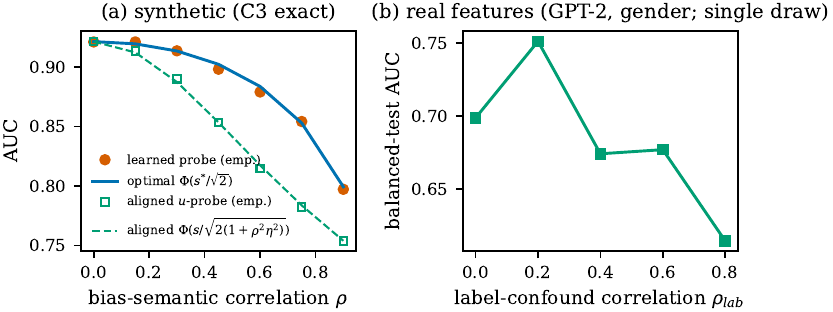}
\caption{E5: orthogonality stress test. (a)~Synthetic: learned probes vs.\ the two closed forms (optimal and axis-aligned). (b)~Real features: balanced-test AUC vs.\ induced label--confound correlation (one resampling draw per point; see text). }
\label{fig:orth}
\end{figure}

\subsection{E6: the generalization envelope}\label{sec:results-e6}
At $\alpha=0.3$ the train--test AUC gap decays with $n$ (Table~\ref{tab:e6_scaling}, Fig.~\ref{fig:scaling}), but the working-protocol probe ($C=1$) decays with log--log slope $\approx-0.09$, far shallower than $-\tfrac12$. This is not a violation of Theorem~\ref{thm:gen}; it is a lesson in reading it. The bound holds for a \emph{fixed} norm budget $\Lambda$, whereas the fitted $\ell_2$-regularized probe spends a larger effective norm at small $n$ (its train AUC pins at $\approx1$ throughout)---the premise, not the inequality, changes with $n$. Constraining the norm ($C=0.01$) steepens the measured decay to slopes $-0.29$ to $-0.37$ with $R^{2}\ge0.97$ (vs.\ $-0.06$ to $-0.12$ at $C=1$), approaching the envelope's rate; the envelope's \emph{level} depends on the unmeasured product $\Lambda\rho_{B}$, so Fig.~\ref{fig:scaling} draws only a slope reference. Two conclusions: the $n^{-1/2}$ statement is an upper envelope, not a rate law for flexible estimators; and scaling must be measured away from the ceiling---at $\alpha=1$ both risks saturate and the gap is identically $\approx0$.

\begin{figure}[!t]
\centering
\includegraphics[width=\columnwidth]{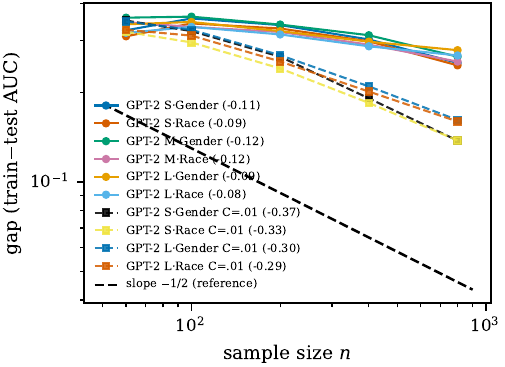}
\caption{E6: generalization gap vs.\ $n$ (log--log), $\alpha=0.3$, with the theoretical $-1/2$ slope for reference. }
\label{fig:scaling}
\end{figure}

\subsection{E7: how far does a label-free diagnostic go?}\label{sec:results-e7}
Fig.~\ref{fig:geometry} closes the loop from geometry to prediction, with one negative and two positive results. \emph{Negative:} the raw concentration index $\betaB$ does not rank real-model probe difficulty within a purity level (per-$\alpha$ Spearman $\rho_s\in[-0.34,+0.36]$, none reliably positive; pooled $\rho_s=0.09$). Real LLM spectra violate the isotropic-noise premise of Lemma~\ref{lem:beta}---dominant non-signal variance directions swamp the top-eigenvalue signal---so the label-free shortcut fails exactly where its model assumption fails, and we report that plainly. \emph{Positive (supervised):} the cross-fitted separation $\hat s$ strongly ranks mixed-regime difficulty. Within purity levels, $\rho_s=0.85$ at $\alpha=0.3$ ($p\approx1.4\times10^{-7}$, $n=24$) and $0.76$ at $\alpha=0.5$ ($p\approx1.8\times10^{-5}$); the full pipeline $\hat s\to$~Theorem~\ref{thm:mixture} tracks empirical AUC across all $72$ (model, dimension, $\alpha$) combinations with $\rho_s=0.97$ and MAE $0.033$ (panel c). \emph{Positive (unsupervised):} routing $\betaB$ \emph{through} the purity model \eqref{eq:betainv} recovers $\rho_s=0.88$ overall---the model supplies the structure that the raw index lacks. ($\betaB$ is computed on standardized features reduced to their top $50$ principal components, and \eqref{eq:betainv} is inverted with $d{=}50$; the raw $d\approx10^{3}$ spectra make the index numerically degenerate. Sample sizes: $n=300$ at $\alpha<1$, $n=600$ at $\alpha=1$.) The net result: $\betaB$ alone, no; $\hat s$ or the model-based pipeline, yes.

\begin{figure*}[!t]
\centering
\includegraphics[width=\textwidth]{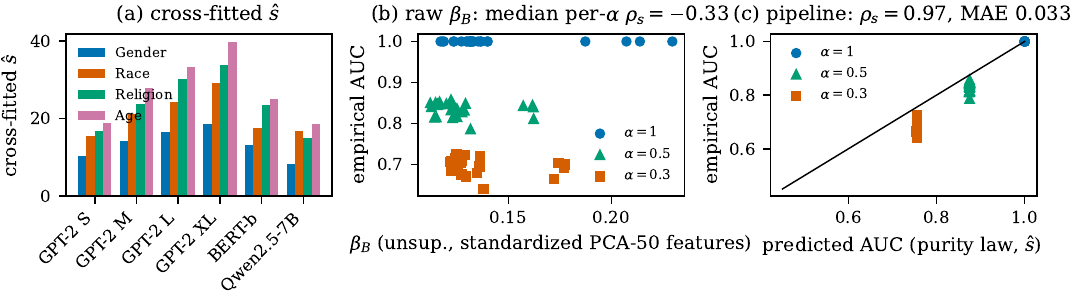}
\caption{E7: from geometry to prediction. (a)~Cross-fitted separations $\hat s$. (b)~Label-free $\betaB$ vs.\ empirical AUC (Spearman $\rho_s$ on panel). (c)~Predicted (Theorem~\ref{thm:mixture} with $\hat s$) vs.\ empirical AUC. }
\label{fig:geometry}
\end{figure*}

\subsection{E1b: naturalistic paraphrases}
We paraphrased template pairs with a locally served instruction model (via \texttt{ollama}), keeping a pair only if both paraphrases lexically preserve their demographic marker ($208$ validated pairs; the strict filter rejects most generations, which we report as a property of the protocol). Because paraphrases have no matched neutral pool, E1b evaluates the counterfactual regime only (Table~\ref{tab:e1_natural}), with two outcomes. First, the ceiling survives de-templating: AUC remains $\approx1.0$, so the saturation phenomenon is not a template artifact. Second, the \emph{effect size does not survive unchanged}: cross-fitted $\hat s$ drops roughly twofold relative to templates (e.g., GPT-2 Small age $18.8\to7.6$; Qwen2.5-7B gender $8.3\to2.8$)---template regularity inflates measured separations even when it does not change the qualitative verdict. At the smallest naturalistic separation the ceiling begins to lift exactly as Lemma~\ref{lem:binormal} dictates: Qwen2.5-7B gender measures AUC $=0.984$ against the binormal prediction $\Phi(2.81/\sqrt2)=0.977$. Audits that report effect sizes from templated counterfactuals should expect naturalistic $\hat s$ to be roughly half as large.

\subsection{E10: nonlinear probes---what they recover, and what they cannot}\label{sec:results-e10}
E10 runs RBF-kernel SVMs and one-hidden-layer MLPs under the identical protocol, on both E3 generators and on the LLM features (Fig.~\ref{fig:nonlinear}. \emph{Purity (S1).} On the flat mixture with known $(s,\alpha)$, no probe exceeds the law \eqref{eq:mixture}: the maximum overshoot over $12$ $(s,\alpha)$ settings is $+0.016$ for the linear probe ($\approx1.2$ Monte-Carlo s.d.\ at $n_{\mathrm{te}}=2000$), $-0.005$ for the RBF-SVM, and $-0.015$ for the MLP---the empirical face of Remark~\ref{rem:bayes}: purity dilution is a Bayes limit, and nonlinearity does not buy it back. \emph{Curvature (S2).} On spheres with ambient noise, in every cell where the chordal--geodesic gap exceeds $0.02$ AUC (all at $\kap=4$), the kernel and MLP probes sit \emph{at or below} the linear probe, and the geodesic oracle---which projects onto the true sphere and scores by exact geodesic distances---matches the linear probe to $\le0.002$ (e.g.\ $0.934$ vs $0.933$ at $r=1.5$): none of the gap is recovered. Ambient noise of scale $\sigma$ against a manifold of radius $1/\sqrt{\kap}$ destroys geodesic information itself, so in this regime the ceiling \eqref{eq:ceiling} in practice binds \emph{all} probes, and the idealized recovery of Corollary~\ref{cor:ceiling} should be read as an upper bound attained only when noise is intrinsic. \emph{Real features (R).} Across $72$ settings ($6$ models $\times$ $4$ dimensions $\times$ $\alpha\in\{0.1,0.3,1\}$, $n=300$), the linear probe dominates: the median paired difference is $-0.068$ AUC for the RBF-SVM and $-0.015$ for the MLP; gains above $+0.005$ occur in $0$ and $4$ of $72$ settings respectively, all four at $\alpha=0.3$ and three of the four on BERT-base---precisely the model whose linear probes sit furthest below the purity-law maximum in E2---with the largest gain $+0.035$ (religion). Calibrated detections drop from $69/72$ (linear) to $46$ (SVM) and $53$ (MLP). The practical reading is sharp: at audit sample sizes, nonlinearity costs more in variance than it recovers in structure, and a nonlinear probe is worth trying only where the linear probe falls measurably short of its purity-law prediction.

\begin{figure}[!t]
\centering
\includegraphics[width=\columnwidth]{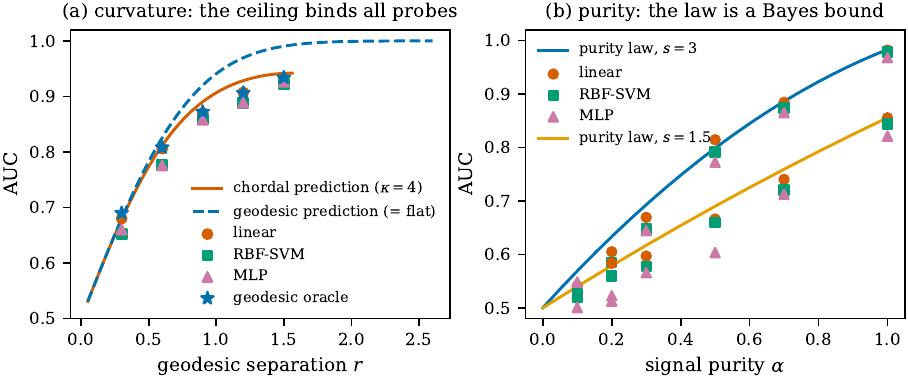}
\caption{E10 on the E3 generators, $n_{\mathrm{te}}=2000$. (a)~Curvature ($\kap=4$, ambient noise): linear, RBF-SVM, MLP, and the geodesic oracle all track the chordal prediction (solid) and saturate together below the idealized geodesic value (dashed). (b)~Purity ($s\in\{1.5,3\}$): all probes sit on or below the law \eqref{eq:mixture}, which is a Bayes bound (Remark~\ref{rem:bayes}). }
\label{fig:nonlinear}
\end{figure}

\subsection{E11: how neutral is ``neutral''? Measured stereotype leakage}\label{sec:results-e11}
The purity model treats neutral texts as label-independent. E11 measures how far that idealization is from the truth of the representations, and how far our protocol is insulated from the difference. \emph{(A) The protocol is clean by construction.} Strong classifiers (linear, RBF-SVM, MLP) trained on neutral-only features against the protocol's randomly assigned labels probe at chance: $0$ calibrated detections in $72$ tests (max $|\auc-\tfrac12|=0.069$), confirming that the random labeling used everywhere in E1--E7 admits no leakage into the reported numbers. \emph{(B) But the text is not semantically neutral.} Assigning each of the $160$ neutral profession sentences its WinoBias occupation-stereotype label \cite{zhao2018winobias} ($16$ professions overlap our inventory; $9$ male-, $7$ female-dominated per BLS), the $\alpha{=}1$ marked-trained gender probe---never shown a neutral sentence---scores them at AUC $0.688$--$0.873$ across the six models (all $p\le2.4\times10^{-5}$), largest for Qwen2.5-7B ($0.873$) and BERT-base ($0.818$); training directly on the neutral sentences reaches AUC $1.000$ (linear, grouped CV) in \emph{every} model. Occupation stereotype is thus fully linearly encoded in ``neutral'' representations, and the demographic-marker direction partially aligns with it. \emph{(C) Leaky purity stress test.} Rebuilding the E2 gender sweep with neutral labels drawn from the stereotype with probability $\pi$ (the protocol is $\pi=\tfrac12$; natural corpora with stereotype-correlated ground truth sit at $\pi>\tfrac12$): at $\pi=\tfrac12$ the curves are monotone within seed noise (max dip $0.043$, ${\approx}1.8$ seed s.d.); at $\pi=0.75$ the small-$\alpha$ tail \emph{lifts} to a leakage floor of $0.64$--$0.69$ (instead of decaying to $\tfrac12$) and the curve is genuinely non-monotone (max dip $0.081$, beyond seed noise)---as it must be, since the theorem's label-independence premise is violated; at $\pi=1$ neutrals are simply labeled data and the curve pins at AUC $\approx1$ for all $\alpha$. The practical translation: on corpora whose latent labels correlate with stereotypes, $\alpha$ measured by \emph{marker} counting understates the effective signal, a null probe result is even harder to interpret, and the floor of the AUC--$\alpha$ curve estimates the leakage strength; our marker-based protocol sidesteps this by randomization, at the cost of modeling only marker-borne signal.

\section{Discussion}\label{sec:discussion}

\subsection{Guidelines for practitioners}
\emph{(1) Treat audits as power analyses.} Before probing, fix the purity $\alpha$ of the audit corpus (measurable by counting marked texts) and the minimal effect size $s_{\min}$ of interest; then budget $n_{\mathrm{te}}\ge(z_{1-\delta}+z_{1-\gamma})^{2}\big/\big(3\,[\aucmax(\alpha;s_{\min})-\tfrac12]^{2}\big)$ via \eqref{eq:detcond} with $\sigma_{0}\approx1/\sqrt{3n_{\mathrm{te}}}$, or read $n(\alpha)$ off the phase diagram. If the audit statistic is a pooled cross-validated AUC rather than a single held-out split, multiply $\sigma_{0}$ by the E9 calibration factor ($1.4$ under our protocol; budget $\times1.96$)---or simply use one held-out split, for which $\sigma_{0}$ is exact. A null result below the frontier should be reported as \emph{underpowered}. \emph{(2) Do not rank models by counterfactual AUC.} Above $s\approx4$ the ceiling hides all differences; rank by $\hat s$ (cross-fitted) or by mixed-regime AUC at matched $(\alpha,n)$. \emph{(3) Control template leakage---and measure it.} We compare grouped (context-disjoint) with random cross-validation on identical data (E8, GPT-2 Small and Large, all four dimensions): with our design, in which marked and neutral prompts share one context grid, random CV inflates AUC by at most $0.011$ ($\alpha=0.3$; $<10^{-4}$ at the $\alpha=1$ ceiling). The inflation is small \emph{because} the design crosses markers with contexts; with organic corpora no such guarantee exists, so grouped CV remains the conservative default and the grouped--random gap is itself a cheap leakage meter. \emph{(4) Never the last layer---and scan.} Away from saturation the last layer is uniformly suboptimal (E4), but the best depth is model- and dimension-specific and often early; a per-model layer scan costs one forward pass and recovers up to $6$ AUC points. \emph{(5) Prefer cross-fitted effect sizes.} Naive Mahalanobis plug-ins inflate catastrophically at $d\gtrsim n$ (E3-D); any pipeline reporting raw separations at $d/n>1$ without cross-fitting or shrinkage is suspect. \emph{(6) Do not trust raw $\betaB$ on real features.} The label-free index ranks difficulty only when routed through the purity model \eqref{eq:betainv} (E7: pipeline $\rho_s=0.88$ vs.\ raw $\rho_s\approx0$); with even a small labeled counterfactual set, the cross-fitted $\hat s$ is the reliable diagnostic ($\rho_s=0.85$/$0.76$ within the mixed regime at $\alpha=0.3$/$0.5$). \emph{(7) Do not reach for nonlinear probes to rescue a weak audit.} Against purity dilution no classifier can beat \eqref{eq:mixture} (Remark~\ref{rem:bayes}); under ambient noise the curvature ceiling binds nonlinear probes too (E10-S2); and at audit-scale $n$ the kernel and MLP baselines lose more to variance than they recover (median $-0.068$/$-0.015$ AUC, E10-R). Their legitimate use is diagnostic: a nonlinear probe that beats the linear one signals that the linear probe sits below its purity-law prediction, as in the BERT-base cases of E10.

\subsection{Limitations}
\emph{Model of neutrality.} Theorem~\ref{thm:mixture} treats neutral texts as label-independent with a shared covariance. E11 measures how idealized that is: occupation-stereotype structure is fully linearly decodable from our ``neutral'' sentences (AUC $1.0$ direct; $0.69$--$0.87$ zero-shot from the marker direction), our protocol is insulated from it only because neutral labels are randomized, and under stereotype-correlated labeling ($\pi>\tfrac12$) the purity curve acquires a leakage floor and loses monotonicity---so the purity law should be applied to natural corpora only together with a leakage measurement of the E11-B type. Corollary~\ref{cor:offset} handles mean offsets; a full treatment of graded purity ($\alpha$ as a distribution, leakage as a second signal component) is future work. \emph{Curvature on LLM features.} Theorem~\ref{thm:curvature} is exact on space forms; on LLM features our sagitta estimator carries $20$--$30\%$ bias (E3-D), so we use curvature as a bounded correction and a qualitative mechanism, not as a precisely measured quantity. \emph{Template ecology.} Even with paraphrase robustness checks, template-derived prompts undercover the diversity of deployment text; purity in the wild also varies by domain. \emph{Scope of ``bias.''} Linear detectability of demographic markers is evidence about representations, not about downstream harms; bridging to behavioral bias metrics \cite{goldfarb2021intrinsic,delobelle2022measuring} remains open. Each dimension is moreover operationalized as a single binary marker contrast (man/woman, White/Black, Christian/Muslim, young/elderly) in English; multi-group, intersectional, non-lexical, and cross-lingual operationalizations are untested.

\section{Conclusion}
Throughout this paper ``probe failure'' has meant one operational thing: under the leakage-controlled protocol of Section~\ref{sec:exp-design}, the level-$.05$ pooled-OOF detection test returns a null, or the measured AUC falls short of the theory's maximum by more than the calibrated error of the pipeline. In that sense linear bias probes do not fail mysteriously. They fail for three quantifiable reasons---not enough marked signal ($\alpha$ below the detectability frontier), not enough samples ($n$ below the $\sqrt{\dB/n}$ budget), or geometry that linear readout cannot exploit (chordal contraction under curvature)---and they succeed trivially in the counterfactual regime because surface markers put $s$ deep into the saturated zone of the purity law. Two boundaries of this account are now measured rather than conjectured. Nonlinear probes rescue neither mechanism: the purity law caps them provably (Remark~\ref{rem:bayes}, confirmed in E10-S1), and under ambient noise even a geodesic oracle recovers none of the curvature-induced loss, while kernel and MLP probes trail the linear probe on real features at audit-scale $n$ (E10)---so the linear theory's scope is, in practice, the scope of probing tout court in our regimes. And the neutrality idealization is measurably false of the text (stereotype signal is linearly decodable from ``neutral'' sentences, E11) yet harmless to our protocol by randomization, with the leaky-purity stress test mapping how it distorts audits on natural corpora. Under the stated assumptions---and with the detection test's realized level verified on exact nulls (E9)---the combined pipeline predicts real probe behavior across six models and four bias dimensions from measured geometry, in the central case with no parameters fitted to the predicted curves. The practical payoff is a change of frame: a bias audit is a statistical power problem, and it should be designed---and its null results reported---accordingly.

\bibliographystyle{IEEEtran}
\bibliography{refs}

\clearpage
\appendices

\section{Proofs for Section~\ref{sec:theory-gen}}\label{app:gen}

\subsection{Proof of Theorem~\ref{thm:gen}}
\begin{proof}
Write $\tilde h=h-c$ where $c$ is the center of the smallest enclosing Euclidean ball of $B$, so $\|\tilde h\|\le\rho_{B}$ for all $h\in B$. For the class $\mathcal H_{\Lambda}=\{h\mapsto w^{\top}\tilde h:\|w\|\le\Lambda\}$, the empirical Rademacher complexity obeys
\[
\widehat{\mathfrak R}_{S}(\mathcal H_{\Lambda})
=\E_{\sigma}\sup_{\|w\|\le\Lambda}\Big\langle w,\tfrac1n\sum_{i}\sigma_{i}\tilde h_{i}\Big\rangle
=\frac{\Lambda}{n}\,\E_{\sigma}\Big\|\sum_{i}\sigma_{i}\tilde h_{i}\Big\|.
\]
By Jensen, $\E_{\sigma}\|\sum_i\sigma_i\tilde h_i\|\le(\E_{\sigma}\|\sum_i\sigma_i\tilde h_i\|^{2})^{1/2}=(\sum_{i}\|\tilde h_{i}\|^{2})^{1/2}\le\rho_{B}\sqrt n$, whence $\widehat{\mathfrak R}_{S}\le\Lambda\rho_{B}/\sqrt n$. The unit-margin ramp $\psi(t)=\min(1,\max(0,1-t))$ is $1$-Lipschitz and $\psi(y\,f(h))\ge\mathbf 1[y f(h)\le0]$, so by Talagrand's contraction lemma the loss class has Rademacher complexity at most $\Lambda\rho_{B}/\sqrt n$, and the standard Rademacher generalization theorem \cite{bartlett2002rademacher} gives, with probability $1-\delta$, simultaneously for all $h\in\mathcal H_\Lambda$:
\[
R_{0\text{-}1}(h)\le \E\,\psi \le \widehat R_{n}(h)+\frac{2\Lambda\rho_{B}}{\sqrt n}+3\sqrt{\frac{\log(2/\delta)}{2n}},
\]
where $\widehat R_n$ is the empirical ramp risk and the deviation constant $3$ covers the two-sided bounded-difference step for the empirical complexity.

For \eqref{eq:radius}: any two points of $B$ at geodesic distance $r$ have Euclidean (chordal) distance at most $r$ (a straight segment is no longer than any connecting path), so $D_{B}^{\mathrm{ch}}\le D_{B}$. Since $B$ is closed with $K\ge\kap_{\min}>0$, the Bonnet--Myers theorem \cite{docarmo1992riemannian} gives $D_{B}\le\pi/\sqrt{\kap_{\min}}$. Jung's theorem in the dimension-free form $\rho\le D^{\mathrm{ch}}/\sqrt2$ completes \eqref{eq:radius}.
\end{proof}

\begin{remark}[Sharpness; no chordal contraction off space forms]\label{rem:needle}
The constant $\pi$ in \eqref{eq:radius} cannot be improved to the spherical value $2$. Consider the surface of revolution of the profile $\rho(z)=a\cos(\sqrt{\kap}\,z)$, $|z|\le\pi/(2\sqrt{\kap})$, with its two tips smoothed: its Gaussian curvature is $K=\kap/(1+\rho'(z)^{2})^{2}\ge\kap(1-a^{2}\kap)$ everywhere, yet its pole-to-pole \emph{chord} tends to $\pi/\sqrt{\kap}$ as $a\to0$ (numerically, $K\ge0.995\,\kap$ with chord $3.138/\sqrt{\kap}$ already at $a=0.05$). Consequently the spherical chord bound $(2/\sqrt{\kap})\sin(\sqrt{\kap}\,D_{B}/2)$, exact on round spheres, is \emph{false} for general submanifolds with $K\ge\kap$: extrinsic chords are not controlled by intrinsic comparison beyond chord $\le$ geodesic. (Toponogov-type comparison controls intrinsic, not extrinsic, distances.) The constant $2$ is recovered under an extrinsic hypothesis---all normal curvatures $\ge\sqrt{\kap}$, by Blaschke's rolling theorem---or on the space forms of Theorem~\ref{thm:curvature}.
\end{remark}

\subsection{Proof of Proposition~\ref{prop:lower}}
\begin{proof}
Fix $\dB\le n/c_{1}$ and work in $\R^{\dB}$ (embed as an affine slice for the general case). For $\sigma\in\{\pm1\}^{\dB}$ define class-conditional laws $P_{\sigma}^{\pm}=\mathcal N(\pm\mu_{\sigma}/2, I)$ with
\[
\mu_{\sigma}=s\,\frac{\sigma}{\sqrt{\dB}},\qquad \|\mu_{\sigma}\|=s,\qquad s\le1 \text{ chosen below},
\]
labels balanced. The Bayes rule for $P_\sigma$ is the linear rule with direction $\sigma/\sqrt{\dB}$ and $R^{*}=\Phi(-s/2)$, identical for all $\sigma$.

\emph{Step 1 (risk lower bound via direction error).}
For $0$-$1$ loss, excess risk is the margin-weighted disagreement with the Bayes rule: $R_{\sigma}(\hat g)-R^{*}=\int_{\{\hat g\ne g^{*}_{\sigma}\}}|2\eta_{\sigma}-1|\,dP^{X}_{\sigma}=:d_{\sigma}(\hat g,g^{*}_{\sigma})$, where $\eta_\sigma$ is the regression function. For $\sigma$ and $\sigma^{(j)}$ differing only in coordinate $j$, the Bayes rules are the halfspaces normal to $u_{\sigma}=\sigma/\sqrt{\dB}$ and $u_{\sigma^{(j)}}$, whose angle $\theta_{j}$ satisfies $\cos\theta_{j}=1-2/\dB$, i.e.\ $\sin^{2}(\theta_{j}/2)=1/\dB$ and $\theta_{j}^{2}\ge4/\dB$. A Gaussian computation gives, for $\theta_{j}\le1$ (i.e.\ $\dB\ge5$; smaller $\dB$ is absorbed into $c_{0}$),
\[
\begin{aligned}
d_{\sigma}\big(g^{*}_{\sigma^{(j)}},g^{*}_{\sigma}\big)
&\ge \Phi\big(-\tfrac s2\cos\theta_{j}\big)-\Phi\big(-\tfrac s2\big)\\
&\ge \frac{s}{8}\,\phi\!\Big(\frac s2\Big)\theta_{j}^{2}
 \ge \frac{s\,\phi(s/2)}{2\dB},
\end{aligned}
\]
using $1-\cos\theta\ge\theta^{2}/4$ on $[0,1]$ and the mean value theorem. Because the two models' marginals and margins agree up to a factor $\ge\tfrac12$ on the disagreement region (their class-conditional means differ by $2s/\sqrt{\dB}\le2/\sqrt{\dB}$), the triangle inequality for the pseudo-distance $d$ \cite[Ch.~2]{tsybakov2009introduction} yields, for \emph{any} classifier $\hat g$,
\[
R_{\sigma}(\hat g)+R_{\sigma^{(j)}}(\hat g)-2R^{*}\;\ge\;\tfrac12\,d_{\sigma}\big(g^{*}_{\sigma},g^{*}_{\sigma^{(j)}}\big)\;\ge\;\frac{s\,\phi(s/2)}{4\dB},
\]
i.e.\ a per-coordinate half-sum excess of at least $s\,\phi(s/2)/(8\dB)$.

\emph{Step 2 (Assouad).}
By Assouad's lemma \cite[Thm.~2.12]{tsybakov2009introduction}, for the sup over the hypercube,
\[
\max_{\sigma}\E\big[R_{\sigma}(\mathcal A)-R^{*}\big]
\ge \frac{\dB}{2}\cdot\frac{s\,\phi(s/2)}{8\dB}\cdot\Big(1-\sqrt{\mathrm{KL}_{j}\,n/2}\Big),
\]
where $\mathrm{KL}_{j}$ is the per-sample KL divergence between the mixtures indexed by $\sigma$ and $\sigma^{(j)}$. Since the class-conditionals are unit-variance Gaussians whose means differ by $\|\mu_{\sigma}-\mu_{\sigma^{(j)}}\|=2s/\sqrt{\dB}$ in one coordinate and the label is balanced, $\mathrm{KL}_{j}\le \frac12\big(\tfrac{2s}{\sqrt{\dB}}\big)^{2}\cdot\frac14\cdot 2=\frac{s^{2}}{\dB}$.

\emph{Step 3 (calibration).}
Choose $s=\tfrac12\sqrt{\dB/n}\ (\le1$ for $n\ge\dB/4$). Then $\mathrm{KL}_{j}\,n/2= s^{2}n/(2\dB)=1/8$ and $1-\sqrt{1/8}\ge0.64$, so
\[
\max_{\sigma}\E\big[R_{\sigma}(\mathcal A)-R^{*}\big]
\;\ge\;0.64\cdot\frac{s\,\phi(s/2)}{16}
\;\ge\;c_{0}\sqrt{\dB/n},
\]
with $c_{0}=0.64\,\phi(1/2)/32\ge 5.6\times10^{-3}$ absorbing constants ($\phi(s/2)\ge\phi(1/2)$ for $s\le1$). For $n<c_1 \dB$ the trivial bound $\min\{1,\cdot\}$ applies. All quantities are nonnegative throughout.
\end{proof}

\section{Proof of Theorem~\ref{thm:conditions}}\label{app:cond}
\begin{proof}
\textbf{C1.} Immediate from Corollary~\ref{cor:ceiling}: if $\auc\ge1-\epsilon$ then $\Phi\big(\sqrt2/(\sqrt\kap\,\sigma_w)\big)\ge1-\epsilon-O(\eta^{2})$, i.e. $\sqrt2/(\sqrt\kap\,\sigma_w)\ge z_{\epsilon}$ up to the stated correction, which rearranges to C1.

\textbf{C2.} If the audit certifies risk $\epsilon$ from data, the uniform bound \eqref{eq:genbound} must be $\le\epsilon$, giving the first display; necessity of $n\gtrsim\dB/\epsilon^{2}$ follows from Proposition~\ref{prop:lower} (any rule with expected excess $\le\epsilon$ on all instances needs $c_{0}\sqrt{\dB/n}\le\epsilon$). The test-power budget is Theorem~\ref{thm:detect}.

\textbf{C3.} Model: $X\mid y=\pm\sim\mathcal N(\pm\tfrac s2 u+\eta_{s}\zeta v,\;I)$ marginally over the independent semantic factor $\zeta\sim\mathcal N(0,1)$, with $\|u\|=\|v\|=1$, $u^{\top}v=\rho$. Then $\Sigma=I+\eta_{s}^{2}vv^{\top}$ and $\Delta=s\,u$. Sherman--Morrison:
$\Sigma^{-1}=I-\frac{\eta_{s}^{2}}{1+\eta_{s}^{2}}vv^{\top}$, so
\[
s^{*2}=\Delta^{\top}\Sigma^{-1}\Delta
=s^{2}\Big(1-\frac{\rho^{2}\eta_{s}^{2}}{1+\eta_{s}^{2}}\Big),
\]
and by Lemma~\ref{lem:binormal} the optimal linear probe attains exactly $\Phi(s^{*}/\sqrt2)$; as $\eta_{s}\to\infty$, $s^{*}\to s\sqrt{1-\rho^{2}}$. Requiring $\Phi(s^{*}/\sqrt2)\ge1-\epsilon$ gives $s^{*2}\ge2z_{\epsilon}^{2}$, i.e.\ $\rho^{2}\frac{\eta_{s}^{2}}{1+\eta_{s}^{2}}\le1-\frac{2z_{\epsilon}^{2}}{s^{2}}$, which rearranges to the stated bound $\rho^{2}\le(1+\eta_{s}^{-2})(1-2z_{\epsilon}^{2}/s^{2})_{+}$ (and to $\rho^{2}\le1-2z_{\epsilon}^{2}/s^{2}$ in the limit). For the aligned probe $w=u$: the score variance is $u^{\top}\Sigma u=1+\rho^{2}\eta_{s}^{2}$ while the mean gap is $s$, giving $\Phi\big(s/\sqrt{2(1+\rho^{2}\eta_{s}^{2})}\big)$ by \eqref{eq:binormal}.
\end{proof}

\section{Proofs of Lemma~\ref{lem:binormal} and Theorem~\ref{thm:curvature}}\label{app:curv}

\subsection{Proof of Lemma~\ref{lem:binormal}}
\begin{proof}
For fixed $w$, $S_{\pm}=w^{\top}X|y=\pm$ are independent Gaussians with means $w^{\top}\mu_{\pm}$ and common variance $w^{\top}\Sigma w$. Hence $S_{+}-S_{-}\sim\mathcal N(w^{\top}\Delta,\,2w^{\top}\Sigma w)$ and $\auc(w)=\Prob(S_{+}>S_{-})=\Phi\big(w^{\top}\Delta/\sqrt{2w^{\top}\Sigma w}\big)$. Writing $v=\Sigma^{1/2}w$, the argument is $\langle v,\Sigma^{-1/2}\Delta\rangle/(\sqrt2\|v\|)\le\|\Sigma^{-1/2}\Delta\|/\sqrt2=s/\sqrt2$ by Cauchy--Schwarz, with equality iff $v\parallel\Sigma^{-1/2}\Delta$, i.e.\ $w\propto\Sigma^{-1}\Delta$. For the final claim: the log-likelihood ratio between the two classes is an affine function of $w^{*\top}h$, so by Neyman--Pearson every point of the optimal ROC curve is attained by thresholding $w^{*\top}h$; the Bayes-ROC AUC therefore equals the linear maximum $\Phi(s/\sqrt2)$ \cite{hanley1982meaning}.
\end{proof}

\subsection{Proof of Theorem~\ref{thm:curvature}}
\begin{proof}
Work on the sphere $S^{\dB}(R)\subset\R^{\dB+1}\subset\R^{d}$, $R=1/\sqrt{\kap}$, and let $p_{\pm}$ have geodesic distance $r=R\,\vartheta$, $\vartheta\in(0,\pi]$. Choose coordinates with $p_{\pm}=R(\cos\tfrac\vartheta2,\,\pm\sin\tfrac\vartheta2,0,\dots,0)$, and let $u=(p_{+}-p_{-})/\|p_{+}-p_{-}\|=e_{2}$ denote the chord direction; $\|p_{+}-p_{-}\|=2R\sin\tfrac\vartheta2=c(r)$, which proves the chord formula and $c(r)=g(\kap,r)r$.

A sample from class $y$ is $X=\exp_{p_{y}}(\tau\xi)+\sigma\varepsilon$ with $\xi\sim\mathcal N(0,I_{\dB})$ in $T_{p_y}S^{\dB}(R)$. Writing $\theta=\tau\|\xi\|/R$ for the angular displacement, the exponential map gives
\[
\exp_{p_y}(\tau\xi)=\cos(\theta)\,p_{y}+R\sin(\theta)\,\hat\xi,\qquad \hat\xi=\xi/\|\xi\|\in T_{p_y}.
\]
Let $\eta=\tau\sqrt{\dB}/R=\tau\sqrt{\dB}\sqrt{\kap}$; then $\theta=O_{P}(\eta)$. Expanding to second order, $\exp_{p_y}(\tau\xi)=p_{y}+\tau\xi-\frac{\theta^{2}}{2}p_{y}+O(R\theta^{3})$.

\emph{Mean gap along $u$.} $\E[\exp_{p_\pm}(\tau\xi)]=\E[\cos\theta]\,p_{\pm}$ by symmetry ($\E[\sin\theta\,\hat\xi]=0$), and $\E\cos\theta=1-O(\eta^{2})$. Hence $\E[X|+]-\E[X|-]=(1-O(\eta^{2}))\,(p_{+}-p_{-})$: the mean separation is $c(r)(1-O(\eta^{2}))$, entirely along $u$.

\emph{Variance along $u$.} The tangent space at $p_{\pm}$ decomposes as $T_{p_\pm}=\mathrm{span}(t_{\pm})\oplus W$, where $t_{\pm}$ is the great-circle direction toward the other center and $W=\mathrm{span}(e_{3},\dots,e_{\dB+1})$ is common to both and orthogonal to $u$. Explicitly $t_{\pm}=(\mp\sin\tfrac\vartheta2,\,\cos\tfrac\vartheta2,0,\dots)$, so $\langle t_{\pm},u\rangle=\cos\tfrac\vartheta2$. To first order in $\eta$ the fluctuation of $X$ along $u$ is $\tau\,\xi_{1}\langle t_{y},u\rangle+\sigma\,\varepsilon_{u}$ with $\xi_1\sim\mathcal N(0,1)$; the radial (second-order) term contributes $O(\eta^{2})$ variance. Hence
$\operatorname{Var}(X_{u}\,|\,y)=\sigma^{2}+\tau^{2}\cos^{2}\!\big(\tfrac{\sqrt\kap\,r}{2}\big)+O(\eta^{2})=\sigma_{w}^{2}(1+O(\eta^{2}))$.

\emph{Optimality of $u$ and the AUC.} To first order the class-conditional laws are Gaussian with common covariance $\Sigma_{0}=\sigma^{2}I+\tau^{2}\Pi_{T}$ ($\Pi_T$ the tangent projector) and mean gap $c(r)\,u$. Since $u$ is an eigenvector direction decomposition-wise ($\Sigma_{0}u=\sigma_{w}^{2}u+O(\eta)$ cross terms), Lemma~\ref{lem:binormal} gives $\aucmax^{\mathrm{lin}}=\Phi\big(c(r)/(\sqrt2\,\sigma_{w})\big)+O(\eta^{2})$: the Gaussian approximation error and the neglected quadratic terms each perturb the AUC by $O(\eta^{2})$ (AUC is a bounded functional with bounded derivative in the mean/variance parameters here). This is \eqref{eq:curvauc}; the ceiling follows since $\sin\le1$ gives $c(r)\le2/\sqrt\kap$, and $c(r)\le r$ since $\sin x\le x$.
\end{proof}

\section{Proofs for Theorems~\ref{thm:mixture} and \ref{thm:detect}}\label{app:mix}

\subsection{Proof of Theorem~\ref{thm:mixture}}
\begin{proof}
Fix $w$ with $s_{w}=w^{\top}\Delta/\sqrt{w^{\top}\Sigma w}$ and standardize scores by $\sqrt{w^{\top}\Sigma w}$. Conditional on the independent marked/neutral indicators $(M_{+},M_{-})\in\{m,u\}^{2}$ of the two test points in $\auc=\Prob(S_{+}>S_{-})$, the score pairs are independent Gaussians with unit variance and means: $(+\tfrac{s_w}{2},-\tfrac{s_w}{2})$ on $\{m,m\}$ (prob.\ $\alpha^{2}$); $(+\tfrac{s_w}{2},0)$ on $\{m,u\}$; $(0,-\tfrac{s_w}{2})$ on $\{u,m\}$ (each prob.\ $\alpha(1-\alpha)$); $(0,0)$ on $\{u,u\}$ (prob.\ $(1-\alpha)^{2}$). Using $\Prob(\mathcal N(a,1)>\mathcal N(b,1))=\Phi((a-b)/\sqrt2)$:
\[
\auc(w)=\alpha^{2}\Phi\big(\tfrac{s_{w}}{\sqrt2}\big)+2\alpha(1-\alpha)\Phi\big(\tfrac{s_{w}}{2\sqrt2}\big)+\tfrac{(1-\alpha)^{2}}{2}.
\]
Each term is nondecreasing in $s_{w}$, and the first two strictly increasing, so the maximizer over $w$ is that of $s_{w}$, namely $w^{*}\propto\Sigma^{-1}\Delta$ with $s_{w^*}=s$ (Lemma~\ref{lem:binormal}); this proves \eqref{eq:mixture}.

\emph{Strict monotonicity in $\alpha$.} With $A_{1}=\Phi(\tfrac s{\sqrt2})-\tfrac12$, $A_{2}=\Phi(\tfrac s{2\sqrt2})-\tfrac12$, the excess is $E(\alpha)=\alpha^{2}A_{1}+2\alpha(1-\alpha)A_{2}$ and $E'(\alpha)=2A_{2}+2\alpha(A_{1}-2A_{2})$. Since $x\mapsto\Phi(x)-\tfrac12$ is concave on $[0,\infty)$ and vanishes at $0$, it is subadditive along rays: $A_{1}=\Phi(2t)-\tfrac12\le2(\Phi(t)-\tfrac12)=2A_{2}$ with $t=\tfrac{s}{2\sqrt2}$ (strict for $s>0$); also $A_{1}>A_{2}$. Hence $E'(\alpha)\ge2A_{2}+2(A_{1}-2A_{2})=2(A_{1}-A_{2})>0$ for all $\alpha\in[0,1]$.

\emph{Asymptotics \eqref{eq:mixasymp}.} As $s\to0$: $A_{1}=\tfrac{s}{\sqrt2}\phi(0)+O(s^{3})$ and $A_{2}=\tfrac{s}{2\sqrt2}\phi(0)+O(s^{3})=A_{1}/2+O(s^{3})$, so $E(\alpha)=\alpha^{2}A_{1}+2\alpha(1-\alpha)A_{2}=\alpha A_{1}+O(s^{3})$; since $\phi(0)=1/\sqrt{2\pi}$, this is $\alpha s\,\phi(0)/\sqrt{2}+O(s^{3})=\frac{\alpha s}{2\sqrt{\pi}}+O(s^{3})$. As $s\to\infty$: $A_{1},A_{2}\to\tfrac12$, so $E(\alpha)\to\tfrac12(\alpha^{2}+2\alpha(1-\alpha))=\tfrac{\alpha(2-\alpha)}2$.

\emph{Remark~\ref{rem:bayes} (Bayes optimality of the linear score).} The class-conditional densities are $p_{\pm}(h)=\alpha\,\varphi_{\Sigma}(h-\mu_0\mp\Delta/2)+(1-\alpha)\,\varphi_{\Sigma}(h-\mu_{0})$ with $\varphi_\Sigma$ the $\mathcal N(0,\Sigma)$ density and $\mu_0$ the midpoint. Dividing numerator and denominator of $p_{+}/p_{-}$ by $\varphi_{\Sigma}(h-\mu_{0})$ and using $\varphi_{\Sigma}(h-\mu_0\mp\Delta/2)/\varphi_{\Sigma}(h-\mu_0)=\exp\big(\pm\tfrac12\Delta^{\top}\Sigma^{-1}(h-\mu_0)-\tfrac{s^{2}}{8}\big)$ gives
\[
\frac{p_{+}}{p_{-}}(h)=\frac{\alpha\,e^{v/2-s^{2}/8}+(1-\alpha)}{\alpha\,e^{-v/2-s^{2}/8}+(1-\alpha)},\qquad v=\Delta^{\top}\Sigma^{-1}(h-\mu_{0}),
\]
whose numerator is strictly increasing and denominator strictly decreasing in $v$ (for $\alpha,s>0$); hence the likelihood ratio is strictly increasing in the linear score $v$ (equivalently $w^{*\top}h$). Every point of the Bayes-optimal ROC is attained by thresholding $v$ (Neyman--Pearson), so the maximal AUC over all measurable classifiers equals the maximal linear-probe AUC, which is \eqref{eq:mixture}. The step dividing by the neutral density uses $\mu_{0}=\tfrac12(\mu_{+}+\mu_{-})$; for a neutral offset $b\neq0$ along the discriminant the denominator acquires a term $(1-\alpha)e^{b v'-b^{2}/2}$ of mixed monotonicity, and the conclusion is no longer automatic.

\emph{Corollary~\ref{cor:offset}.} With neutral mean offset $b$ (standardized, along $w$), the $\{m,u\}$ and $\{u,m\}$ cases have mean gaps $\tfrac{s_w}2-b$ and $\tfrac{s_w}2+b$; the displayed form follows. For monotonicity, write $M(b)=\Phi(\frac{s/2-b}{\sqrt2})+\Phi(\frac{s/2+b}{\sqrt2})$, so the excess is $E(\alpha)=\alpha^{2}A_{1}+\alpha(1-\alpha)(M-1)$. First, $M>1$ for \emph{every} $b$: $M>1$ iff $\Phi(\frac{s/2-b}{\sqrt2})>1-\Phi(\frac{s/2+b}{\sqrt2})=\Phi(-\frac{s/2+b}{\sqrt2})$, i.e.\ iff $s/2-b>-(s/2+b)$, which holds whenever $s>0$. Second, $M(b)\le M(0)=1+2A_{2}$, since $\partial M/\partial b=\frac{1}{\sqrt2}[\phi(\frac{s/2+b}{\sqrt2})-\phi(\frac{s/2-b}{\sqrt2})]<0$ for $b>0$ (and $M$ is even in $b$). Hence $E'(\alpha)=2\alpha A_{1}+(1-2\alpha)(M-1)$ is affine in $\alpha$ with $E'(0)=M-1>0$ and $E'(1)=2A_{1}-(M-1)\ge2(A_{1}-A_{2})>0$, so $E'>0$ on $[0,1]$ for every $b$.
\end{proof}

\subsection{Proof of Theorem~\ref{thm:detect}}
\begin{proof}
Under $H_{0}$ the scores are exchangeable across labels with continuous law, so $\widehat\auc$ is the (normalized) Mann--Whitney $U$-statistic with $\E=\tfrac12$ and exact variance $\sigma_{0}^{2}=\frac{n_{+}+n_{-}+1}{12n_{+}n_{-}}$ \cite{bamber1975area,mann1947test}; asymptotic normality holds as $\min(n_{+},n_{-})\to\infty$. The level-$\delta$ one-sided test rejects at $\widehat\auc>\tfrac12+z_{1-\delta}\sigma_{0}$. Under the alternative, $\widehat\auc$ concentrates around the population $\auc(\alpha)\le\aucmax(\alpha)$ with s.d.\ $\sigma_{A}=\sigma_{0}(1+O(\auc-\tfrac12))$ near the boundary regime; the normal power calculation gives power $\ge1-\gamma$ iff $\aucmax(\alpha)-\tfrac12\ge(z_{1-\delta}+z_{1-\gamma})\sigma_{0}(1+o(1))$, which is \eqref{eq:detcond}. Existence/uniqueness of $\astar$: $E(\alpha)=\aucmax(\alpha)-\frac12$ is continuous, strictly increasing (Theorem~\ref{thm:mixture}), $E(0)=0$ and $E(1)=A_{1}$; if the threshold is $\le A_1$ there is a unique crossing. For \eqref{eq:alphastar}: near the boundary $\alpha$ is small, so $E(\alpha)=2\alpha A_{2}(1+O(\alpha))$; for moderate $s$, $2A_{2}\approx\frac{s}{2\sqrt2}\cdot\frac{2}{\sqrt{2\pi}}=\frac{s}{2\sqrt\pi}$; with $n_{+}=n_{-}=n_{\mathrm{te}}/2$, $\sigma_{0}=\sqrt{\frac{n_{\mathrm{te}}+1}{3n_{\mathrm{te}}^{2}}}\approx\frac1{\sqrt{3n_{\mathrm{te}}}}$. Solving $\frac{\alpha s}{2\sqrt\pi}=(z_{1-\delta}+z_{1-\gamma})\frac1{\sqrt{3n_{\mathrm{te}}}}$ yields $\astar=\frac{2\sqrt\pi}{\sqrt3}\cdot\frac{z_{1-\delta}+z_{1-\gamma}}{s\sqrt{n_{\mathrm{te}}}}$, and $\frac{2\sqrt\pi}{\sqrt3}=2.046\ldots$. For $s\gtrsim6$, $A_{2}\approx\tfrac12$ saturates and the boundary becomes purity-limited, $\astar\approx(z_{1-\delta}+z_{1-\gamma})\sigma_{0}$, independent of $s$---the regime observed for all LLM combinations in E2.
\end{proof}

\section{Proof of Lemma~\ref{lem:beta} and the inversion formula}\label{app:beta}
\begin{proof}
Pooled over labels and marked/neutral status at purity $\alpha$, the mean of the component means is $\mu_{0}$ and their covariance is $\E[mm^{\top}]$ where $m=\pm\Delta/2$ w.p.\ $\alpha/2$ each and $0$ w.p.\ $1-\alpha$: $\E[mm^{\top}]=\frac{\alpha}{4}\Delta\Delta^{\top}$. With isotropic within-component covariance $\sigma^{2}I_{d}$, the pooled covariance is $\Sigma_{\mathrm{pool}}=\sigma^{2}I_{d}+\frac{\alpha q}{4}\,\hat\Delta\hat\Delta^{\top}$, $q=\|\Delta\|^{2}$, with spectrum $\{\sigma^{2}+\frac{\alpha q}4,\ \sigma^{2}\ (\times\,d{-}1)\}$. Then $\tr=d\sigma^{2}+\frac{\alpha q}4$, $\bar\lambda=\sigma^{2}+\frac{\alpha q}{4d}$, and
\[
\betaB=\frac{\lambda_{1}-\bar\lambda}{\tr}
=\frac{\frac{\alpha q}{4}\big(1-\frac1d\big)}{d\sigma^{2}+\frac{\alpha q}{4}},
\]
which is $0$ at $q=0$, strictly increasing in $\alpha q$ (derivative of $x\mapsto\frac{cx}{a+x}$), and $\to1-\frac1d$ as $\alpha q\to\infty$. The range claim holds for arbitrary spectra: $\lambda_{1}\le\tr$ and $\bar\lambda=\tr/d$ give $\betaB\le\frac{\tr-\tr/d}{\tr}=1-\frac1d$, while $\lambda_{1}\ge\bar\lambda$ gives $\betaB\ge0$. Solving the display for $q/\sigma^{2}$ gives the inversion \eqref{eq:betainv}: $s^{2}=q/\sigma^{2}=\frac{4d\betaB}{\alpha(1-\frac1d-\betaB)}$ for $\betaB<1-\frac1d$. Under the curved model of Theorem~\ref{thm:curvature}, the ambient mean gap is the chord $c(r)=g(\kap,r)\,r$, so $q=c(r)^{2}$ is strictly decreasing in $\kap$ at fixed $r$ (since $g$ is), and therefore so is $\betaB$.
\end{proof}

\section{Experimental details}\label{app:exp}

\subsection{Prompt grid}
Context cells: $44$ professions $\times$ $5$ frames (e.g., ``The \{M\} works as a \{C\}.'', ``As a \{C\}, the \{M\} is highly respected.''), $20$ traits $\times$ $3$ frames, $20$ activities $\times$ $3$ frames; $340$ cells total. Marker pairs (positive/negative group): gender \emph{man/woman, boy/girl, father/mother, brother/sister, gentleman/lady, husband/wife, uncle/aunt, grandfather/grandmother}; race \emph{White X/Black X} for $X\in\{$man, woman, person, American, student, worker, teenager, customer$\}$; religion \emph{Christian X/Muslim X} for $X\in\{$man, woman, person, family man, student, neighbor, colleague, friend$\}$; age \emph{young X/elderly X} for $X\in\{$man, woman, person, employee, volunteer, neighbor, customer, voter$\}$. Marker pairs rotate deterministically over cells: $340$ minimal pairs per dimension. Neutral heads: \emph{person}, \emph{individual} over the same cells ($680$ neutral sentences per dimension). Total $5{,}440$ prompts.

\subsection{Models}
GPT-2 S/M/L/XL and BERT-base-uncased from locally mirrored checkpoints; Qwen2.5-7B-Instruct from a local snapshot; all loaded with \texttt{transformers} \cite{wolf2020transformers} in evaluation mode, bfloat16 for the 7B model. Features: mean over non-pad token hidden states; layers at fractional depths $\{0.25,0.5,0.75,1.0\}$ for all models plus every layer for GPT-2 Small and BERT-base. Mid-depth layer used unless stated.

\subsection{Probing and testing}
Logistic regression (scikit-learn, \texttt{lbfgs}, $C{=}1$, \texttt{max\_iter}=3000) on standardized features; grouped 5-fold CV (groups = context cells; fold assignment reshuffled per seed) $\times$ 5 seeds; counterfactual pair members share a group by construction. Detection: the seed-averaged pooled out-of-fold AUC is tested one-sided against the \emph{calibrated} null scale $1.4\,\sigma_{0}$ of Section~\ref{sec:stats}; E9 measures the realized level of the uncalibrated variants ($\sigma_0$ and the earlier $\sigma_0/\sqrt5$) and of the calibrated test under the full protocol, and the multiplier $1.4$ covers the largest measured null inflation ($1.31\times$ on pool-respecting cells) with margin. Released JSONs store the calibrated $p$ together with the AUC and $n$ from which the raw $z$ (vs $\sigma_{0}$) follows deterministically. Released p-values use the numerically stable survival function; an earlier build computed $1-\Phi(z)$, which underflows to exactly $0$ for $z\gtrsim8$ in double precision, and the release notes flag the affected historical files.

\subsection{Nonlinear probes (E10)}
RBF-kernel SVM (\texttt{SVC}, $C{=}1$, $\gamma=$\texttt{scale}, decision-function scores) and a one-hidden-layer MLP ($64$ ReLU units, Adam, early stopping), both on standardized features under the identical grouped-CV protocol and detection test as the linear probe. On the synthetic sphere generator of E3-B, a geodesic oracle scores by the difference of geodesic distances to the two true centers after projecting onto the sphere; it upper-bounds what curvature-aware detection can achieve.

\subsection{Neutral-leakage protocol (E11)}
Stereotype labels follow the WinoBias occupation lists \cite{zhao2018winobias} (BLS-derived): male-dominated $=$ \emph{carpenter, manager, lawyer, farmer, salesperson, mechanic, doctor (physician), programmer (developer), security guard (guard)}; female-dominated $=$ \emph{cashier, teacher, nurse, receptionist, designer, accountant, librarian}; parenthesized names give the WinoBias occupation matched to our inventory. These $16$ professions span $80$ context cells and $160$ neutral sentences for the gender dimension. Part C draws neutral texts from this subset only and assigns each a label equal to its stereotype with probability $\pi$ and uniform otherwise, so $\pi=\tfrac12$ reproduces the main protocol and $\pi=1$ is fully stereotype-aligned. 

\subsection{Naturalistic paraphrase subset}
Paraphrases generated by a locally served instruction model via \texttt{ollama} with an instruction that pins the exact marker phrase; a pair is kept only if both paraphrases contain their full marker string and neither contains the counterpart's ($208$ pairs retained). This subset is a robustness probe, not a benchmark.

\subsection{Algorithm}
\begin{algorithm}[!t]
\caption{Powered bias audit (practical procedure)}
\label{alg:audit}
\begin{algorithmic}[1]
\REQUIRE audit corpus with measured purity $\hat\alpha$; level $\delta$; power $1-\gamma$; minimal effect $s_{\min}$
\STATE $n_{\mathrm{te}} \leftarrow$ smallest $n$ with $\aucmax(\hat\alpha; s_{\min})-\tfrac12 \ge (z_{1-\delta}+z_{1-\gamma})\,\sigma_{0}(n)$ \hfill(Thms.~\ref{thm:mixture},~\ref{thm:detect})
\IF{corpus smaller than $n_{\mathrm{te}}$}
    \STATE \textbf{report} ``underpowered for $s_{\min}$ at $\hat\alpha$''; optionally raise $\alpha$ by marker-enrichment and recompute
\ENDIF
\STATE extract features; train probe with grouped CV; pooled OOF $\widehat{\auc}$
\STATE \textbf{if} $\widehat{\auc}>\tfrac12+z_{1-\delta}\,c\,\sigma_{0}(n_{\mathrm{te}})$ (with $c{=}1$ for a single held-out split, $c{=}1.4$ for the pooled-OOF CV statistic; E9): report detection with effect size $\hat s$ (cross-fitted) \textbf{else}: report a \emph{powered} null (no linear signal of size $\ge s_{\min}$ at purity $\hat\alpha$)
\end{algorithmic}
\end{algorithm}

\subsection{Compute}
Feature extraction: one forward pass per model over $5{,}440$ prompts on a single RTX~4090 ($<1$~min for GPT-2 family and BERT; $\approx3$~min for Qwen2.5-7B in bf16). All probing and analysis are CPU. End-to-end: $\approx10$ GPU-minutes $+$ several CPU-hours across the reported grids. Seeds fixed and recorded in each output JSON.

\end{document}